\DeclareMathAlphabet{\mathsf}{OT1}{cmss}{m}{n}
\SetMathAlphabet{\mathsf}{bold}{OT1}{cmss}{bx}{n}
\newcommand{\specificthanks}[1]{\@fnsymbol{#1}}
\DeclareMathSymbol{\mlq}{\mathrel}{operators}{``}
\DeclareMathSymbol{\mrq}{\mathrel}{operators}{`'}
\DeclareMathSymbol{\mlqq}{\mathrel}{operators}{"5C}
\DeclareMathSymbol{\mrqq}{\mathrel}{operators}{`"}
\newcommand{\ReLU}{\mathrm{ReLU}}
\newcommand{\Softmax}{\mathrm{Softmax}}
\newcommand{\tem}{H}
\def\shownotes{0}  
\newcommand{\authnote}[2]{$\ll$\textsf{\small #1 notes: #2}$\gg$}
\newcommand{\authnote}[2]{}
\newcommand{\cms}[1]{{\color{red}\authnote{Minshuo}{#1}}}
\newcommand{\tuo}[1]{{\color{blue}\authnote{Tuo}{#1}}}
\numberwithin{equation}{section}
\begin{document}

\title{Doubly Robust Off-Policy Learning on Low-Dimensional Manifolds by Deep Neural Networks \thanks{Work in progress.}}
\author{Minshuo Chen $\quad$ Hao Liu $\quad$ Wenjing Liao $\quad$ Tuo Zhao \thanks{Minshuo Chen and Hao Liu contribute equally; Minshuo Chen and Tuo Zhao are affiliated with the ISYE department at Georgia Tech; Hao Liu and Wenjing Liao are affiliated with the Math department at Georgia Tech; Email: \text{$\{$mchen393, wliao60, tzhao80$\}$@gatech.edu, hao.liu@math.gatech.edu}.}}
\date{}

\maketitle
\begin{abstract}


Causal inference explores the causation between actions and the consequent rewards on a covariate set. Recently deep learning has achieved a remarkable performance in causal inference, but existing statistical theories cannot well explain such an empirical success, especially when the covariates are high-dimensional. Most theoretical results in causal inference are asymptotic, suffer from the curse of dimensionality, and only work for the finite-action scenario.
To bridge such a gap between theory and practice, this paper studies doubly robust off-policy learning by deep neural networks. When the covariates lie on a low-dimensional manifold, we prove nonasymptotic regret bounds, which converge at a fast rate depending on the intrinsic dimension of the manifold. Our results cover both the finite- and continuous-action scenarios. Our theory shows that deep neural networks are adaptive to the low-dimensional geometric structures of the covariates, and partially explains the success of deep learning for causal inference.

\end{abstract}

\section{Introduction}

Causal inference studies the causal connection between actions and rewards, which has wide applications in healthcare \citep{kim2011battle,lunceford2004stratification}, digital advertising \citep{farias2019learning}, product recommendation \citep{sharma2015estimating}, and policy formulation \citep{heckman2007econometric}. For example in healthcare, each patient can be characterized by a set of covariates (also called features), and the actions are a set of treatments. Each patient has the corresponding reactions, or rewards, to different treatments. Causal inference enables one to personalize the treatment to each patient to maximize the total rewards. Such a personalized decision-making rule is referred to as a policy, which is a map from the covariate set to the action set. In off-policy learning, a batch of observational data is given, which typically consists of a covariate (also called a feature), the action taken (e.g. medical treatments and recommendations), and the observed reward. In this paper, we are interested in learning an optimal policy that targets personalized treatments or services to different individuals based on the logged data. This is also known as the optimal treatment assignment in literature \citep{rubin1974estimating,heckman1977sample}.


Conventional causal inference methods often rely on parametric models \citep{lunceford2004stratification,cao2009improving,robins1994estimation,kitagawa2018should}, which can introduce a large bias when the real model is not in the assumed parametric form. Many nonparametric methods are proposed \citep{hill2011bayesian,kitagawa2018should,zhao2015doubly,kennedy2017non, richardson2014nonparametric,chan2016globally,frolich2017finite,benkeser2020nonparametric,kennedy2020optimal,lee2020doubly, crump2008nonparametric,benkeser2017doubly}, while the statistical theories often suffer from the curse of dimensionality. Recently 
neural networks became a popular modeling tool for causal inference. Many results have shown that neural networks outperform conventional nonparametric approaches, especially when the learning task involves high-dimensional complex data. 
For example, \citet{lopez2017discovering} proposed to discover causal and anticausal features in images from ImageNet using a $20$-layer residual network. \citet{pham2017deep} used recurrent neural networks to study the causality between group forming loans and the funding time on an online non-profit financial platform.
Other examples can be found in diverse areas, including climate analysis, medical diagnosis, cognitive science, and online recommendations \citep{chalupka2014visual, van2019eliminating,johansson2016learning,hartford2017deep,zhang2019deep,lim2018forecasting}. \tuo{Add more example on LSTMs}


Despite the great progress of causal inference, there is still a huge gap between theory and practice. In casual inference, many existing theories on nonparametric or neural networks approaches 
are asymptotic, and suffer from the curse of dimensionality. Specifically, to achieve an $\epsilon$ accuracy, the sample complexity needs to grow in the order of $\epsilon^{-D}$, where $D$ is the covariate dimension. Such theories can not explain the empirical success when $D$ is large.
For example, in \citet{lopez2017discovering}, the RGB images in ImageNet are of resolution $3\times 224 \times 224$. To obtain a $0.1$ error, the sample complexity needs to scale like $10^{-3\times 224 \times 224}$, which well exceeds the training size of $99,309$.
Besides, the curse of dimensionality is inevitable unless additional data structures are considered.
\citet{gao2020minimax} proved that, for binary policy learning problems, the sample complexity obtained by the optimal algorithm still grows exponentially in the covariate dimension $D$ in the order of $\epsilon^{-D}$. \cms{explicit rate}

To bridge this gap, we take the low-dimensional geometric structures of the covariates into consideration. This is motivated by the fact that real-world data often exhibit low-dimensional structures, due to rich local regularities, global symmetries, or repetitive patterns \citep{tenenbaum2000global,roweis2000nonlinear,peyre2009manifold}. For example, many images describe the same object with different transformations, like translation, rotation, projection and skeletonization. These transformations are often represented by a small number of parameters, including the translation position, the rotation, and the projection angle, etc. 
Similar low-dimensional structures exist in medical data \citep{choi2016learning,mahoney2009cur} and financial data \citep{baptista2000low}. 
\cms{give an example} 
To incorporate such low-dimensional structures of data, we assume that the input covariates are concentrated on a $d$-dimensional Riemannian manifold embedded in $\RR^D$ with $d \ll D$.

In many off-policy learning methods, policy evaluation plays an important role by evaluating the expected reward of a given policy. Most of the existing works dedicate to policy evaluation with finite actions. Only few works addressed the continuous action scenario \citep{kallus2018policy,demirer2019semi,kallus2019kernel}. Among the policy evaluation methods with finite actions, the doubly robust method \citep{cassel1976some,robins1994estimation,dudik2011doubly} has the advantage of being consistent, if either the reward function or the propensity score (the probability of choosing a certain action given the covariate) is correctly specified. 
The statistical theory for policy evaluation has been intensively studied in the past with many asymptotic results \citep{swaminathan2015batch,zhou2018offline,zhao2012estimating,kallus2018confounding,kitagawa2018should}.



This paper establishes statistical guarantees of policy learning in causal inference using neural networks. We consider the doubly robust method (see Section \ref{sec:setup} for details), and use deep ReLU neural networks to parameterize the policy class, the propensity score, and the conditional expected reward. We prove nonasymptotic regret bounds which converge at a fast rate depending on the intrinsic dimension $d$, instead of the covariate dimension $D$.
Furthermore, our theory applies to both the finite-action and continuous-action scenarios.

This paper has three main contributions:
{\bf 1)} By taking the low-dimensional geometric structures of the covariates into consideration, we prove a fast convergence rate of the learned policy, depending on the intrinsic dimension of the covariates. {\bf 2)} Our statistical theory is nonasymptotic for policy learning, while most existing works established asymptotic theories for policy evaluation using the doubly robust method. While policy evaluation gives rise to the performance of any specific policy, policy learning furhter returns an optimal policy. 
{\bf 3)} To our best knowledge, we prove the first regret bound of policy learning in a continuous action space.

\paragraph{Related work}
In off-policy learning, one line of research learns the optimal policy by evaluating the expected reward of candidate policies and then finding the policy with the largest expected reward. The procedure of evaluating a target policy from the given data is called off-policy evaluation, which has been intensively studied in literature. The simplest way to evaluate a policy is the direct method which estimates the empirical reward of the target policy from collected data \citep{beygelzimer2009offset}. The direct method is unbiased if one specifies the reward model correctly. However, model specification is a difficult task in practice. Another method is the inverse propensity weighting \citep{horvitz1952generalization,robins1994estimation}, which uses the importance weighting to correct the mismatch between the propensity scores of the target policy and the data collection policy. This method is unbiased if the data collection policy can be exactly estimated, yet it has a large variance especially when some actions are rarely observed. A more robust method is the doubly robust method \citep{cassel1976some,cao2009improving,dudik2011doubly}, which integrates the direct method and the inverse propensity weighting. This method is unbiased if the reward model is correctly specified or the data collection policy is known.

The aforementioned methods have been used in \citet{kitagawa2018should,zhao2015doubly,athey2017efficient,zhou2018offline} for off-policy learning. \citet{kitagawa2018should} used the inverse propensity weighting, and \citet{athey2017efficient} and \citet{zhao2015doubly} used the doubly robust method to learn the optimal policy with binary actions.
In \citet{zhou2018offline}, an algorithm based on decision trees was proposed to learn the optimal policy with multiple actions using the doubly robust method. \citet{kallus2018balanced} proposed a balanced method which minimizes the worst-case conditional mean squared error to evaluate and learn the optimal policy with multiple actions.

Another line of research learns the optimal policy without evaluating policies. In \citet{zhang2012estimating,zhao2012estimating}, the authors transformed the policy learning task with binary actions into a classification problem. Other works on off-policy learning include \citet{kallus2020more,ward2019anesthesiologist}, and \citet{bennett2020efficient}.

Most of the aforementioned works provide asymptotic regret bounds with finite actions, which are valid when the number of samples goes to infinity. A nonasymptotic bound was derived in \citet{kitagawa2018should}, but this work requires that the propensity score is known and the algorithm only works for policy learning with binary actions. Meanwhile, off-policy learning with continuous actions has not been addressed until recently \citep{kallus2018policy,demirer2019semi,kallus2019kernel}. \citet{demirer2019semi} developed a semi-parametric off-policy learning algorithm, which requires the reward function in a specific class. \citet{kallus2018policy} applied a kernel method to extend the inverse propensity weighting and the doubly robust method to the continuous-action setting. These works on continuous actions did not provide a nonasymptotic regret bound with an explicit dependency on the number of samples.

The rest of the paper is organized as follows: Section \ref{sec:preliminary} introduces manifold and neural networks; Section \ref{sec:setup} presents the doubly robust estimation framework; Section \ref{sec:theory} states our regret bounds of the learned policy; Section \ref{sec:proofsketch} gives a proof sketch of our theory; Section \ref{sec:discussion} discusses several related topics.


{\bf Notations}: We use bold lowercase letters to denote vectors, i.e., $\bx\in \RR^D$. We use $x_i$ to denote the $i$-th entry of $\bx$, and define $|\bx| = \sum_{i=1}^D |x_i|$ and $\norm{\bx}_2^2 = \sum_{i=1}^D x_i^2$. For a function $f:\RR^D\rightarrow \RR$ and a multi-index $\bs=[s_1,\dots,s_D]^\top$, $\partial^{\bs} f$ denotes $\partial^{|\bs|} f /\partial x_1^{s_1}\cdots \partial x_D^{s_D}$. Let $\Omega$ be the support of a probability distribution $\PP$. The $L_2$ norm of $f$ with respect to $\PP$ is denoted as $\|f\|^2_{L_2}=\int_{\Omega} f^2(\bx) d \PP(\bx)$. We use $\circ$ to denote function composition. For a set $\cA$, $|\cA|$ denotes its cardinality. For a scalar $a>0$, $\lfloor a \rfloor$ denotes the largest integer which is no larger than $a$, $\lceil a \rceil$ denotes the smallest integer which is no smaller than $a$. For $a,b\in\RR$, we denote $a\vee b=\max\{a,b\}$ and $a\wedge b=\min\{a,b\}$. We refer to a one-hot vector as a canonical basis, i.e. $\bv_j = [0, \dots, 0, 1, 0, \dots, 0]^\top \in \RR^d$ with $j$-th element being $1$. We use $\coloneqq$ to define important quantities.

\section{Preliminaries on Manifold and Neural Networks}\label{sec:preliminary}

We briefly review smooth manifolds (see \citet{lee2003introduction} and \citet{tu2010introduction} for more details), H\"{o}lder space on a smooth manifold, and define the neural network class considered throughout this paper.


\subsection{Low-Dimensional Manifold}
Let $\cM$ be a $d$-dimensional Riemannian manifold isometrically embedded in $\RR^D$. A chart for $\cM$ is a pair $(U,\phi)$ such that $U\subset \cM$ is open and $\phi: U \rightarrow \RR^d$ is a homeomorphism, i.e., $\phi$ is a bijection, its inverse and itself are continuous. Two charts $(U,\phi)$ and $(V,\psi)$ are called $C^k$ compatible if and only if the transition functions
$$
\phi\circ\psi^{-1}: \psi(U\cap V)\rightarrow \phi(U\cap V) \quad \mbox{ and } \quad \psi\circ\phi^{-1}: \phi(U\cap V)\rightarrow \psi(U\cap V)
$$
are both $C^k$ functions. A $C^k$ atlas of $\cM$ is a collection of $C^k$ compatible charts $\{(U_i,\phi_i)\}$ such that $\bigcup_i U_i=\cM$. An atlas of $\cM$ contains an open cover of $\cM$ and the mappings from each open cover to $\RR^d$.

\begin{definition}[Smooth manifold]
  A manifold $\cM$ is smooth if it has a $C^{\infty}$ atlas.
\end{definition}
Through the concept of atlas, we are able to define $C^s$ functions and H\"{o}lder space on a smooth manifold.
\begin{definition}[$C^s$ functions on $\cM$]
Let $\cM$ be a smooth manifold and fix a $C^{\infty}$ atlas of it. For a function $f:\cM\rightarrow \RR$, we say $f$ is a $C^s$ function on $\cM$  if for any $(U,\phi)$ in the atlas, $f\circ\phi^{-1}$ is a $C^s$ function in $\RR^d$.
\end{definition}
\begin{definition}[H\"{o}lder space on $\cM$] Let $\cM$ be a compact manifold.
A function $f: \cM \mapsto \RR$ belongs to the H\"{o}lder space $\cH^{\alpha}(\cM)$ with a H\"{o}lder index $\alpha>0$, if for any chart $(U, \phi)$, 
we have
\begin{align*}
\|f\|_{\cH^{\alpha}(U)} &= \max_{|\bs|<\lceil \alpha-1\rceil}\sup\limits_{ \bx \in \phi(U)} |\partial^{\bs}f \circ \phi^{-1}(\bx)| + \max\limits_{|\bs|=\lceil \alpha-1\rceil}
\sup\limits_{ \bx\neq \by \in \phi(U)}\frac{|\partial^{\bs}f\circ \phi^{-1}(\bx)- \partial^{\bs}f \circ \phi^{-1}(\by)|}{\|\bx-\by\|_2^{\alpha - \lceil \alpha-1\rceil}} < \infty.
\end{align*}
For a fixed atlas $\{(U_i, \phi_i)\}$, the H\"{o}lder norm of $f$ is defined as $\norm{f}_{\cH^\alpha(\cM)} = \sup_i \|f\|_{\cH^{\alpha}(U_i)}$. We occasionally omit $\cM$ in the H\"{o}lder norm when it is clear from the context.
\end{definition}

We introduce the reach \citep{federer1959curvature,niyogi2008finding} of a manifold to characterize the local curvature of $\cM$.
\begin{definition}[Reach]
The medial axis of $\cM$ is defined as $$\cT(\cM) = \{\bx \in \RR^D ~|~ \exists~ \bx_1\neq \bx_2\in\cM,\textrm{~such that~}\|\bx-\bx_1\|_2=\|\bx-\bx_2\|_2 = \inf_{\by \in \cM} \norm{\bx - \by}_2\}.$$ The reach $\tau$ of $\cM$ is the minimum distance between $\cM$ and $\cT(\cM)$, i.e.
$$\tau = \inf_{\bx\in\cT(\cM),\by\in\cM} \|\bx-\by\|_2.$$
\end{definition}
Roughly speaking, reach measures how fast a manifold ``bends'' --- a manifold with a large reach ``bends'' relatively slowly.

\subsection{Neural Network}
 We focus on feedforward neural networks with the ReLU activation function: $\ReLU(x)=\max\{0,x\}$. When the argument is a vector or matrix, ReLU is applied entrywise. Given an input $\bx$, an $L$-layer network computes an output as
\begin{align}
f(\bx)=W_L\cdot\ReLU\left( W_{L-1}\cdots \ReLU(W_1\bx+\bb_1)+ \cdots +\bb_{L-1}\right)+\bb_L,\label{eq.ReLU}
\end{align}
where the $W_i$'s are weight matrices and $\bb_i$'s are intercepts. We define a class of neural networks as
\begin{align*}
\cF(L,p,K,\kappa,R) = &\{f~|~ f \mbox{ has the form of (\ref{eq.ReLU}) with } L \mbox{ layers and width bounded by } p, \|f\|_{\infty}\leq R,\\
&\qquad \sum_{i=1}^L \|W_i\|_0+\|\bb_i\|_0\leq K,\|W_i\|_{\infty, \infty}\leq \kappa, \|\bb_i\|_{\infty}\leq \kappa \mbox{ for } i=1,...,L\},
\end{align*}
where $\norm{H}_{\infty, \infty} = \max_{i, j} |H_{ij}|$ for a matrix $H$ and $\|\cdot\|_0$ denotes the number of non-zero elements of its argument.


\section{Off-Policy Learning with Low-Dimensional Covariates}\label{sec:setup}

We introduce a two-stage policy learning scheme using neural networks. Suppose we receive $n$ i.i.d. triples $\{(\bx_i, \ba_i, y_i)\}_{i=1}^{n}$, where $\bx_i \in \cM$ denotes a covariate independently sampled from an unknown distribution on $\cM$, $\ba_i \in \cA$ denotes the action taken, and $y_i \in \RR$ is the observed reward. To incorporate the low-dimensional geometric structures of the covariates, we assume $\cM$ is a $d$-dimensional Riemannian manifold isometrically embedded in $\RR^D$. The action space $\cA$ can be either finite or continuous. For each covariate and action pair $(\bx, \ba)$, there is an associated random reward. We adopt the unconfoundedness assumption to simplify the model, which is commonly used in existing literature on causal inference \citep{wasserman2013all, zhou2018offline}.
\begin{assumption}[Unconfoundedness]\label{assum.unconf}
The reward is independent of $\ba$ conditioned on $\bx$.
\end{assumption}
To better interpret Assumption \ref{assum.unconf}, we first consider a finite action space $\cA = \{A_1, \dots, A_{|\cA|}\}$, where $A_j$ is a one-hot vector, i.e. $A_j = [0, \dots, 0, 1, 0, \dots, 0]^\top$ with $1$ appearing at the $j$-th position. Given the covariate $\bx$, there is a reward $\{Y_1(\bx), \dots, Y_{|\cA|}(\bx)\}$ for each action, where the randomness of $Y_j(\bx)$ only depends on $\bx$. The observed reward $y_i$ is a realization of $Y_j(\bx_i)$ with $\ba_i = A_j$.

\subsection{Policy Learning with Finite Actions}
When the action space is finite, a policy $\pi: \cM \rightarrow \Delta^{|\cA|}$ maps a covariate on $\cM$ to a vector on the $|\cA|$-dimensional simplex
$$\Delta^{|\cA|} = \left\{\bz \in \RR^{|\cA|}: z_i\ge 0 \text{ and } \sum_i z_i = 1\right\}.$$
The $j$-th entry of $\pi(\bx)$ denotes the probability of choosing the action $A_j$ given $\bx$. {A policy in the interior of the simplex is called a randomized policy. If $\pi(\bx)$ is a one-hot vector, 
it is called a deterministic policy.} 
The expected reward of deploying a policy $\pi$ is
\begin{align}\label{eq.Q}
Q(\pi)& = \EE[Y(\pi(\bx))] = \EE\left[\left\langle [Y_1(\bx), \dots, Y_{|\cA|}(\bx)]^\top, \pi(\bx)\right\rangle\right].
\end{align}

We investigate the doubly robust approach \citep{cassel1976some,robins1994estimation,dudik2011doubly} for policy learning, which consists of two stages.
After receiving the training data, we split them into two groups \begin{equation}
\cS_1=\{(\bx_i,\ba_i,y_i)\}_{i=1}^{n_1}\quad\textrm{and}\quad\cS_2=\{(\bx_i,\ba_i,y_i)\}_{i=n_1+1}^{n}.
\label{eqs1}
\end{equation}
We denote $n_2=n-n_1$ and choose $n_1,n_2$ to be proportional to $n$ such that $n_1/n$ is a constant.
In the first stage, we solve nonparametric regression problems using $\cS_1$ to estimate two important functions --- the propensity score and the conditional expected reward. For any action $A_j$, the propensity score
$$e_{A_j}(\bx) \coloneqq \PP(\ba = A_j ~|~ \bx)$$
 quantifies the probability of choosing $A_j$ given the covariate $\bx$, and the expected reward of choosing $A_j$ is
 $$\mu_{A_j}(\bx) \coloneqq \EE[Y_j(\bx) ~|~ \bx].$$
 Substituting the definition above into (\ref{eq.Q}), we can write
 \begin{align}
   Q(\pi)=\EE\left\langle [\mu_{A_1}(\bx),...,\mu_{A_{|\cA|}}(\bx)]^\top,\pi(\bx)\right\rangle=\int_{\cM} \left\langle[\mu_{A_1}(\bx),...,\mu_{A_{|\cA|}}(\bx)]^\top,\pi(\bx)\right\rangle  d\bx.
   \label{eq.Qmu}
 \end{align}
In the second stage, we learn a policy using $\cS_2$ based on our estimated $e_{A_j}$'s and $\mu_{A_j}$'s, which only requires that either the $e_{A_j}$'s or the $\mu_{A_j}$'s are accurately estimated.


\vspace{0.1in}
\noindent {\bf $\bullet$ Stage 1: Estimating $\mu_{A_j}$ and $e_{A_j}$}.
For each action $A_j$, we use a neural network to estimate the reward function $\mu_{A_j}$ by minimizing the following empirical quadratic loss
\begin{align}\label{eq:hat_mu}
\hat{\mu}_{A_j}(\bx) & = \argmin_{f \in \cF_{\rm NN}}~ \frac{1}{n_{A_j}} \sum_{i=1}^{n_1} (y_i - f(\bx_i))^2 \mathds{1}\{\ba_i = A_j\} \quad \textrm{with}\quad n_{A_j} = \sum_{i=1}^{n_1} \mathds{1}\{\ba_i = A_j\},
\end{align}
where $\cF_{\rm NN}: \cM \rightarrow \RR$ is a properly chosen network class defined in Lemma \ref{lemma:stage1rate}.

An estimator of the propensity score $e_{A_j}$ is obtained by minimizing the multinomial logistic loss. Let $\cG_{\rm NN}: \cM \mapsto \RR^{|\cA|-1}$
be a properly chosen network class defined in Lemma \ref{lemma:stage1rate}. We obtain $\hat{e}_{A_j}(\bx)$ via
\begin{align}
\hat{g}(\bx) & = \argmin_{g \in \cG_{\rm NN}}~ \frac{1}{n_1} \sum_{i=1}^{n_1} - [g(\bx_i)^\top, 1] \ba_i + \log \Big(1 + \sum_{j=1}^{|\cA|-1} \exp([g(\bx_i)]_j)\Big), \label{eq.hatg} \\
\hat{e}_{A_j}(\bx) & = \frac{\exp([\hat{g}(\bx)]_j)}{1 + \sum_{j=1}^{|\cA|-1} \exp([\hat{g}(\bx)]_j)}~ \textrm{for~} j \leq |\cA|-1, ~ \textrm{and} ~~ \hat{e}_{A_{|\cA|}}(\bx) = \frac{1}{1 + \sum_{j=1}^{|\cA|-1} \exp([\hat{g}(\bx)]_j)}. \label{eq:hat_e}
\end{align}
Here $[g]_j$ denotes the $j$-th entry, and $[g^\top, 1] \in \RR^{|\cA|}$ is obtained by augmenting $g$ by $1$.

\vspace{0.1in}
\noindent {\bf $\bullet$ Stage 2: Policy Learning}.
Given $\hat{\mu}_{A_j}$ and $\hat{e}_{A_j}$, we learn an optimal policy by maximizing a doubly robust empirical reward:
\begin{align}
\hat{Q}(\pi) := \frac{1}{n_2} \sum_{i=n_1+1}^{n} \pi(\bx_i)^\top \hat{\Gamma}_i \quad \textrm{with}\quad \hat{\Gamma}_i = \frac{y_i-\hat{\mu}_{\ba_i}(\bx_i)}{\hat{e}_{\ba_i}(\bx_i)}\cdot\ba_i+[\hat{\mu}_{A_1}(\bx_i),\dots,\hat{\mu}_{A_{|\cA|}}(\bx_i)]^{\top} \in \RR^{|\cA|}. \label{eq.hatGamma}
\end{align}
A doubly robust optimal policy is learned by
\begin{align}
\hat{\pi}_{\rm DR} = \argmax_{\pi \in \Pi_{\rm NN}}~ \hat{Q}(\pi), \label{eq:pidr}
\end{align}
where $\Pi_{\rm NN}$ is a properly chosen network class (see Section \ref{sec:theory} for the configurations of $\Pi_{\rm NN}$, e.g., \eqref{eq.PiNN} and \eqref{eq.thm1.para}).
The doubly robust reward $\hat{Q}$ can tolerate a relatively large estimation error in either $\hat{\mu}_{A_j}$ or $\hat{e}_{A_j}$ (see the discussion after Theorem \ref{thm.holder}).


\subsection{Policy Learning with Continuous Actions}\label{sec:cts-action}
Continuous actions, e.g. doses of drugs, often arise in applications, but there are limited studies on policy learning with continuous actions. In this paper,
we consider the continuous action space $\cA = [0, 1]$ and use $a \in \cA$ to denote an action. When the random action $a$ takes the value $A \in [0, 1]$, we denote $Y(\bx, A)$ as its random reward. The propensity score and conditional expected reward are defined analogously to the finite action case:
\begin{align*}
e(\bx, A) \coloneqq \frac{d}{dA} \PP(a \leq A, A \in \cA ~|~ \bx) \quad \textrm{and} \quad \mu(\bx, A) \coloneqq \EE[Y(\bx, A) ~|~ \bx].
\end{align*}
Note that $e(\bx, A)$ is a probability density function.

In this scenario, we can learn an optimal policy by replicating the two-stage scheme with a discretization technique on the continuous action space. Specifically, we uniformly partition the action space $\cA$ into $V$ sub-intervals and denote $I_j = [(j-1)/V, j/V]$ for $j = 1, \dots V$. Accordingly, we define the discretized propensity score and conditional expected reward for the sub-interval $I_j$ as
\begin{align}\label{eq:discretized_score_reward}
e_{I_j}(\bx) \coloneqq \PP(a \in I_j ~|~ \bx) \quad \textrm{and} \quad \mu_{I_j}(\bx) \coloneqq \EE[Y(\bx, a) \mathds{1}\{a \in I_j\} ~|~ \bx]/e_{I_j}.
\end{align}
After the discretization on the action space, we identify all the actions $a$ belonging to a single sub-interval $I_j$ as the midpoint $A_j = (2j-1)/2V$ of $I_j$ and equips $A_j$ with the average expected reward $\mu_{I_j}$. 
After discretization, we resemble the setup in the finite-action scenario, and then apply the aforementioned two-stage doubly robust approach to learn a discretized policy concentrated on the $A_j$'s.
In the first stage, we obtain $\hat{\mu}_{I_j}$ and $\hat{e}_{I_j}$ as estimators of $\mu_{I_j}$ and $e_{I_j}$, respectively. In the second stage, we use neural networks for policy learning by maximizing the discretized doubly robust empirical reward. Specifically, we define $I(a_i) = I_j$ for $a_i \in I_j$ which maps the continuous action to the corresponding discretized sub-interval. For $a_i\in I_j$, we denote $\ba_i \in \{0, 1\}^V$ as the one-hot vector with the $j$-th element being $1$, which encodes the action $a_i$. The discretized doubly robust empirical reward is defined as
\begin{align}\label{eq:doublyrobustreward}
\hat{Q}^{\rm (D)}(\pi) \coloneqq \frac{1}{n_2} \sum_{i=n_1+1}^n \left\langle \hat{\Gamma}_i^{\rm (D)}, \pi(\bx_i) \right\rangle ~~ \textrm{with} ~~ \hat{\Gamma}_i^{\rm (D)} = \frac{y_i-\hat{\mu}_{I(a_i)}(\bx_i)}{\hat{e}_{I(a_i)}(\bx_i)}\cdot \ba_{i}+[\hat{\mu}_{I_1}(\bx_i),\dots,\hat{\mu}_{I_V}(\bx_i)]^{\top},
\end{align}
where the superscript $({\rm D})$ denotes the discretized quantities. We learn an optimal policy by solving the following maximization problem:
\begin{align}\label{eq:picdr}
\hat{\pi}_{\textrm{ C-DR}}  = \argmax_{\pi \in \Pi_{{\rm NN}}}~ \hat{Q}^{\rm (D)}(\pi)
\end{align}
where $\Pi_{\rm NN}$ is a properly chosen neural network. See Section \ref{subseccontinuousaction} for more details of the learning procedure, a proper choice of $V$, and the statistical guarantees of the learned policy.

\vspace{-0.15in}
\section{Main Results}\label{sec:theory}

Our main results are nonasymptotic regret bounds (see Definition \ref{def.regret}) on the policy learned by the two-stage scheme in Section \ref{sec:setup}, when the covariates are concentrated on a low-dimensional manifold.

The regret of a policy $\pi$ against a reference policy $\bar{\pi}$ is defined as the difference between their respective expected rewards. The formal definition is given as follows.
\begin{definition}\label{def.regret}
Let $\bar{\pi}$ be a fixed reference policy. For any policy $\pi$, the regret of $\pi$ against $\bar{\pi}$ is
  \begin{align}
    R(\bar{\pi},\pi) = Q(\bar{\pi})-Q(\pi).\notag
  \end{align}
\end{definition}
Here $Q(\pi)$ is the expected reward either in the finite-action scenario defined in \eqref{eq.Q} or the continuous-action scenario which is defined later in \eqref{eq:continuousQ}. We consider two reference policies: 1) the optimal H\"{o}lder policy that maximizes the expected reward; 2) the unconstrained optimal policy that maximizes the expected reward. We establish high probability bounds on the regret of the learned policy for both discrete actions (Section \ref{subsecfiniteaction}) and continuous actions (Section \ref{subseccontinuousaction}).

\vspace{-0.1in}
\subsection{Policy Learning with Finite Actions}

\label{subsecfiniteaction}

Our theory is based on the following assumptions, including a manifold model for covariates, some standard assumptions on the smoothness of the propensity score and the reward.

\begin{assumption}\label{assum.cM}
$\cM$ is a $d$-dimensional compact smooth manifold isometrically embedded in $\RR^D$. There exists $B>0$ such that $\|\bx\|_\infty \leq B$ whenever $\bx\in \cM$. The reach $\tau$ of $\cM$ satisfies $\tau>0$.
\end{assumption}

\renewcommand{\theassumption}{A.\arabic{assumption}}
\begin{assumption} \label{assum.causal}
The propensity score and random reward satisfy:

\begin{itemize}
\item[(i)] \emph{Overlap:} $e_{A_j}(\bx) \geq \eta$ for $j = 1, \dots, |\cA|$, where $\eta> 0$ is a constant;

\item[(ii)]  \emph{Bounded Reward:} $Y_j(\bx)$ is bounded and has a bounded variance, i.e., $\sup_{\bx \in \cM} |Y_j(\bx)| \leq M_1$ and $\Var[Y_j] \leq \sigma^2$ for any $j = 1, \dots, |\cA|$, where $M_1 > 0$ and $\sigma > 0$ are constants.
\end{itemize}
\end{assumption}
Assumption \ref{assum.causal} is a standard assumption for statistical guarantees of all learning approaches using the inverse propensity score \citep{wasserman2013all, farrell2018deep, zhou2018offline}. Assumption \ref{assum.causal} implies that expected reward $\mu_{A_j}$ is bounded since $|\mu_{A_j}(\bx)|\leq \EE[|Y_j(\bx)| ~|~ \bx]\leq M_1$ for every $\bx \in \cM$.

\begin{assumption}\label{assum.regularity}
Given a H\"{o}lder index $\alpha \ge 1$, we assume
$\mu_{A_j}(\bx) \in \cH^{\alpha}(\cM)$ and $e_{A_j}(\bx) \in \cH^{\alpha}(\cM)$ for $j = 1, \dots, |\cA|$. Moreover, for a fixed $C^{\infty}$ atlas of $\cM$, there exists $M_2>0$ such that $$\max_j \norm{\mu_{A_j}}_{\cH^\alpha} \leq M_2\quad\textrm{and}\quad\max_j \|\log e_{A_j}\|_{\cH^{\alpha}} \leq M_2.$$
\end{assumption}


Thanks to Assumption \ref{assum.causal} {\it (i)}, $e_{A_j} \in \cH^\alpha$  implies $\log e_{A_j} \in \cH^\alpha$ (see Lemma \ref{lem.holder.log} in Appendix \ref{appendix:helper}). Now we are ready to derive the following estimation bounds for $\mu_{A_j}$ and $e_{A_j}$ using nonparametric regression techniques \citep{tsybakov2008introduction}. To simplify the notation, we denote
\begin{align}
M = \max\{1, M_1, 2M_2,-\log \eta\}.
\label{eq.M}
\end{align}


\vspace{-0.1in}
\subsubsection{Estimation Bounds of $\mu_{A_j}(\bx)$ and $e_{A_j}(\bx)$}

By choosing networks
\begin{equation}
\cF_{\rm NN} = \cF(L_1,p_1,K_1,\kappa_1,R_1)\quad\textrm{and}\quad\cG_{\rm NN} = \cF(L_2,p_2,K_2,\kappa_2,R_2)
\label{eqfnngnn}
\end{equation} to estimate $\mu_{A_j}$ and $e_{A_j}$ in \eqref{eq:hat_mu} and \eqref{eq:hat_e}, respectively, we prove the following estimation error bounds for the estimators $\hat{\mu}_{A_j}$ and $\hat{e}_{A_j}$ (Lemma \ref{lemma:stage1rate} is proved in Appendix \ref{appendix.stage1rate}). We use $O(\cdot)$ to hide absolute constants and polynomial factors of $\alpha$, H\"{o}lder norm, $\log D$, $d$, $\tau$, $|\cA|$, and the surface area of $\cM$.
\begin{lemma}\label{lemma:stage1rate}
Suppose Assumptions \ref{assum.unconf} -- \ref{assum.cM} and \ref{assum.causal} -- \ref{assum.regularity} hold. We choose
\begin{equation}
\begin{aligned}
  &L_1=O(\log \eta n_1),\quad p_1=O\big((\eta n_1)^{\frac{d}{2\alpha+d}}\big), \quad K_1=O\big( (\eta n_1)^{\frac{d}{2\alpha+d}}\log \eta n_1\big),\\ &\hspace{1in} \kappa_1=\max\{B,M,\sqrt{d},\tau^2\},\quad R_1=M
\end{aligned}
 \label{eq.mu.parameter}
\end{equation}
for $\cF_{\rm NN}$ and
\begin{align}
L_2=O(\log n_1),\ p_2=O\big(n_1^{\frac{d}{2\alpha+d}}\big),\ K_2=O\big( n_1^{\frac{d}{2\alpha+d}}\log n_1\big), \ \kappa_2=\max\{B,M,\sqrt{d},\tau^2\},\ R_2=M,
\label{eq.e.parameter}
\end{align}
for $\cG_{\rm NN}$ in \eqref{eqfnngnn}.
Then for any $j = 1, \dots, |\cA|$, we have
\begin{align}
\EE_{\cS_1}\left[\norm{\hat{\mu}_{A_j} - \mu_{A_j}}_{L^2}^2\right] & \leq C_1(M^2+\sigma^2) (\eta n_1)^{-\frac{2\alpha}{2\alpha+d}}\log^3 (\eta n_1), \label{eq.mu.bound} \\
\EE_{\cS_1}\left[\norm{\hat{e}_{A_j} - e_{A_j}}_{L^2}^2\right] & \leq C_2M^2|\cA|^{\frac{4\alpha}{2\alpha+d}}n_1^{-\frac{2\alpha}{2\alpha+d}}\log^3n_1,\label{eq.e.bound}
\end{align}
where $C_1, C_2$ depend on $\log D$, B, $\tau$ and the surface area of $\cM$.
\end{lemma}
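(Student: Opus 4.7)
My plan is to treat the two estimates separately, reducing each to a standard nonparametric regression argument adapted to the manifold setting. For $\hat{\mu}_{A_j}$, the objective in \eqref{eq:hat_mu} is a least-squares regression restricted to the sub-sample $\{i \le n_1 : \ba_i = A_j\}$, so I reduce to nonparametric regression on $\cM$ with an effective sample size $n_{A_j}$. For $\hat{e}_{A_j}$, I treat \eqref{eq.hatg} as a multinomial logistic regression whose excess logistic risk I will convert to an $L^2$ bound on $\hat{e}_{A_j}$ via convexity of the log-partition. In both cases the backbone is the same bias/variance (oracle) decomposition, together with one common approximation ingredient: a uniform approximation result for $\cH^\alpha(\cM)$ functions by ReLU networks in $\cF(L,p,K,\kappa,R)$, whose proof uses a chart cover of $\cM$ with a partition of unity, pulls each piece back to $\RR^d$ via $\phi_i$, approximates the resulting H\"older functions in $\RR^d$ with a Yarotsky/Schmidt-Hieber-type construction, and glues the pieces back together. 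This is where the intrinsic dimension $d$ (and not $D$) enters the rate.

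\textbf{Estimation of $\mu_{A_j}$.} First I would condition on $\{(\bx_i,\ba_i)\}_{i=1}^{n_1}$ and use a Chernoff/Bernstein bound together with the overlap condition $e_{A_j}\ge\eta$ to show $n_{A_j}\gtrsim \eta n_1$ with high probability. On this event, a standard oracle inequality for empirical risk minimization with a bounded quadratic loss gives
\[
\EE\bigl[\|\hat{\mu}_{A_j}-\mu_{A_j}\|_{L^2}^2\bigr]\;\lesssim\;\inf_{f\in\cF_{\rm NN}}\|f-\mu_{A_j}\|_\infty^2 \;+\;(M^2+\sigma^2)\,\frac{K_1 L_1\log (K_1 L_1)\,\log n_{A_j}}{n_{A_j}},
\]
where the second term comes from a chaining/covering number argument on $\cF(L_1,p_1,K_1,\kappa_1,R_1)$ (the covering number is polynomial in $K_1,L_1,\kappa_1$). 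The first term is handled by the manifold approximation theorem above, yielding a bias of order $p_1^{-2\alpha/d}$. Plugging in the choices in \eqref{eq.mu.parameter} balances the two terms at $(\eta n_1)^{-2\alpha/(2\alpha+d)}$ up to the stated $\log^3$ factor.

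\textbf{Estimation of $e_{A_j}$.} For the propensity score I would first identify the population minimizer of the multinomial logistic loss as $g^\star(\bx)=[\log(e_{A_j}/e_{A_{|\cA|}})]_{j=1}^{|\cA|-1}$. Assumptions~\ref{assum.causal}(i) and \ref{assum.regularity}, combined with Lemma~\ref{lem.holder.log}, give $g^\star\in\cH^\alpha(\cM;\RR^{|\cA|-1})$ with controlled H\"older norm. The same oracle-inequality machinery applied to the logistic loss then gives an upper bound on the excess logistic risk of $\hat g$ consisting of (i) an approximation term $\inf_{g\in\cG_{\rm NN}}\|g-g^\star\|_\infty^2$ controlled via the manifold approximation result applied coordinatewise, and (ii) a stochastic term of the same form as before with the complexity of $\cG_{\rm NN}$. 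To convert this into a bound on $\|\hat{e}_{A_j}-e_{A_j}\|_{L^2}^2$ I would use that the logistic log-partition $g\mapsto\log(1+\sum_j e^{g_j})$ is $\beta$-strongly convex on the box $\|g\|_\infty\le R_2$ with modulus depending on $\eta$ and $R_2$, so excess risk dominates $\|\hat g-g^\star\|_{L^2}^2$; then the softmax map $g\mapsto e$ is Lipschitz on the same box, giving \eqref{eq.e.bound}. The prefactor $|\cA|^{4\alpha/(2\alpha+d)}$ tracks the coordinatewise approximation and covering arguments.

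\textbf{Main obstacle.} The trickiest part will be the multinomial logistic step: I need the conversion from excess logistic risk to $L^2$ error in $\hat e_{A_j}$ to be uniform in $j$ and to degrade gracefully in $|\cA|$, which requires carefully tracking the strong-convexity modulus of the log-partition on $[-R_2,R_2]^{|\cA|-1}$ (this is where $\eta$ and the boundedness imposed by $\kappa_2$ enter) and the Lipschitz constant of the softmax on the same region. A secondary subtlety is that the entire approximation theory must be carried out intrinsically: the chart cover size, the partition of unity, and the projection to the nearest tangent plane must depend on $d$ and the reach $\tau$ but only logarithmically on $D$, and the projection itself must be implementable inside $\cF_{\rm NN}$ without inflating the budget $(L_1,p_1,K_1,\kappa_1)$ beyond \eqref{eq.mu.parameter}. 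Once these two ingredients are in place, the final rate follows by mechanically plugging the chosen $(L_i,p_i,K_i,\kappa_i,R_i)$ into the oracle inequalities.
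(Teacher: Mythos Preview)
Your proposal is correct and mirrors the paper's own proof almost exactly: the paper handles $\hat{\mu}_{A_j}$ by invoking a ready-made manifold regression result (Chen et al.\ 2019, Theorem~1) at the effective sample size $n_{A_j}$ and then a Chernoff-type bound to replace $n_{A_j}$ by $\eta n_1$, and handles $\hat{e}_{A_j}$ by identifying $g^\star_j=\log(e_{A_j}/e_{A_{|\cA|}})$, bounding $\|\hat g-g^\star\|_{L^2}^2$ via an oracle inequality for the multinomial logistic loss (citing Farrell et al.\ 2018 and Maurer 2016 for the strong-convexity/contraction step you describe), and then passing to $\hat e_{A_j}$ by the $1$-Lipschitzness of softmax. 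The only difference is packaging: the paper outsources your ``main obstacle'' ingredients (manifold approximation, excess-risk-to-$L^2$ conversion) to those references rather than re-deriving them.
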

In (\ref{eq.mu.bound}) and (\ref{eq.e.bound}), the expectation is taken with respect to $\cS_1$ defined in \eqref{eqs1}. Lemma \ref{lemma:stage1rate} provides performance guarantees of neural networks to solve regression problems (\ref{eq:hat_mu}) and (\ref{eq:hat_e}) in order to estimate $\mu_{A_j}$ and $e_{A_j}$. When the covariates $\bx$ are on a manifold, we prove that the estimation errors converge at a fast rate in which the exponent only depends on the intrinsic dimension $d$ instead of the ambient dimension $D$. \cms{we can comment more, e.g., the procedure for estimating $\mu$ and $e$}


\vspace{-0.05in}

\subsubsection{Regret Bound of Learned Policy versus Constrained Oracle Policy}
Our first main result is a regret bound of $\hat{\pi}_{\rm DR}$ obtained in \eqref{eq:pidr} against the oracle policy in a H\"older policy class:
$$  \pi_\beta^* = \argmax_{\pi \in \Pi_{\cH^\beta}} \EE[Q(\pi(\bx))],$$
where the H\"older policy class $\Pi_{\cH^\beta}$ is defined as
\begin{align}
\Pi_{\cH^\beta} \coloneqq \left\{\textrm{Softmax}[\nu_1(\bx), \dots, \nu_{|\cA|}(\bx)]^\top : \nu_j \in \cH^\beta(\cM) \textrm{~ and $\norm{\nu_j}_{\cH^{\beta}} \leq M$ for~} j = 1, \dots, |\cA| \right\}.
\label{eq.PiHolder}
\end{align}
Accordingly, we pick the neural network policy class as
\begin{align}
\Pi_{\rm NN}^{|\cA|} = \{\textrm{Softmax}(f) \text{ with } f: \cM \rightarrow \RR^{|\cA|} \in \cF(L_{\Pi}, p_{\Pi}, K_{\Pi}, \kappa_{\Pi}, R_{\Pi}) \}.
\label{eq.PiNN}
\end{align}

Our first theorem shows that $\hat{\pi}_{\rm DR}$ is a consistent estimator of the oracle H\"{o}lder policy $\pi^*_\beta$ as long as the network parameters $L_{\Pi}, p_{\Pi}, K_{\Pi}, \kappa_{\Pi}, R_{\Pi}$ are properly chosen.

\begin{theorem}\label{thm.holder}
Suppose Assumptions \ref{assum.unconf} -- \ref{assum.cM} and \ref{assum.causal} -- \ref{assum.regularity} hold. Under the setup in Lemma \ref{lemma:stage1rate}, if the network parameters of $\Pi_{\rm NN}^{|\cA|}$ are chosen with
\begin{equation}
\begin{aligned}
& L_{\Pi}=O(\log n),\quad p_{\Pi}=O\big(|\cA|n^{\frac{d}{2\beta+d}}\big),\quad K_{\Pi}=O\big(|\cA|n^{\frac{d}{2\beta+d}}\log n\big),\\
& \hspace{1in} \kappa_{\Pi}=\max\{B,M,\sqrt{d},\tau^2\},\quad R_{\Pi}=M,
\end{aligned}
\label{eq.thm1.para}
\end{equation}
then with probability no less than $1-C_1|\cA|n^{-\frac{\beta}{2\beta+d}}$ over the randomness of data $\cS_1$ and $\cS_2$, the following bound holds
\begin{align}
  &R(\pi_\beta^*, \hat{\pi}_{\rm DR})\leq C|\cA|^{2} n^{-\frac{\beta}{2\beta+d}}\log^{2}n \nonumber \\
  &\qquad + \eta^{-1}|\cA|\sqrt{\frac{1}{n_2} \sum_{i=n_1+1}^n\left(  \hat{\mu}_{A_j}(\bx_i) - \mu_{A_j}(\bx_i)\right)^2} \sqrt{\frac{1}{n_2} \sum_{i=n_1+1}^n\left(\hat{e}_{A_j}(\bx_i)- e_{A_j}(\bx_i)\right)^2},
  \label{eq.thm1.holderbound}
\end{align}
where $C_1>0$ is an absolute constant and $C$ depends on $\log D$, $d$, $B$, $M$, $\tau$, $\eta$, $\beta$, and the surface area of $\cM$.
\end{theorem}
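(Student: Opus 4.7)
\textbf{Proof proposal for Theorem \ref{thm.holder}.}
The plan is to decompose the regret into three manageable pieces: (i) an approximation error measuring how well the neural network policy class $\Pi_{\rm NN}^{|\cA|}$ captures the H\"older oracle $\pi_\beta^*$; (ii) a uniform statistical deviation of $\hat{Q}$ around its conditional mean; and (iii) a doubly robust bias arising from the first-stage nuisance estimates $\hat{\mu}_{A_j}$ and $\hat{e}_{A_j}$. Concretely, fix a near-best $\tilde{\pi} \in \Pi_{\rm NN}^{|\cA|}$ approximating $\pi_\beta^*$ and write $Q(\pi_\beta^*) - Q(\hat{\pi}_{\rm DR}) = [Q(\pi_\beta^*) - Q(\tilde{\pi})] + [Q(\tilde{\pi}) - \hat{Q}(\tilde{\pi})] + [\hat{Q}(\tilde{\pi}) - \hat{Q}(\hat{\pi}_{\rm DR})] + [\hat{Q}(\hat{\pi}_{\rm DR}) - Q(\hat{\pi}_{\rm DR})]$. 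The third bracket is nonpositive by the definition of $\hat{\pi}_{\rm DR}$, so only the other three summands need to be controlled.

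For the approximation error $Q(\pi_\beta^*) - Q(\tilde{\pi})$, I would write $\pi_\beta^* = \Softmax([\nu_1, \ldots, \nu_{|\cA|}])$ with each $\nu_j \in \cH^\beta(\cM)$ of norm at most $M$, invoke the same ReLU approximation theorem on manifolds that underlies Lemma \ref{lemma:stage1rate} to produce networks $\hat{\nu}_j$ in the configuration \eqref{eq.thm1.para} with $\|\hat{\nu}_j - \nu_j\|_\infty \lesssim n^{-\beta/(2\beta+d)} \log n$, and then use the Lipschitz property of Softmax in $\ell_\infty$ to obtain $\|\tilde{\pi} - \pi_\beta^*\|_\infty \lesssim n^{-\beta/(2\beta+d)} \log n$. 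Integrating against $\mu_{A_j}$ (bounded by $M$) and summing over actions yields a rate of order $|\cA| n^{-\beta/(2\beta+d)} \log n$ for the approximation error.

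For the two stochastic brackets, I would further split each as $[Q(\pi) - \EE_{\cS_2}[\hat{Q}(\pi) \mid \cS_1]] + [\EE_{\cS_2}[\hat{Q}(\pi) \mid \cS_1] - \hat{Q}(\pi)]$. The second summand, conditional on $\cS_1$, is a sample-average deviation; I would bound it uniformly over $\Pi_{\rm NN}^{|\cA|}$ by combining a covering-number bound for the Softmax-of-network class (whose log-entropy under \eqref{eq.thm1.para} scales like $|\cA| n^{d/(2\beta+d)} \log^2 n$) with a Bernstein/Hoeffding inequality. This is valid because, under $\kappa_1, \kappa_2$ chosen as in Lemma \ref{lemma:stage1rate}, both $\hat{\mu}_{A_j}$ and $\hat{\Gamma}_i$ are uniformly bounded, the latter using the fact that the Softmax parametrization forces $\hat{e}_{A_j} \geq c$ for some constant $c > 0$ (possibly after a mild truncation). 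For the bias summand, direct computation using Assumption \ref{assum.unconf} and iterated expectation gives $\EE_{\cS_2}[\hat{Q}(\pi) \mid \cS_1] - Q(\pi) = \EE_{\bx}\big[\sum_j \pi_j(\bx)\,(\hat{\mu}_{A_j}(\bx) - \mu_{A_j}(\bx))(e_{A_j}(\bx) - \hat{e}_{A_j}(\bx))/\hat{e}_{A_j}(\bx)\big]$, whence a pointwise Cauchy--Schwarz combined with an empirical-to-population $L^2$ transfer produces precisely the second summand of \eqref{eq.thm1.holderbound}.

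The hard part will be the uniform concentration step over $\Pi_{\rm NN}^{|\cA|}$: one must pin down a sharp covering-number bound for the Softmax-composed network class, confirm that $\hat{e}_{A_j}$ is bounded away from zero uniformly in the data (so that $\hat{\Gamma}_i$ has bounded variance and range), and then balance the union-bound logarithms against the $n^{-\beta/(2\beta+d)}$ rate without losing the exponent. A secondary care point is bookkeeping of conditioning: the bounds of Lemma \ref{lemma:stage1rate} must first be upgraded to high-probability form (via Markov), the inner concentration over $\cS_2$ must be done conditional on $\cS_1$, and the final high-probability statement must be assembled by a union bound over both stages. Keeping the empirical product $\sqrt{n_2^{-1}\sum(\hat\mu - \mu)^2}\cdot\sqrt{n_2^{-1}\sum(\hat e - e)^2}$ explicit in the final bound (rather than further bounding it via Lemma \ref{lemma:stage1rate}) is the reason for the data-dependent form of \eqref{eq.thm1.holderbound}.
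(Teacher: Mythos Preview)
Your proposal is correct in spirit and hits all the right high-level ingredients (network approximation of the H\"older policy, uniform concentration via covering numbers, and the doubly robust product bias), but your decomposition is organized differently from the paper's and this causes one small wrinkle worth noting.

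The paper does not center at $\EE_{\cS_2}[\hat Q(\pi)\mid\cS_1]$. Instead it introduces an intermediate reward $\tilde Q(\pi)=\frac{1}{n_2}\sum_i\langle\tilde\Gamma_i,\pi(\bx_i)\rangle$ built from the \emph{true} nuisances $\mu_{A_j},e_{A_j}$, and splits $(II_1)=Q(\hat\pi^*)-Q(\hat\pi_{\rm DR})$ into $\cE_1=\sup_{\pi_1,\pi_2}(\Delta-\tilde\Delta)$ and $\cE_2=\sup_{\pi_1,\pi_2}(\tilde\Delta-\hat\Delta)$. The term $\cE_1$ is a pure empirical-process fluctuation (since $\EE\tilde Q=Q$) handled by symmetrization and Dudley's entropy integral (Lemma~\ref{lem.E1}); the term $\cE_2$ is then algebraically broken, action by action, into $S_j^{(1)}+S_j^{(2)}+S_j^{(3)}$ where $S_j^{(1)}$ and $S_j^{(2)}$ are again mean-zero empirical sums (one factor in each is exactly centered) and $S_j^{(3)}$ is already an \emph{empirical} sum of the product $(\hat\mu_{A_j}-\mu_{A_j})(\hat e_{A_j}^{-1}-e_{A_j}^{-1})$, so Cauchy--Schwarz gives the second line of \eqref{eq.thm1.holderbound} directly. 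Your route via the conditional mean $\EE_{\cS_2}[\hat Q(\pi)\mid\cS_1]$ produces the bias as a \emph{population} integral, so to land exactly on the empirical product in \eqref{eq.thm1.holderbound} you would still need a population-to-empirical transfer (the direction opposite to what you wrote) and to absorb the resulting fluctuation into the first term; this is doable but is the extra step the paper's $S_j^{(1)},S_j^{(2)},S_j^{(3)}$ split avoids. Conversely, your centering makes the ``pure concentration'' piece involve $\hat\Gamma$ rather than $\tilde\Gamma$, which is fine because (as you note) the Softmax parametrization with $R_2=M$ forces $\hat e_{A_j}\geq(|\cA|e^{2M})^{-1}$, so $\hat\Gamma$ is uniformly bounded; the paper exploits the same lower bound when handling $S_j^{(2)}$ and $S_j^{(3)}$.
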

Theorem \ref{thm.holder} is proved in Section \ref{sec.proof.thm1}.
Theorem \ref{thm.holder} corroborates the doubly robust property of $\hat{\pi}_{\rm DR}$. The regret of $\hat{\pi}_{\rm DR}$ is not sensitive to the individual estimation error of either $\hat{\mu}_{A_j}$ or $\hat{e}_{A_j}$, since the bound depends on the product of the estimation errors. Combining Theorem \ref{thm.holder} and Lemma \ref{lemma:stage1rate}
yields the following corollary (see proof in Section \ref{sec.proof.coral1}).
\begin{corollary}\label{coral.holder}
 Suppose Assumptions \ref{assum.unconf} -- \ref{assum.cM} and \ref{assum.causal} -- \ref{assum.regularity} hold. If the network structures are chosen as in Lemma \ref{lemma:stage1rate} and Theorem \ref{thm.holder},
the following regret bound holds with probability no less than $1-C_1n^{-\frac{\alpha\wedge \beta}{2(\alpha\wedge \beta)+d}}\log^3 n$
  \begin{align}
R(\pi_\beta^*, \hat{\pi}_{\rm DR})\leq  C|\cA|^{\frac{8\alpha+2d}{2\alpha+d}}n^{-\frac{\alpha \wedge\beta}{2(\alpha \wedge\beta)+d}}\log^{2}n
\end{align}
where $C_1$ is an absolute constant, and $C$ depends on $\log D$, $d$, $B$, $M$, $\sigma$, $\tau$, $\eta$, $\alpha$, $\beta$, and the surface area of $\cM$.
\end{corollary}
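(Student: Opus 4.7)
}
The plan is to combine the two ingredients already in hand: Theorem \ref{thm.holder} provides a regret bound that already isolates the $n^{-\beta/(2\beta+d)}$ H\"older-approximation term, while leaving the remainder as a product of \emph{empirical} second moments of $\hat{\mu}_{A_j}-\mu_{A_j}$ and $\hat{e}_{A_j}-e_{A_j}$; Lemma \ref{lemma:stage1rate} gives bounds on the \emph{population} $L^2$ errors of these estimators in expectation over $\cS_1$. So the essential task is to pass from population to empirical second moments in high probability, and then to balance the resulting estimation term against the H\"older-approximation term.

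First, condition on $\cS_1$. Since $\hat{\mu}_{A_j}$ and $\hat{e}_{A_j}$ are $\cS_1$-measurable while the covariates $\{\bx_i\}_{i=n_1+1}^n$ in $\cS_2$ are i.i.d.\ and independent of $\cS_1$, the conditional expectations of the empirical quantities appearing on the right-hand side of \eqref{eq.thm1.holderbound} are exactly the corresponding $L^2$ norms:
\begin{align*}
\EE_{\cS_2 \mid \cS_1}\!\left[\frac{1}{n_2}\sum_{i=n_1+1}^n(\hat{\mu}_{A_j}(\bx_i)-\mu_{A_j}(\bx_i))^2\right] = \|\hat{\mu}_{A_j}-\mu_{A_j}\|_{L^2}^2,
\end{align*}
and analogously for $\hat{e}_{A_j}-e_{A_j}$. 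Taking the outer expectation over $\cS_1$ and applying Lemma \ref{lemma:stage1rate} bounds these unconditional expectations by $O(n^{-2\alpha/(2\alpha+d)}\log^3 n)$ and $O(|\cA|^{4\alpha/(2\alpha+d)} n^{-2\alpha/(2\alpha+d)}\log^3 n)$ respectively. A Markov inequality then yields, for any failure level $\delta>0$ and any fixed $j$, that with probability $\geq 1-\delta$ each empirical second moment is at most $1/\delta$ times its expectation. Union-bounding over the $2|\cA|$ events (one for each $j$ and each of $\mu,e$) costs only an overall factor $2|\cA|\delta$ in the failure probability.

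Second, plug these high-probability empirical bounds back into \eqref{eq.thm1.holderbound}. The estimation term is of order
\begin{align*}
\eta^{-1}|\cA|\cdot \Big(\delta^{-1/2}\cdot |\cA|^{2\alpha/(2\alpha+d)} n^{-2\alpha/(2\alpha+d)}\log^3 n\Big),
\end{align*}
up to the $|\cA|$-bookkeeping that depends on whether one maximizes or sums over $j$. Choosing $\delta \asymp n^{-(\alpha\wedge\beta)/(2(\alpha\wedge\beta)+d)}\log^3 n$ is the natural balancing choice: it pushes the $n$-exponent of the estimation term up from $-2\alpha/(2\alpha+d)$ to exactly $-(\alpha\wedge\beta)/(2(\alpha\wedge\beta)+d)$, which is no faster than the H\"older-approximation term $n^{-\beta/(2\beta+d)}$ provided by Theorem \ref{thm.holder} (using monotonicity of $t\mapsto t/(2t+d)$). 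The combined failure probability is then of order $|\cA| n^{-\beta/(2\beta+d)}+|\cA|\delta = O(n^{-(\alpha\wedge\beta)/(2(\alpha\wedge\beta)+d)}\log^3 n)$, matching the stated probability. Absorbing the $|\cA|$-powers from $|\cA|^{2}$ in the H\"older term, $\eta^{-1}|\cA|$ in front of the square-root product, $|\cA|^{2\alpha/(2\alpha+d)}$ from the propensity bound, and the union-bound factor yields the claimed $|\cA|^{(8\alpha+2d)/(2\alpha+d)}$ dependence.

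The main obstacle is not conceptual but bookkeeping: keeping track of the three distinct $|\cA|$ contributions (approximation term, union bound over $j$, and the $|\cA|^{4\alpha/(2\alpha+d)}$ factor in the propensity rate), and choosing the Markov threshold $\delta$ so that the $n$-exponent and the probability-of-failure both align with $-(\alpha\wedge\beta)/(2(\alpha\wedge\beta)+d)$. Everything else is an arithmetic consequence of Theorem \ref{thm.holder} and Lemma \ref{lemma:stage1rate}; in particular, no further concentration beyond Markov is needed because the rate obtained from expectations is already sharp up to the extra $1/\delta$ factor, and $\delta$ itself only needs to be polynomially small.
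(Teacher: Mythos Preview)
Your approach is essentially the paper's: convert the expected $L^2$ bounds from Lemma~\ref{lemma:stage1rate} into high-probability bounds on the empirical second moments in \eqref{eq.thm1.holderbound} via Markov's inequality, union-bound over $j$, and combine with Theorem~\ref{thm.holder}. The only cosmetic difference is that the paper first bounds $\EE[\omega_j]$ (the product of the two empirical root-mean-squares) by Cauchy--Schwarz and then applies Markov once to $\omega_j$, whereas you apply Markov to each empirical second moment separately; note that in your displayed estimation term the factor should be $\delta^{-1}$ rather than $\delta^{-1/2}$ (two square roots each contribute $\delta^{-1/2}$), but with this correction your choice of $\delta$ still delivers the stated rate, and the paper's own balancing choice is $\delta\asymp n^{-\alpha/(2\alpha+d)}$.
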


In comparison with existing works, our theory has several advantages:
\begin{itemize}
\item By considering the low-dimensional geometric structures of the covariates, we obtain a fast rate depending on the intrinsic dimension $d$.
Our theory partially justifies the success of off-policy learning by neural networks for high-dimensional data with low-dimensional structures.
\item Our assumptions on the propensity score and expected reward are weak in the sense that the H\"older index $\alpha \ge 1$ can be arbitrary. In \citet{farrell2018deep} and \citet{zhou2018offline}, the H\"{o}lder index $\alpha$ of the propensity score and expected reward needs to satisfy $2\alpha > D$. This condition is hard to satisfy when the covariates are high-dimensional, unless the $\mu_{A_j}$'s and $e_{A_j}$'s are super smooth with bounded high-order derivatives.
\item Our theory is nonasymptotic, 
while most existing works focus on an asymptotic analysis \citep{zhou2018offline,athey2017efficient,farrell2018deep}.
\end{itemize}

\subsubsection{Regret Bound of Learned Policy versus Unconstrained Optimal Policy}

We have shown that neural networks can accurately learn an oracle H\"{o}lder policy in Corollary \ref{coral.holder}. 
\begin{wrapfigure}{r}{0.34\columnwidth}
\vspace{-0.0in}
\centering
\includegraphics[width = 0.28\textwidth]{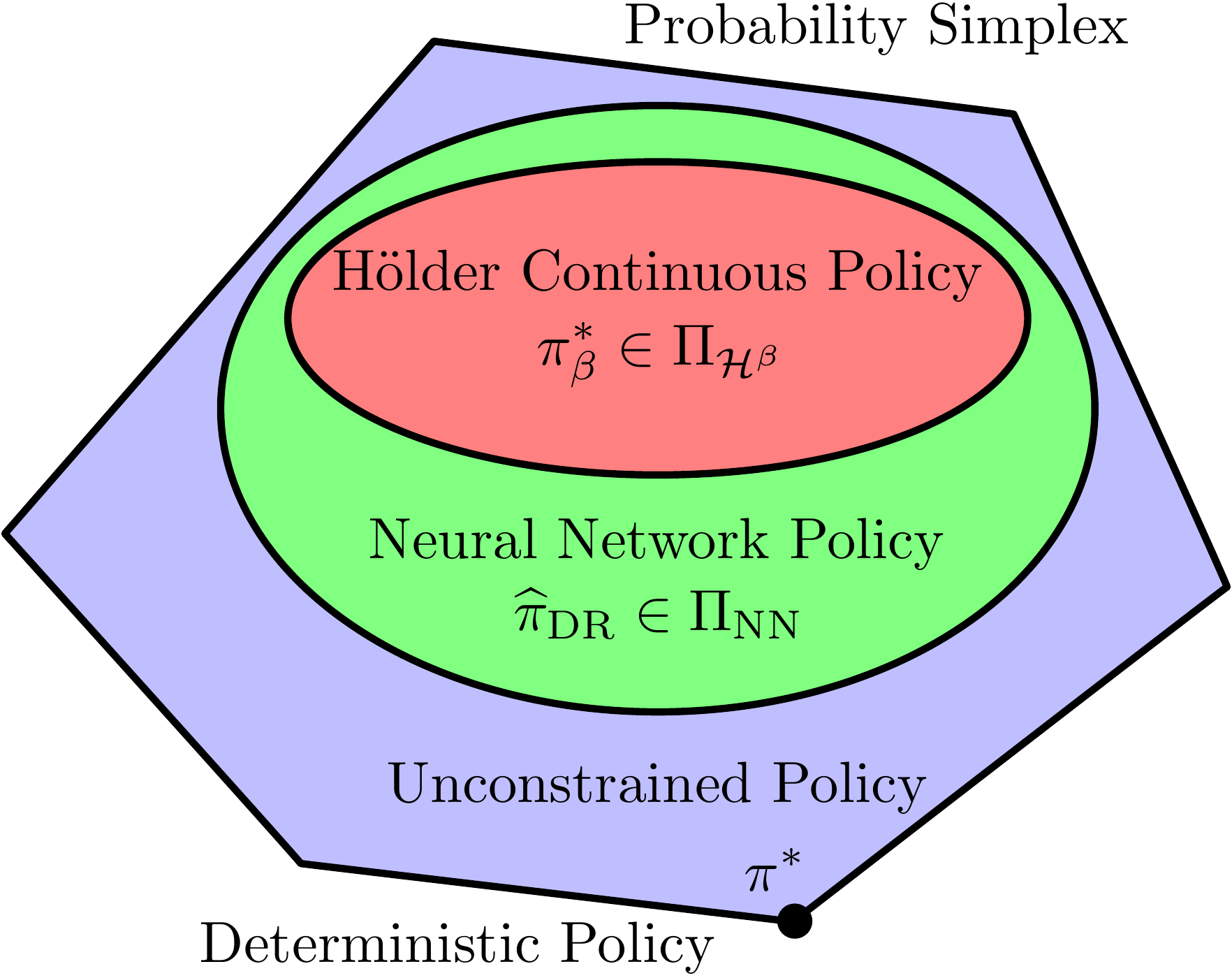}
\vspace{-0.1in}
\caption{The unconstrained policy class is the whole probability simplex with vertices being deterministic polices. The inclusion relation of the neural network policy and the H\"{o}lder policy classes indicates that for any H\"{o}lder continuous policy, there is an approximation given by a neural network policy. }
\label{fig:policy}
\vspace{-0.3in}
\end{wrapfigure}
\noindent In this section, we enlarge the oracle policy class to capture all possible policies, including highly nonsmooth polices, e.g., deterministic policies. We show that neural networks can still achieve a small regret, due to their strong expressive power. The relationship between the H\"{o}lder policy class, neural network policy class, and unconstrained policy class is depicted in Figure \ref{fig:policy}.

The unconstrained optimal policy is defined as
$$  \pi^* = \argmax_{\pi}~ Q(\pi).$$
To establish the regret bound of $\hat{\pi}_{\rm DR}$ in \eqref{eq:pidr} against $\pi^*$, we need the following assumption on the $\mu_{A_j}$'s.


\begin{assumption}[Noise Condition] \label{assum.noise}
Let $q \geq 1$ and denote $j^*(\bx) = \argmax_{j} \mu_{A_j}(\bx)$. There exists $c>0$, such that 
\begin{align*}
  \PP\Big[\big|\mu_{A_{j^*(\bx)}}(\bx) - \max_{j\neq j^*(\bx)} \mu_{A_j}(\bx)\big| \le Mt \Big] \le c t^{q}, \quad \text{ for any } t \in (0, 1).
\end{align*}
\end{assumption}

Assumption \ref{assum.noise} implies that, with high probability, there exists an optimal action whose expected reward is larger than those of others by a positive margin. This is an analogue of Tsybakov
\begin{wrapfigure}{r}{0.34\textwidth}
\vspace{-0.1in}
\centering
\includegraphics[width = 0.3\textwidth]{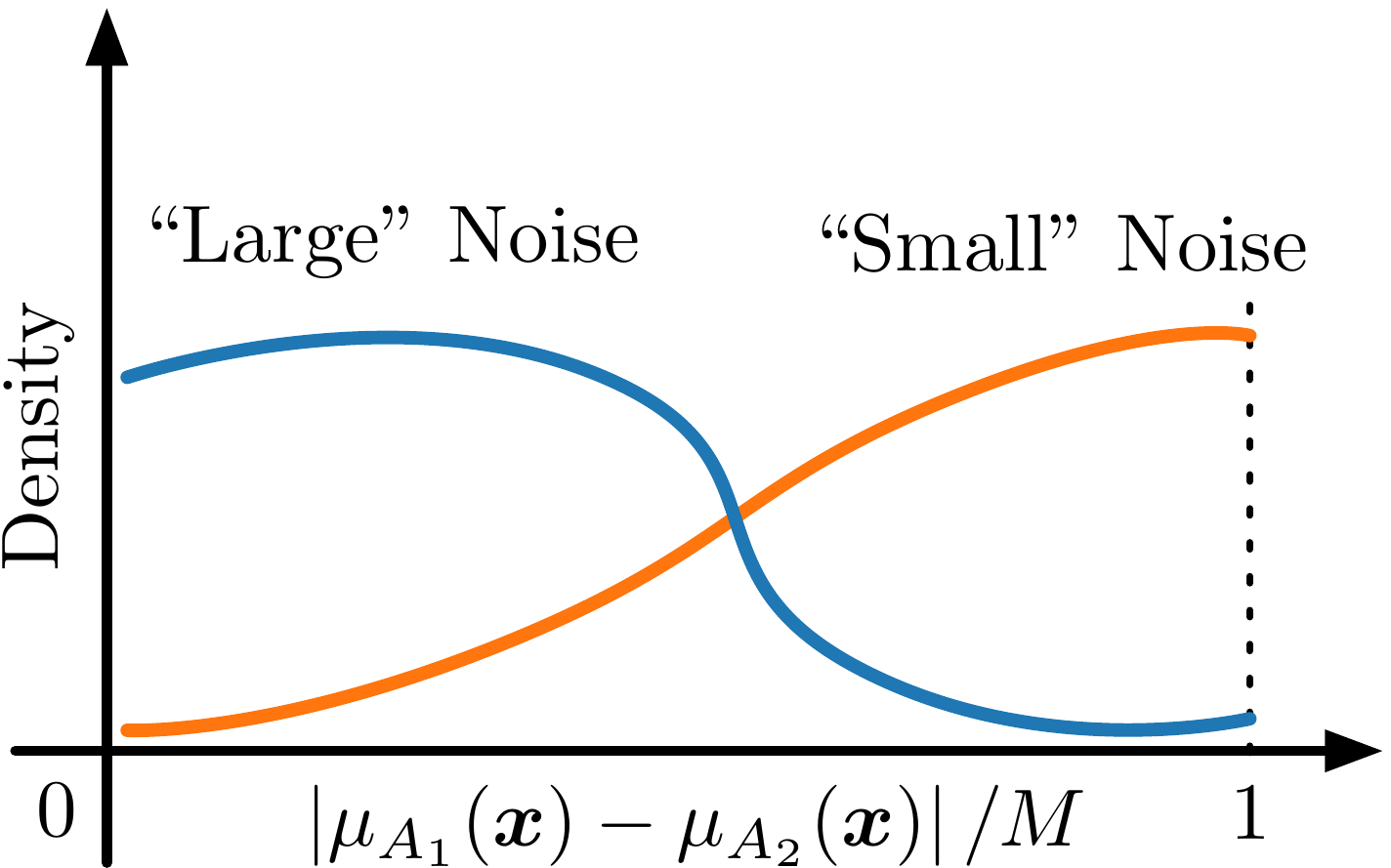}
\vspace{-0.1in}
\caption{Noise condition in a binary-action scenario. ``Large'' noise corresponds to high densities when $\left|\mu_{A_1}(\bx) - \mu_{A_2}(\bx)\right|$ is small; ``Small'' noise corresponds to low densities near the origin.}
\label{fig:noise}
\vspace{-0.1in}
\end{wrapfigure}
\noindent low-noise condition (\citet{tsybakov2004optimal}) in multi-class classification problems, which appears similarly in \citet{wang2016noise}. We illustrate the noise condition in a binary-action scenario in Figure \ref{fig:noise}.

We utilize a temperature parameter $\tem$ in the Softmax layer of the neural network to better learn the unconstrained optimal policy. Under the H\"{o}lder continuity in Assumption \ref{assum.regularity}, there exists a deterministic optimal policy $\pi^*$, i.e., $\pi^*(\bx)=A_{j^*(\bx)}$, which is a one-hot vector. In contrast, the output of the Softmax function is a randomized policy (i.e., a vector in the interior of the simplex), unless the output of the neural network is positive infinity. Accordingly, we adopt the Softmax function with a tunable temperature parameter $\tem$ to push the learned policy to a one-hot vector. This idea has given many empirical successes in reinforcement learning \citep{koulouriotis2008reinforcement,kuleshov2014algorithms}. Specifically, we set
\begin{align}
\Pi_{{\rm NN}(\tem)}^{|\cA|} = \{\textrm{Softmax}_\tem(f) ~\textrm{with } f : \cM \mapsto \RR^{|\cA|} \in \cF(L_\Pi, p_\Pi, K_\Pi, \kappa_\Pi, R_\Pi)\},
\label{eq.PiH}
\end{align}
where $[\textrm{Softmax}_\tem(f)]_i = \frac{\exp(f_i/\tem)}{\sum_j \exp(f_j/ \tem)}$.
A small temperature $\tem$ will push the output of $\Pi^{|\cA|}_{{\rm NN}(\tem)}$ towards a one-hot vector, which can better approximate the deterministic policy $\pi^*$.

Our main result is the following regret bound of $\hat{\pi}_{\rm DR}$ (see proof in Section \ref{sec.proof.thm2}).
\begin{theorem}\label{thm.optimal}
Suppose Assumptions \ref{assum.unconf} -- \ref{assum.cM} and \ref{assum.causal} -- \ref{assum.noise} hold. Assume the network structures defined in Lemma \ref{lemma:stage1rate} are used to estimate the $\mu_{A_j}$'s and the $e_{A_j}$'s. If the network parameters of $\Pi_{{\rm NN}(\tem)}^{|\cA|}$ are chosen with
$$ L_{\Pi}=O\left(\log n\right),\  p_{\Pi}=O\big(|\cA|n^{\frac{d}{2\alpha+d}}\big),\  K_{\Pi}=O\big(|\cA|n^{\frac{d}{2\alpha+d}}\log n\big),\ \kappa_{\Pi}=\max\{B,M,\sqrt{d},\tau^2,1/\tem\},\ R_{\Pi}=M,$$
then the following bound holds with probability no less than $1-C_1n^{-\frac{\alpha}{2\alpha+d}}\log^3 n $
\begin{align}
R(\pi^*,\hat{\pi}_{\rm DR})& \leq \underbrace{C|\cA|^{\frac{8\alpha+2d}{2\alpha+d}} n^{-\frac{\alpha }{2\alpha+d}}\log^{2}n \log^{1/2}\left(1/\tem\right)}_{\cT_1} \nonumber \\
&\hspace{3cm}+  \underbrace{\min_{t\in(0,1)}2cMt^q+M|\cA|^2 \exp\left[\left(-Mt + 2n^{-\frac{\alpha}{2\alpha+d}}\right)/\tem\right]}_{\cT_2},
\label{eq.thm2.bound}
\end{align}
where
$C_1$ is an absolute constant, and $C$ depends on $\log D$, $d$, $B$, $M$, $\sigma$, $\tau$, $\eta$, $\alpha$, and the surface area of $\cM$.
\end{theorem}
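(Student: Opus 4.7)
}

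The plan is to introduce an intermediate reference policy $\tilde{\pi}\in\Pi_{{\rm NN}(\tem)}^{|\cA|}$ built from neural-network approximations of the $\mu_{A_j}$'s, then decompose
\begin{align*}
R(\pi^*,\hat{\pi}_{\rm DR})
=\bigl[Q(\pi^*)-Q(\tilde{\pi})\bigr]+\bigl[Q(\tilde{\pi})-Q(\hat{\pi}_{\rm DR})\bigr].
\end{align*}
The first bracket will be handled purely by approximation / noise-condition arguments and will produce the term $\cT_2$; the second bracket will be handled by the doubly robust machinery already developed for Theorem \ref{thm.holder} and will produce $\cT_1$.

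Concretely, I would first use the standard ReLU approximation theorem for H\"older functions on a $d$-dimensional manifold (the same result behind Lemma \ref{lemma:stage1rate}) to pick, for each $j$, a network $\bar{\mu}_{A_j}\in\cF(L_\Pi,p_\Pi,K_\Pi,\kappa_\Pi,R_\Pi)$ with $\|\bar{\mu}_{A_j}-\mu_{A_j}\|_\infty\lesssim n^{-\alpha/(2\alpha+d)}$. Setting $\tilde{\pi}(\bx)=\textrm{Softmax}_\tem(\bar{\mu}_{A_1}(\bx),\ldots,\bar{\mu}_{A_{|\cA|}}(\bx))$ places $\tilde{\pi}$ inside $\Pi_{{\rm NN}(\tem)}^{|\cA|}$ because the $1/\tem$ scaling is absorbed into $\kappa_\Pi$. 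Then for $j\neq j^*(\bx)$,
\begin{align*}
\tilde{\pi}(\bx)_j \le \exp\bigl((\bar{\mu}_{A_j}(\bx)-\bar{\mu}_{A_{j^*(\bx)}}(\bx))/\tem\bigr)
\le \exp\bigl((-Mt+2n^{-\alpha/(2\alpha+d)})/\tem\bigr)
\end{align*}
whenever the margin $\mu_{A_{j^*(\bx)}}(\bx)-\mu_{A_j}(\bx)\ge Mt$. Splitting $\EE_\bx$ into the event where such a margin holds for all $j\neq j^*(\bx)$ and its complement, and invoking Assumption \ref{assum.noise} on the latter (probability $\le ct^q$, reward gap $\le 2M$), yields $Q(\pi^*)-Q(\tilde{\pi})\le 2cMt^q+M|\cA|^2\exp((-Mt+2n^{-\alpha/(2\alpha+d)})/\tem)$, giving $\cT_2$ after minimising over $t\in(0,1)$.

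For the second bracket, I would mimic the argument of Theorem \ref{thm.holder}: insert the empirical doubly robust objective and decompose
\begin{align*}
Q(\tilde{\pi})-Q(\hat{\pi}_{\rm DR})
=\bigl[Q(\tilde{\pi})-\hat Q(\tilde{\pi})\bigr]+\bigl[\hat Q(\tilde{\pi})-\hat Q(\hat{\pi}_{\rm DR})\bigr]+\bigl[\hat Q(\hat{\pi}_{\rm DR})-Q(\hat{\pi}_{\rm DR})\bigr].
\end{align*}
The middle term is non-positive by the definition of $\hat{\pi}_{\rm DR}$, and the other two are controlled by a uniform concentration bound over $\pi\in\Pi_{{\rm NN}(\tem)}^{|\cA|}$. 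The covering-number of $\Pi_{{\rm NN}(\tem)}^{|\cA|}$ differs from that of $\Pi_{\rm NN}^{|\cA|}$ only through the Lipschitz factor $1/\tem$ of $\textrm{Softmax}_\tem$; this costs a $\log(1/\tem)$ factor in the metric entropy, which is where the $\log^{1/2}(1/\tem)$ appears. The remaining product-of-errors term from the Stage 1 regressions is then bounded by Lemma \ref{lemma:stage1rate} via Cauchy--Schwarz exactly as in Corollary \ref{coral.holder}, producing $\cT_1$.

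The main obstacle I anticipate is the joint calibration of the temperature $\tem$ with the network width and the approximation error of $\bar{\mu}_{A_j}$: $\tem$ must be small enough so that the exponential factor in $\cT_2$ really is small (once $Mt\gg n^{-\alpha/(2\alpha+d)}$), yet not so small that the extra $\log^{1/2}(1/\tem)$ blow-up from the enlarged $\Pi_{{\rm NN}(\tem)}^{|\cA|}$-complexity overwhelms $\cT_1$. Carrying through both bounds in a single parameter choice of $L_\Pi,p_\Pi,K_\Pi,\kappa_\Pi$ that is consistent with Lemma \ref{lemma:stage1rate}'s estimators is the delicate bookkeeping step; the noise-condition step and the Stage 2 concentration step are otherwise straightforward extensions of Theorem \ref{thm.holder}'s proof.
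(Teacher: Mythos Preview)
Your proposal is correct and follows essentially the same route as the paper. The only cosmetic difference is that the paper inserts $\hat{\pi}^*=\argmax_{\pi\in\Pi_{{\rm NN}(\tem)}^{|\cA|}}Q(\pi)$ as the intermediate policy and then immediately bounds $Q(\pi^*)-Q(\hat{\pi}^*)\le Q(\pi^*)-Q(\tilde{\pi})$, whereas you go to $\tilde{\pi}$ directly; both lead to the same bias analysis for $\cT_2$ and the same reuse of the Theorem~\ref{thm.holder}/Corollary~\ref{coral.holder} machinery (with the extra $\log^{1/2}(1/\tem)$ from the enlarged $\kappa_\Pi$) for $\cT_1$.
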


The regret $R(\pi^*, \hat{\pi}_{\rm DR})$ consists of two parts: a variance term $\cT_1$ and a bias term $\cT_2$. When the temperature $\tem$ is fixed, the variance $\cT_1$ converges at the rate $n^{-\frac{\alpha}{2\alpha + d}}$, while the bias $\cT_2$ does not vanish. This is because $\hat{\pi}_{\rm DR}$ is a random policy as the output of a softmax function, 
while $\pi^*$ is deterministic as a one-hot vector under Assumption \ref{assum.noise}. Furthermore, $\hat{\pi}_{\rm DR}$ is asymptotically consistent with $\pi^*$ when $\tem \rightarrow 0$. If we choose $\tem = n^{-\frac{2\alpha}{2\alpha + d}}$ and $t = 2n^{-\frac{\alpha}{2\alpha + d}}$, then $\cT_2$ converges at the rate $n^{-\frac{q\alpha}{2\alpha + d}}$ and $\cT_1$ converges at the rate $n^{-\frac{\alpha}{2\alpha + d}}$. We have the following corollary.
\begin{corollary}
\label{corollaryd2}
  Suppose Assumptions \ref{assum.unconf} -- \ref{assum.cM} and \ref{assum.causal} -- \ref{assum.noise} hold. In the setup of Theorem \ref{thm.optimal}, setting $\tem = n^{-\frac{2\alpha}{2\alpha + d}}$ and $t = 2n^{-\frac{\alpha}{2\alpha + d}}$ gives rise to
  \begin{align}
R(\pi^*,\hat{\pi}_{\rm DR}) \leq C|\cA|^{\frac{8\alpha+2d}{2\alpha+d}} n^{-\frac{(1\wedge q)\alpha }{2\alpha+d}}\log^{5/2}n
\label{eq.coro2.bound}
\end{align}
with probability no less than $1-C_1n^{-\frac{\alpha}{2\alpha+d}}\log^3 n $, where
$C_1$ is an absolute constant, and $C$ depends on $\log D$, $d$, $B$, $M$, $\sigma$, $\tau$, $\eta$, $\alpha$, and the surface area of $\cM$.
\end{corollary}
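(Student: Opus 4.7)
\textbf{Proof plan for Corollary \ref{corollaryd2}.} The plan is to substitute the specified choices $\tem = n^{-\frac{2\alpha}{2\alpha+d}}$ and $t = 2n^{-\frac{\alpha}{2\alpha+d}}$ directly into the bound \eqref{eq.thm2.bound} of Theorem \ref{thm.optimal} and track how each factor scales. This is essentially a bookkeeping exercise; the content of Theorem \ref{thm.optimal} does all the heavy lifting, so no new probabilistic or approximation-theoretic argument is needed. I would first verify that the choice of $\tem$ is consistent with the network parameterization in Theorem \ref{thm.optimal} (in particular, $\kappa_\Pi = \max\{B, M, \sqrt{d}, \tau^2, 1/\tem\}$ remains polynomial in $n$), so that the probability statement transfers intact.

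Next I handle the variance term $\cT_1$. With $\tem = n^{-\frac{2\alpha}{2\alpha+d}}$, one has $\log(1/\tem) = \frac{2\alpha}{2\alpha+d}\log n$, hence $\log^{1/2}(1/\tem) = O(\log^{1/2} n)$. Multiplying by the $\log^2 n$ already present in $\cT_1$ gives an overall $\log^{5/2} n$ factor, yielding
\begin{align*}
\cT_1 \;\le\; C|\cA|^{\frac{8\alpha+2d}{2\alpha+d}}\, n^{-\frac{\alpha}{2\alpha+d}} \log^{5/2} n.
\end{align*}

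Then I treat the bias term $\cT_2$ by evaluating the inner expression at the specified $t$, rather than optimizing in $t$. The first piece contributes $2cM(2n^{-\frac{\alpha}{2\alpha+d}})^q = O(n^{-\frac{q\alpha}{2\alpha+d}})$. For the exponential piece, I substitute to get
\begin{align*}
\bigl(-Mt + 2n^{-\frac{\alpha}{2\alpha+d}}\bigr)\big/\tem \;=\; -2(M-1)\, n^{-\frac{\alpha}{2\alpha+d}} \cdot n^{\frac{2\alpha}{2\alpha+d}} \;=\; -2(M-1)\, n^{\frac{\alpha}{2\alpha+d}}.
\end{align*}
Because $M \ge 1$ by the definition \eqref{eq.M}, this exponent is nonpositive; in the generic case $M>1$ it is $-\Theta(n^{\alpha/(2\alpha+d)})$, so $M|\cA|^2 \exp(\cdot)$ decays super-polynomially and is absorbed into the other terms. (The degenerate case $M=1$ yields a constant factor $M|\cA|^2$, which is still dominated by $\cT_1$.) Consequently $\cT_2 = O(n^{-\frac{q\alpha}{2\alpha+d}})$ up to lower-order terms.

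Finally, I combine the two contributions. The slower of the two polynomial rates $n^{-\frac{\alpha}{2\alpha+d}}$ and $n^{-\frac{q\alpha}{2\alpha+d}}$ is $n^{-\frac{(1\wedge q)\alpha}{2\alpha+d}}$, which gives the advertised bound. The only mild subtlety, and arguably the main obstacle in this otherwise mechanical argument, is confirming that when $q<1$ the exponent coming from $\cT_2$ genuinely dominates and that the $|\cA|^{\frac{8\alpha+2d}{2\alpha+d}}$ prefactor from $\cT_1$ still bounds the $|\cA|^2$ factor appearing inside $\cT_2$; since $\frac{8\alpha+2d}{2\alpha+d} \ge 2$ for all $\alpha,d > 0$, this inclusion is immediate, and the stated corollary follows on the same high-probability event as Theorem \ref{thm.optimal}.
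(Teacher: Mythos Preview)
Your approach matches the paper's: the corollary is stated immediately after the discussion paragraph following Theorem \ref{thm.optimal}, and that discussion amounts to exactly the substitution-and-simplification you describe, with no separate formal proof given. One point to correct: in the degenerate case $M=1$, the exponential piece of $\cT_2$ equals $M|\cA|^2$, a constant in $n$, which is \emph{not} dominated by $\cT_1 = O(n^{-\alpha/(2\alpha+d)}\log^{5/2}n) \to 0$; your parenthetical resolution of that case is therefore incorrect, though the paper itself does not address this edge case either (and since $M=\max\{1,M_1,2M_2,-\log\eta\}$ by \eqref{eq.M}, one generically has $M>1$).
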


\vspace{-0.1in}
\subsection{Policy Learning with Continuous Actions}
\label{subseccontinuousaction}

Our analysis can be extended to the continuous-action scenario. For simplicity, we let the action space be a unit interval, i.e., $\cA = [0, 1]$. In such a continuous-action scenario, a policy $\pi(\bx, \cdot)$, either randomized or deterministic, is a probability distribution on $[0,1]$ for each covariate $\bx\in\cM$. The expected reward of the policy $\pi$ is defined as
\begin{align}
  Q(\pi) = \int_{\cM}\int_0^1 \mu(\bx,A)\pi(\bx,A)dAd\PP(\bx), \label{eq:continuousQ}
\end{align}
where $\PP$ is the marginal distribution of covariate $\bx$.
%

As mentioned in Section \ref{sec:cts-action}, we tackle the continuous-action scenario using a discretization technique on the action space. This is motivated by practical applications where continuous objects are often quantized.
The action space $\cA$ is uniformly partitioned into $V$ sub-intervals $I_j = [(j-1)/V, j/V]$ for $j = 1, \dots, V$, where $V$ is to be determined in Theorem \ref{thm:continuous}. The discretized version of the propensity score and the expected reward on $I_j$ are defined in \eqref{eq:discretized_score_reward}.

We also consider discretized policies on $\cA$. In particular, we identify all the actions belonging to a single sub-interval $I_j$ as its midpoint $A_j = \frac{2j-1}{2V}$. A discretized policy is defined as
\begin{align}\label{eq:discretizedpolicy}
\pi^{\rm (D)}(\bx, A) = \sum_{j=1}^V p_j(\bx) \delta_{A_j}(A),
\end{align}
where $\delta_{A_j}$ is the Dirac delta function at $A_j$ and $p_j(\bx)$ denotes the probability of choosing action $A_j$, which satisifes $\sum_{j=1}^V p_j(\bx) = 1$. In fact, $\pi^{\rm (D)}(\bx, \cdot)$ can be interpreted as a vector in the $V$-dimensional simplex, since it is only supported on $V$ discretized actions. For simplicity, we denote vector $\pi^{\rm (D)}(\bx) = [p_1(\bx), \dots, p_V(\bx)]^\top$ with $p_j(\bx)$ representing the probability of choosing the action $A_j$, as an equivalent notation of $\pi^{\rm (D)}(\bx, \cdot)$.

For the discretized policy in \eqref{eq:discretizedpolicy}, the discretized expected reward is defined as
\begin{align}\label{eq:discretizedQ}
Q^{\rm (D)}(\pi^{\rm (D)})= \int_{\cM} \inner{[\mu_{I_1}(\bx),\dots,\mu_{I_V}(\bx)]^{\top}}{\pi^{\rm (D)}(\bx)}d\PP(\bx).
\end{align}
We observe the analogy between \eqref{eq:discretizedQ} and \eqref{eq.Qmu} --- the discrete conditional reward $\mu_{A_j}$ is replaced by the discretized conditional reward $\mu_{I_j}$, and the number of discrete actions $|\cA|$  becomes the number of discretized actions $V$. On the other hand, the expected reward of a discretized policy is
\begin{align}
Q(\pi^{\rm (D)}) & = \int_{\cM} \int_0^1 \mu(\bx, A) \sum_{j=1}^V \pi_j(\bx) \delta_{A_j}(A) dA d\PP(\bx) \notag \\
& = \int_{\cM} \inner{[\mu(\bx, A_1), \dots, \mu(\bx, A_V)]^\top}{\pi^{\rm (D)}(\bx)} d\PP(\bx). \label{eq:piQreward}
\end{align}

The following lemma shows that if the $\mu(\bx,A)$ is Lipschitz in $A$ uniformly for any $\bx \in \cM$, $Q^{\rm (D)}(\pi^{\rm (D)})$ is close to $Q(\pi^{\rm (D)})$ when $V$ is large (see proof in Appendix \ref{proof.diserror}).
\begin{lemma}\label{lem.diserror}
  Assume there exists a constant $L_{\mu}>0$ such that
  $$
  \sup_{\bx\in \cM} |\mu(\bx,A)-\mu(\bx,\tilde{A})|\leq L_{\mu}|A-\tilde{A}| \quad \textrm{for any} \quad A,\tilde{A}\in [0,1].
  $$
  Then for any discretized policy $\pi^{\rm (D)}$ given in \eqref{eq:discretizedpolicy}, we have
  \begin{align}
    |Q(\pi^{\rm (D)})-Q^{\rm (D)}(\pi^{\rm (D)})|\leq L_{\mu}/V.\notag
  \end{align}
\end{lemma}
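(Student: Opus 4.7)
The plan is to bound the difference between the continuous-action expected reward and the discretized expected reward pointwise for each covariate $\bx$, and then integrate out $\bx$. By the definitions in \eqref{eq:discretizedQ} and \eqref{eq:piQreward}, both functionals are linear in $\pi^{\rm (D)}(\bx)$, so writing the integrand as an inner product, their difference reduces to
\[
Q(\pi^{\rm (D)})-Q^{\rm (D)}(\pi^{\rm (D)})=\int_{\cM}\sum_{j=1}^{V} p_j(\bx)\bigl[\mu(\bx,A_j)-\mu_{I_j}(\bx)\bigr]\,d\PP(\bx).
\]
So the key quantity to control is $|\mu(\bx,A_j)-\mu_{I_j}(\bx)|$ uniformly in $\bx \in \cM$ and $j \in \{1,\dots,V\}$.

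The next step is to rewrite $\mu_{I_j}(\bx)$ as a weighted average of $\mu(\bx,\cdot)$ over $I_j$. Starting from \eqref{eq:discretized_score_reward} and using Assumption \ref{assum.unconf} (so that conditioning on $\bx$ allows us to integrate over the conditional density of $a$ given $\bx$, i.e.\ $e(\bx,A)$), I would write
\[
\mu_{I_j}(\bx)=\frac{\int_{I_j}\mu(\bx,A)\,e(\bx,A)\,dA}{\int_{I_j}e(\bx,A)\,dA}.
\]
This exhibits $\mu_{I_j}(\bx)$ as a weighted average of $\mu(\bx,A)$ for $A\in I_j$ with weights $e(\bx,A)\geq 0$ integrating to $e_{I_j}(\bx)$.

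Now I apply the Lipschitz hypothesis. Since $A_j=(2j-1)/(2V)$ is the midpoint of $I_j$, we have $|A-A_j|\leq 1/(2V)$ for every $A\in I_j$, hence $|\mu(\bx,A)-\mu(\bx,A_j)|\leq L_\mu/(2V)$ uniformly. Using the averaging representation,
\[
\bigl|\mu(\bx,A_j)-\mu_{I_j}(\bx)\bigr|=\left|\frac{\int_{I_j}[\mu(\bx,A_j)-\mu(\bx,A)]e(\bx,A)\,dA}{\int_{I_j}e(\bx,A)\,dA}\right|\leq \frac{L_\mu}{2V}.
\]
Substituting this back into the display for $Q(\pi^{\rm (D)})-Q^{\rm (D)}(\pi^{\rm (D)})$, using $p_j(\bx)\geq 0$, $\sum_j p_j(\bx)=1$, and the fact that $\PP$ is a probability measure, gives $|Q(\pi^{\rm (D)})-Q^{\rm (D)}(\pi^{\rm (D)})|\leq L_\mu/(2V)\leq L_\mu/V$, as claimed.

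There is no serious obstacle here; the only mildly delicate point is justifying the weighted-average identity for $\mu_{I_j}(\bx)$, which rests on unconfoundedness and Fubini applied to $\EE[Y(\bx,a)\mathds{1}\{a\in I_j\}\mid \bx]$. Everything else is a one-line Lipschitz estimate followed by a convex-combination argument. The bound is uniform in $\pi^{\rm (D)}$ because the final step uses only $\sum_j p_j(\bx)=1$, not any regularity of the policy itself.
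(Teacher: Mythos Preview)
Your proposal is correct and follows essentially the same approach as the paper's proof: write $Q(\pi^{\rm (D)})-Q^{\rm (D)}(\pi^{\rm (D)})$ as the integral of $\sum_j p_j(\bx)[\mu(\bx,A_j)-\mu_{I_j}(\bx)]$, bound $|\mu(\bx,A_j)-\mu_{I_j}(\bx)|$ via the Lipschitz hypothesis, and conclude using $\sum_j p_j(\bx)=1$. Your argument is in fact more careful than the paper's terse version---you explicitly justify the weighted-average representation $\mu_{I_j}(\bx)=\int_{I_j}\mu(\bx,A)e(\bx,A)\,dA/\int_{I_j}e(\bx,A)\,dA$ and exploit that $A_j$ is the midpoint to get the sharper intermediate bound $L_\mu/(2V)$ before relaxing to $L_\mu/V$.
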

We remark a key difference between \eqref{eq:discretizedQ} and \eqref{eq:piQreward}. To evaluate \eqref{eq:piQreward}, one needs to accurately estimate the $\mu(\bx, A_j)$'s, which requires the action $A_j$ to be repeatedly observed. However, this is prohibitive in the continuous-action scenario, since an action $A_j$ is observed with probability $0$. In contrast, \eqref{eq:discretizedQ} relies on the average expected reward on a sub-interval, which can be estimated using standard nonparametric methods in the following Section \ref{sec:continuousmethod}. Moreover, thanks to Lemma \ref{lem.diserror}, we can well approximate $Q(\pi^{\rm (D)})$ by $Q^{\rm (D)}(\pi^{\rm (D)})$ up to a small discretization error. This is crucial to establish the regret bound in Theorem \ref{thm:continuous}.

\vspace{-0.05in}
\subsubsection{Doubly Robust Policy Learning with Continuous Actions}\label{sec:continuousmethod}
After discretization, we can apply the doubly robust framework to learn an optimal discretized policy.

In the first stage, we estimate the $\mu_{I_j}$'s and $e_{I_j}$'s. In the sequel, we use the plain font $a_i$ to denote the observed action of the $i$-th sample. The bold font $\ba_i \in \{0, 1\}^V$ denotes the one-hot vector with the $j$-th element being $1$, if $a_i \in I_j$. Similar to \eqref{eq:hat_mu} -- \eqref{eq:hat_e} in the finite-action case, we obtain estimators of the $\mu_{I_j}$'s and $e_{I_j}$'s by minimizing the following empirical risks:
\begin{align}\label{eq:hatmu.continuous0}
\hat{\mu}_{I_j}(\bx) & = \argmin_{f \in \cF_{\rm NN}}~ \frac{1}{n_{I_j}} \sum_{i=1}^{n_1} (y_i - f(\bx_i))^2 \mathds{1}\{a_i \in I_j\} \quad \textrm{with}\quad n_{I_j} = \sum_{i=1}^{n_1} \mathds{1}\{a_i \in I_j\},
\end{align}
and
\begin{align}
&\hat{g}(\bx)  = \argmin_{g \in \cG_{\rm NN}}~ \frac{1}{n_1} \sum_{i=1}^{n_1} - \inner{[g(\bx_i)^\top, 1]^\top}{\ba_i} + \log \Big(1 + \sum_{j=1}^{V-1} \exp([g(\bx_i)]_j)\Big), \label{eq.hatg.continuous0} \\
&\hat{e}_{I_j}(\bx)  = \frac{\exp([\hat{g}(\bx)]_j)}{1 + \sum_{j=1}^{V-1} \exp([\hat{g}(\bx)]_j)}~ \textrm{for~} j \leq V-1, ~ \textrm{and} ~~ \hat{e}_{I_{V}}(\bx) = \frac{1}{1 + \sum_{j=1}^{V-1} \exp([\hat{g}(\bx)]_j)}, \label{eq:hate.continuous0}
\end{align}
where $\cF_{\rm NN}:\cM\rightarrow \RR$ and $\cG_{\rm NN}:\cM\rightarrow \RR^{V-1}$ are neural networks.

In the second stage, we learn an optimal discretized policy using the $\hat{\mu}_{I_j}$'s and $\hat{e}_{I_j}$'s. Recall that we define a mapping $I(a_i):=I_j$ for $a_i\in I_j$ to index which sub-interval $a_i$ belongs to. We use neural networks to learn a discretized policy by maximizing the doubly robust empirical reward in \eqref{eq:doublyrobustreward} and \eqref{eq:picdr}, i.e.,
\begin{align}
\hat{\pi}_{\text{ C-DR}}  = \argmax_{\pi \in \Pi_{{\rm NN}(\tem)}^{V}}~ \hat{Q}^{\rm (D)}(\pi),
\label{eq.continu.Q}
\end{align}
where $\Pi^V_{{\rm NN}(\tem)}$ represents the proper network class in \eqref{eq:doublyrobustreward}, which is defined as \eqref{eq.PiH}.
We emphasize that $\hat{\pi}_{\textrm{C-DR}}$ is a discretized policy and the output $\hat{\pi}_{\textrm{C-DR}}(\bx)$ is a $V$-dimensional vector in the simplex.
\vspace{-0.05in}
\subsubsection{Regret Bound of Learned Discretized Policy}
We begin with several assumptions, which are the continuous counterparts of Assumptions \ref{assum.causal} -- \ref{assum.noise} in the finite-action scenario.
\renewcommand{\theassumption}{B.\arabic{assumption}}
\setcounter{assumption}{2}
\begin{assumption}\label{assum.causal.continuous}
The propensity score and random reward satisfy:

\begin{itemize}

\item[(i)]  \emph{Overlap:} $e(\bx, A) \geq \eta $ for any $A \in \cA$, where $\eta> 0$ is a constant.

\item[(ii)] \emph{Bounded Reward:} $|Y(\bx, A)| \leq M_1$ for any $(\bx, A) \in \cM \times [0, 1]$, where $M_1>0$ is a constant.
\end{itemize}
\end{assumption}
\begin{assumption}\label{assum.regularity.continuous}
Given a H\"{o}lder index $\alpha \geq 1$, we have both the expected reward $\mu(\cdot, A) \in \cH^{\alpha}(\cM)$ and the propensity score $e(\cdot, A) \in \cH^{\alpha}(\cM)$ for any fixed action $A$. Moreover, the H\"{o}lder norms of $\mu(\cdot, A)$ and $e(\cdot, A)$ are uniformly bounded for any $A \in [0, 1]$, i.e.,
$$
\sup_{A \in [0, 1]} \norm{\mu(\cdot, A)}_{\cH^\alpha} \leq M_2, \quad \textrm{and} \quad \sup_{A \in [0, 1]} \norm{e(\cdot, A)}_{\cH^\alpha} \leq M_2
$$
 for some constant $M_2 > 0$. Furthermore, there exists a constant $M_3 > 0$ such that 
 $$
 \sup_{\bx \in \cM} |\mu(\bx, A_1) - \mu(\bx, A_2)| \leq M_3 |A_1 - A_2| \quad \textrm{for any} ~ A_1, A_2 \in [0, 1].
 $$There also exists a constant $M_4 > 0$ such that
$$
\norm{\log \left(\int_{I_1} e(\bx, A)dA/ \int_{I_2} e(\bx, A)dA\right)}_{\cH^{\alpha}} \leq M_4 \quad \textrm{for any intervals} ~ I_1,I_2\subset [0,1] \textrm{ of the same length}.
$$
\end{assumption}
Let $p_I(\bx)=\int_{I} e(\bx,A) dA$ denote the probability of choosing actions in $I$ given $\bx$. In Assumption \ref{assum.regularity.continuous}, the condition $e(\cdot,A)\in \cH^{\alpha}(\cM)$ for any given $A$ implies $p_I \in \cH^{\alpha}(\cM)$ (see Lemma \ref{lem.holder.integral}). Combining this and Assumption \ref{assum.causal.continuous} {\it (ii)} of $e(\bx,A)\geq \eta>0$, one deduces that $\log (p_{I_1}(\bx)/p_{I_2}(\bx))$ belongs to $\cH^{\alpha}(\cM)$ with a bounded H\"{o}lder norm. See Lemmas \ref{lem.holder.frac} -- \ref{lem.holder.log} in Appendix \ref{appendix:helper} for a formal justification. For simplicity, we denote
\begin{align}
M=\max\{1, M_1,2M_2,M_3,M_4,-\log \eta\}.
\label{eq.MC}
\end{align}

\begin{assumption}[Continuous Noise Condition]\label{assum.noise.continuous}

The following two conditions hold:

\begin{itemize}
\item[(i)] For each fixed $\bx \in \cM$, $\mu(\bx, A)$ is unimodal with respect to $A$: there exists a unique optimal action $A^*(\bx) \in \cA$ such that $\mu(\bx, A^*(\bx)) = \max_{A \in \cA} \mu(\bx, A)$.

\item[(ii)]There exist constants $q \geq 1$ and $c>0$, such that 
\begin{align*}
\PP\left[\mu(\bx,A^*(\bx)) - \mu(\bx, A) \le Mt ~\textrm{given}~ |A - A^*(\bx)| \ge \gamma \right] \le ct^q(1-\gamma),
\end{align*}
holds for any $t \in (0, 1)$ and any $\gamma \in (0, 1)$,
where $\PP$ denotes the marginal distribution on $\bx$.
\end{itemize}
\end{assumption}


\cms{parse Assumption B.5 in plain language and compare with the finite-action scenario} Assumption \ref{assum.noise.continuous} generalizes the noise condition for finite actions in Assumption \ref{assum.noise}, to the continuous-action scenario. Assmption \ref{assum.noise.continuous} {\it (i)} assures the uniqueness of the optimal action given each covariate. Assumption \ref{assum.noise.continuous} {\it (ii)} means that, with high probability, there is a gap between the reward at the optimal action and the rewards in its neighbors.


We establish a regret bound of $\hat{\pi}_{\text{C-DR}}$ against the unconstrained optimal (deterministic) policy
$$
  \pi^*_{\rm C} = \argmax_{\pi} ~Q(\pi)
$$
where $Q(\pi)$ is defined in (\ref{eq:continuousQ}). Due to Assumption \ref{assum.noise.continuous} {\it (i)}, $\pi^*_{\rm C}$ is deterministic with $\pi^*_{\rm C} (\bx, \cdot) = \delta_{A^*(\bx)}(\cdot)$.
The following theorem establishes the regret bound of $\hat{\pi}_{\textrm{C-DR}}$ against $\pi^*_{\rm C}$.
%
%
\begin{theorem}\label{thm:continuous}
Suppose Assumptions \ref{assum.unconf} -- \ref{assum.cM} and \ref{assum.causal.continuous} -- \ref{assum.noise.continuous} hold. Set $\cF_{\rm NN}=\cF(L_1,p_1,K_1,\kappa_1,R_1)$ in \eqref{eq:hatmu.continuous0} and $\cG_{\rm NN}=\cF(L_2,p_2,K_2,\kappa_2,R_2)$ in \eqref{eq.hatg.continuous0} with
\begin{equation}
  \begin{aligned}
    &L_{1}=O(\log n),\  p_{1}=O\left(\eta^{\frac{d}{2\alpha+d}}n^{\frac{12\alpha d+7d^2}{7(2\alpha+d)^2}}\right),\  K_{1}=O\left(\eta^{\frac{d}{2\alpha+d}}n^{\frac{12\alpha d+7d^2}{7(2\alpha+d)^2}}\log n\right),\\
     &\hspace{1.1in} \kappa_{1}=\max\{B,M,\sqrt{d},\tau^2\},\  R_{1}=M,
  \end{aligned}
  \label{eq.mupara.continuous}
\end{equation}
and
\begin{equation}
\begin{aligned}
    L_{2}=O(\log n),\  p_{2}=O\left(n^{\frac{10\alpha d+7d^2}{7(2\alpha+d)^2}}\right),\  K_{2}=O\left(n^{\frac{10\alpha d+7d^2}{7(2\alpha+d)^2}}\log n\right),\ \kappa_{2}=\max\{B,M,\sqrt{d},\tau^2\},\  R_{2}=M.
\end{aligned}
\label{eq.epara.continuous}
\end{equation}
If the network parameters in $\Pi_{{\rm NN}(\tem)}^{V}$ are chosen as
\begin{align}
 L_{\Pi}=O(\log n),\  p_{\Pi}=O\left(n^{\frac{2\alpha+7d}{7(2\alpha+d)}}\right),\  K_{\Pi}=O\left(n^{\frac{2\alpha+7d}{7(2\alpha+d)}}\log n\right),\ \kappa_{\Pi}=\max\{B,M,\sqrt{d},\tau^2\},\  R_{\Pi}=M,
 \label{eq.thm3.para}
\end{align}
and we set $V = n^{4\alpha/7(2\alpha + d)}$, the following bound holds with probability at least $1-C_1n^{-\frac{2\alpha^2+4\alpha d}{7(2\alpha+d)^2}}\log^3 n$
\begin{align}
  R(\pi^*_{\rm C},\hat{\pi}_{\rm C-DR})&\leq Cn^{-\frac{2\alpha}{7(2\alpha+d)}}\log^{2} n \log^{1/2} 1/\tem \nonumber \\
  &\quad+\left[2cMt^q+Mn^{\frac{4\alpha}{7(2\alpha+d)}} \exp\left(-\left(Mt-4Mn^{-\frac{2\alpha}{7(2\alpha+d)}}\right)/\tem\right)\right]
\end{align}
for any $t\in \left(2(1+1/M)n^{-\frac{2\alpha}{7(2\alpha+d)}},1\right)$, where $C_1$ is an absolute constant and $C$ depends on $\log D$, $d$, $B$, $M,$ $\sigma$, $\tau$, $\eta$, $\alpha,$ and the surface area of $\cM$.
\end{theorem}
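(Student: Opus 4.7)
The plan is to reduce the continuous-action regret to the already-analyzed finite-action setting via discretization, and then optimize the discretization level $V$.

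First I would decompose the regret into three parts by inserting two intermediate quantities: the expected reward of the best discretized policy $\pi^{*,\rm (D)}_{\rm C} = \arg\max_{\pi^{\rm (D)}} Q(\pi^{\rm (D)})$ and its discretized reward $Q^{\rm (D)}(\pi^{*,\rm (D)}_{\rm C})$. Concretely,
\begin{align*}
R(\pi^*_{\rm C},\hat{\pi}_{\rm C-DR}) &= \underbrace{\bigl(Q(\pi^*_{\rm C}) - Q(\pi^{*,\rm (D)}_{\rm C})\bigr)}_{\text{discretization bias of the oracle}} + \underbrace{\bigl(Q(\pi^{*,\rm (D)}_{\rm C}) - Q(\hat{\pi}_{\rm C-DR})\bigr)}_{\text{regret inside discretized class}}.
\end{align*}
The first term is controlled by Lemma \ref{lem.diserror} together with the fact that the best deterministic policy $\pi^*_{\rm C}$ can be approximated by assigning its value to the midpoint of the containing sub-interval, giving an $O(M/V)$ contribution using the Lipschitz bound $M_3$ from Assumption \ref{assum.regularity.continuous}. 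The second term is split further by inserting $Q^{\rm (D)}(\pi^{*,\rm (D)}_{\rm C})$ and $Q^{\rm (D)}(\hat{\pi}_{\rm C-DR})$, so that two of the four pieces become discretization errors between $Q$ and $Q^{\rm (D)}$ applied to discretized policies (each $\le L_\mu/V = O(M/V)$ by Lemma \ref{lem.diserror}), and the remaining piece is $Q^{\rm (D)}(\pi^{*,\rm (D)}_{\rm C}) - Q^{\rm (D)}(\hat{\pi}_{\rm C-DR})$, which is a regret in the $V$-action discretized problem of exactly the form analyzed in Theorem \ref{thm.optimal}.

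Next I would instantiate the finite-action machinery on the discretized problem. The discretized rewards $\mu_{I_j}$ and scores $e_{I_j}$ satisfy the H\"older regularity of Assumptions \ref{assum.causal} and \ref{assum.regularity}: Assumption \ref{assum.causal.continuous} {\it (i)} gives $e_{I_j}\ge \eta/V$-type lower bounds aggregated to $e_{I_j}\ge \eta\cdot(1/V)$, and Assumption \ref{assum.regularity.continuous} together with Lemmas \ref{lem.holder.frac}--\ref{lem.holder.log} gives uniform H\"older control of $\log(e_{I_j}/e_{I_k})$. Lemma \ref{lemma:stage1rate} (with the sample size for $\hat{\mu}_{I_j}$ being roughly $n\eta/V$ and the network sizes in \eqref{eq.mupara.continuous}--\eqref{eq.epara.continuous}) then yields, with high probability,
\begin{align*}
\|\hat{\mu}_{I_j}-\mu_{I_j}\|_{L^2}^2 \lesssim (n\eta/V)^{-\frac{2\alpha}{2\alpha+d}}\log^3 n, \qquad \|\hat{e}_{I_j}-e_{I_j}\|_{L^2}^2 \lesssim V^{\frac{4\alpha}{2\alpha+d}}n^{-\frac{2\alpha}{2\alpha+d}}\log^3 n.
\end{align*}
Applying the doubly-robust argument that underlies Theorem \ref{thm.optimal} (with $|\cA|$ replaced by $V$) then produces a variance term $\cT_1$ of order $V^{\cdot}\cdot(n\eta/V)^{-\alpha/(2\alpha+d)}\cdot V^{\alpha/(2\alpha+d)}n^{-\alpha/(2\alpha+d)}\log^{2}n\log^{1/2}(1/\tem)$, and a bias term $\cT_2$ of the same form as in \eqref{eq.thm2.bound} but with the noise exponent supplied by the continuous noise condition (Assumption \ref{assum.noise.continuous}): a covariate $\bx$ with a $\gamma$-neighborhood of $A^*(\bx)$ contributes to the bias only when the softmax logit gap is $\le Mt$, whose probability is bounded by $ct^q(1-\gamma)$, leading to a term $2cMt^q + M V\exp(-(Mt - O(n^{-\frac{2\alpha}{7(2\alpha+d)}}))/\tem)$ after accounting for both the softmax temperature push and the sup-norm approximation error of $\hat{\mu}_{I_j}$ to $\mu_{I_j}$ on the relevant set.

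Finally I would balance $V$ against the estimation error. Setting $V=n^{4\alpha/7(2\alpha+d)}$ equates the discretization bias $O(1/V)$ with the doubly-robust variance $V^{\alpha/(2\alpha+d)}(n/V)^{-\alpha/(2\alpha+d)}n^{-\alpha/(2\alpha+d)}$, and a short exponent calculation produces the rate $n^{-2\alpha/[7(2\alpha+d)]}$ reported in the theorem; the network sizes \eqref{eq.mupara.continuous}--\eqref{eq.thm3.para} are the ones induced by Lemma \ref{lemma:stage1rate} at these effective sample sizes. The success-probability bound $1-C_1 n^{-\frac{2\alpha^2+4\alpha d}{7(2\alpha+d)^2}}\log^3 n$ comes from a union bound over the $V$ sub-intervals applied to the Stage-1 high-probability events of Lemma \ref{lemma:stage1rate}.

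The main obstacle I anticipate is the simultaneous control of the three coupled knobs $V$, the network size, and the temperature $\tem$: the regret through the continuous noise condition combines a power of $t$ with an exponential of $-Mt/\tem$ that itself contains a sup-norm-type estimation error for $\hat{\mu}$, and making this argument rigorous requires the uniform (not just $L^2$) convergence of $\hat{\mu}_{I_j}$ on the high-probability good set, together with ensuring that the Lipschitz comparison between $\mu(\cdot,A^*(\bx))$ and $\mu_{I(A^*(\bx))}(\cdot)$ enjoyed through Assumption \ref{assum.regularity.continuous} does not inflate the bias term beyond $O(n^{-2\alpha/[7(2\alpha+d)]})$.
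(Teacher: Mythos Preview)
Your overall strategy---discretize, reduce to the finite-action analysis, then balance $V$---is the same as the paper's, and the ingredients you list (Lemma \ref{lem.diserror}, Lemma \ref{lemma:stage1rate} with $|\cA|\mapsto V$ and overlap $\eta\mapsto\eta/V$, the doubly-robust argument from Theorems \ref{thm.holder}--\ref{thm.optimal}) are the right ones. Two points of difference worth noting:

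\textbf{Decomposition pivot.} The paper pivots at $\hat{\pi}^*=\argmax_{\pi\in\Pi^V_{{\rm NN}(\tem)}} Q^{\rm (D)}(\pi)$ rather than at your $\pi^{*,\rm (D)}_{\rm C}=\argmax_{\pi^{\rm (D)}}Q(\pi^{\rm (D)})$, splitting into $Q(\pi^*_{\rm C})-Q^{\rm (D)}(\hat{\pi}^*)$, $Q^{\rm (D)}(\hat{\pi}^*)-Q^{\rm (D)}(\hat{\pi}_{\rm C-DR})$, and $Q^{\rm (D)}(\hat{\pi}_{\rm C-DR})-Q(\hat{\pi}_{\rm C-DR})$. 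This places the noise-condition work entirely inside the first (``bias'') term and keeps the second term as a pure variance comparison within $\Pi^V_{{\rm NN}(\tem)}$, exactly mirroring $\rm (II_1)$. Your route reaches the same place after one more insertion, but the paper's pivot is slightly cleaner because $\hat{\pi}^*$ already lives in the network class.

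\textbf{Your anticipated obstacle is a non-issue.} You worry that the bias analysis requires uniform (sup-norm) convergence of the \emph{estimated} $\hat{\mu}_{I_j}$. It does not. In the paper, the comparator used in the bias term is a \emph{fixed} network $\tilde{\pi}=\Softmax_\tem(\tilde{\mu}_{I_1},\dots,\tilde{\mu}_{I_V})$, where each $\tilde{\mu}_{I_j}$ is the approximant guaranteed by the approximation theorem (Chen et al.) with $\|\tilde{\mu}_{I_j}-\mu_{I_j}\|_\infty\le\varepsilon=n^{-\alpha/(2\alpha+d)}$. The $\varepsilon$ that shows up inside the exponential $\exp(-(Mt-3M/V-2\varepsilon)/\tem)$ is this deterministic approximation error, not the statistical error of $\hat{\mu}_{I_j}$; the latter enters only through the $L^2$-based product term in the variance piece. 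So no sup-norm estimation bound is needed. (A smaller point: the bias coefficient is $MV^2$, not $MV$, because one factor of $V$ comes from $\|\bmu\|_1\le VM$ and another from $\|\Xi\|_\infty\le (V-1)\exp(\cdots)$.)
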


\cms{comment on the regret bound} Theorem \ref{thm:continuous} is proved in Section \ref{sec.proof.thm3}. Similar to Equation \eqref{eq.thm2.bound} in Theorem \ref{thm.optimal}, the bound in Theorem \ref{thm:continuous} contains a variance term (the first term) and a bias term (the second term). The variance converges in the rate of $n^{-\frac{2\alpha}{7(2\alpha+d)}}$. For a fixed $\tem$ , the bias term does not vanish as $n$ goes to infinity. If we set $\tem=n^{-\frac{2\alpha}{7(2\alpha+d)}}$ and $t=4(1+M+1/M)n^{-\frac{\alpha}{7(2\alpha+d)}}$, the bias term converges in the rate of $n^{-\frac{2q\alpha}{7(2\alpha+d)}}$. Under this choice, the behavior of $R(\pi^*_{\rm C},\hat{\pi}_{\rm C-DR})$ is summarized in the following corollary.
\begin{corollary}\label{coro.continuous}
  Suppose Assumptions \ref{assum.unconf} -- \ref{assum.cM} and \ref{assum.causal.continuous} -- \ref{assum.noise.continuous} hold. In the setup of Theorem \ref{thm:continuous}, setting $\tem=n^{-\frac{2\alpha}{7(2\alpha+d)}}$ and $t=4(1+M+1/M)n^{-\frac{\alpha}{7(2\alpha+d)}}$ gives rise to
  \begin{align}
  R(\pi^*_{\rm C},\hat{\pi}_{\rm C-DR})&\leq Cn^{-\frac{2\alpha}{7(2(q\wedge1)\alpha+d)}}\log^{5/2} n
\end{align}
with probability no less than $1-C_1n^{-\frac{2\alpha^2+4\alpha d}{7(2\alpha+d)^2}}\log^3 n$, where $C_1$ is an absolute constant and $C$ depends on $\log D\ ,\ d,\ B,\ M,\ \sigma,\ \tau,\ \eta,\ \alpha,$ and the surface area of $\cM$.
\end{corollary}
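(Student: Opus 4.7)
The plan is to specialize Theorem \ref{thm:continuous} by directly substituting the prescribed values $\tem = n^{-\frac{2\alpha}{7(2\alpha+d)}}$ and $t = 4(1+M+1/M)n^{-\frac{\alpha}{7(2\alpha+d)}}$ into its right-hand side, and then simplify each term. The argument is essentially bookkeeping: no new estimates need to be developed beyond what Theorem \ref{thm:continuous} already provides.

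First, I would check admissibility of the chosen $t$, i.e.\ that $t \in (2(1+1/M)n^{-\frac{2\alpha}{7(2\alpha+d)}},1)$. Since $4(1+M+1/M)\ge 2(1+1/M)$ and $n^{-\frac{\alpha}{7(2\alpha+d)}} \ge n^{-\frac{2\alpha}{7(2\alpha+d)}}$, the lower bound is immediate; the upper bound $t<1$ holds for all $n$ larger than an absolute constant, which can be absorbed into $C$. Next I would simplify the variance-type term $Cn^{-\frac{2\alpha}{7(2\alpha+d)}}\log^{2} n \cdot \log^{1/2}(1/\tem)$. Under the chosen $\tem$, $\log(1/\tem)= \frac{2\alpha}{7(2\alpha+d)}\log n$, so $\log^{1/2}(1/\tem) = O(\log^{1/2} n)$, which turns this contribution into $O\bigl(n^{-\frac{2\alpha}{7(2\alpha+d)}}\log^{5/2} n\bigr)$.

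Then I would handle the two bias contributions. The polynomial bias $2cMt^q$ becomes $2cM[4(1+M+1/M)]^q\, n^{-\frac{q\alpha}{7(2\alpha+d)}}$, which under $q\ge 1$ is bounded (up to a constant depending on $M,c,q$) by $n^{-\frac{(q\wedge 1)\alpha}{7(2\alpha+d)}}$ (the $(q\wedge 1)$ form matches the shape of the analogous Corollary \ref{corollaryd2}). For the exponential bias term $Mn^{\frac{4\alpha}{7(2\alpha+d)}}\exp\!\bigl(-(Mt-4Mn^{-\frac{2\alpha}{7(2\alpha+d)}})/\tem\bigr)$, the key step is to note
\[
Mt-4Mn^{-\frac{2\alpha}{7(2\alpha+d)}} \;=\; 4M(1+M+1/M)n^{-\frac{\alpha}{7(2\alpha+d)}} - 4M n^{-\frac{2\alpha}{7(2\alpha+d)}} \;\gtrsim\; n^{-\frac{\alpha}{7(2\alpha+d)}},
\]
for $n$ large, so that the exponent $(Mt-4Mn^{-\frac{2\alpha}{7(2\alpha+d)}})/\tem \gtrsim n^{\frac{\alpha}{7(2\alpha+d)}}$ grows as a positive power of $n$. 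Consequently the exponential decays faster than any polynomial, and even after multiplying by the prefactor $Mn^{\frac{4\alpha}{7(2\alpha+d)}}$ the whole term is absorbed into the other contributions.

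Finally, I would combine: the variance contributes $O\bigl(n^{-\frac{2\alpha}{7(2\alpha+d)}}\log^{5/2} n\bigr)$, the polynomial bias contributes $O\bigl(n^{-\frac{(q\wedge 1)\alpha}{7(2\alpha+d)}}\bigr)$, and the exponential bias is negligible. Taking the worst of these two rates and rewriting the exponent in the form stated yields the declared bound; the probability guarantee is inherited verbatim from Theorem \ref{thm:continuous}. The main ``obstacle'' — really the only non-mechanical step — is verifying that the leading $Mt$ strictly dominates the correction $4Mn^{-\frac{2\alpha}{7(2\alpha+d)}}$ so that the exponential term is controllable; this is precisely what forces the constant factor $4(1+M+1/M)$ in the choice of $t$.
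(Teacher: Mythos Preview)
Your proposal is correct and follows exactly the approach the paper takes: the paper does not give a separate proof of Corollary \ref{coro.continuous}, but simply indicates (in the paragraph preceding the corollary) that it follows by substituting the stated $\tem$ and $t$ into Theorem \ref{thm:continuous} and simplifying. Your bookkeeping of the variance, polynomial bias, and exponential bias terms---including the observation that the choice of the constant $4(1+M+1/M)$ is what forces $Mt$ to dominate $4Mn^{-\frac{2\alpha}{7(2\alpha+d)}}$ so the exponential term is super-polynomially small---is exactly the intended argument.
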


There are limited theoretical guarantees for causal inference with continuous actions. \citet{kennedy2017non} proposed a doubly robust method to estimate continuous treatment effects. The asymptotic behavior of the method was analyzed while the policy learning problem was not addressed.
To our knowledge, Theorem \ref{thm:continuous} and Corollary \ref{coro.continuous} is the first finite-sample performance guarantee of policy learning with continuous actions.

\section{Proof of Main Results}\label{sec:proofsketch}


We prove our main results in this section, and the lemmas used in this section are proved in the Appendix.
\subsection{Proof of Theorem \ref{thm.holder}}\label{sec.proof.thm1}
\begin{proof}[Proof of Theorem \ref{thm.holder}.]
We denote
\begin{equation}
\hat{\pi}^* = \argmax_{\pi\in\Pi_{\rm NN}^{|\cA|}}~ Q(\pi),
\label{eqhatpistar}
\end{equation} which is the optimal policy given by the neural network class $\Pi_{\rm NN}^{|\cA|}$ defined in (\ref{eq.PiNN}). The regret can be decomposed as
\begin{align}
   R(\pi^*_{\beta}, \hat{\pi}_{\rm DR})=\underbrace{Q(\pi^*_{\beta})-Q(\hat{\pi}^*)}_{\rm (I_1)} + \underbrace{Q(\hat{\pi}^*)-Q(\hat{\pi}_{\rm DR})}_{\rm (II_1)}.
   \label{eq.sketch1.decomposition}
\end{align}
In (\ref{eq.sketch1.decomposition}), ${\rm (I_1)}$ is the approximation error (bias) of the optimal H\"{o}lder policy $\pi^*_{\beta}$ by the neural network class $\Pi_{\rm NN}^{|\cA|}$, and ${\rm (II_1)}$ represents the variance of the estimated policy in $\Pi_{\rm NN}^{|\cA|}$. We next derive the bounds for both terms.

\noindent{\bf $\bullet$ Bounding ${\rm (I_1)}$.}
Recall that $\pi^*_{\beta}$ is the H\"{o}lder continuous optimal policy in $\Pi_{\cH^{\beta}}$.  By defnintion, we can write $\pi^*_{\beta}=\Softmax ([\mu^*_1,\dots,\mu^*_{|\cA|}]^{\top}) \in \RR^{|\cA|}$ where $\mu^*_j\in \cH^{\beta}(\cM)$, for $j=1,\dots,|\cA|$. According to \citet{chen2019efficient}, H\"older functions can be uniformly approximated by a neural network class, if the network parameters are properly chosen. For any $\varepsilon\in(0,1)$ there exists a network architecture $\cF(L,p,K,\kappa,R)$ with
\begin{align}
  L=O\left(\log \frac{1}{\varepsilon}\right), \ p=O\left(\varepsilon^{-\frac{d}{\beta}}\right), \  K=O\left(\varepsilon^{-\frac{d}{\beta}}\log \frac{1}{\varepsilon}\right), \ \kappa=\max\{B,M,\sqrt{d},\tau^2\}, \  R=M,
  \label{eq.sketch1.muPara}
\end{align}
such that for each $\mu^*_j\in \cH^{\beta}(\cM)$, there exists $\tilde{\mu}_j\in \cF(L,p,K,\kappa,R)$ satisfying
\begin{equation}
\|\tilde{\mu}_j-\mu^*_j\|_{\infty}\leq \varepsilon.
\label{eqapprox1}
\end{equation}
The constants hidden in $O(\cdot)$ depend on $\log D$, $d$, $B$, $M$, $\tau$, $\beta$, and the surface area of $\cM$. We denote $\tilde{\pi} = \Softmax([\tilde{\mu}_1,\dots,\tilde{\mu}_{|\cA|}]^\top)\in \RR^{|\cA|}$, which implies $\tilde{\pi}\in \Pi_{\rm NN}^{|\cA|}$ with
\begin{align}
L_{\Pi}=L,\quad p_{\Pi}=|\cA|p,\quad K_{\Pi}=|\cA|K,\quad \kappa_{\Pi}=\kappa,\quad R_{\Pi}=R
\label{eq.sketch1.PiPara}
\end{align}
for $L,p,K,\kappa,R$ defined in (\ref{eq.sketch1.muPara}). Based on \eqref{eqapprox1} and the Lipschitz continuity of the Softmax function, we have
\begin{align*}
  \|\tilde{\pi}-\pi_{\beta}^*\|_{\infty}\leq \varepsilon,
\end{align*}
where $
\|\tilde{\pi}-\pi_{\beta}^*\|_{\infty}=\sup_{\bx\in \cM} \max_j |[\tilde{\pi}(\bx)-\pi_{\beta}^*(\bx)]_j|
$
with $[\tilde{\pi}(\bx)-\pi_{\beta}^*(\bx)]_j$ denoting the $j$-th element of $\tilde{\pi}(\bx)-\pi_{\beta}^*(\bx)$.
Therefore we bound ${\rm I_1}$ as
\begin{align}
{\rm (I_1)}=Q(\pi^*_{\beta})-Q(\hat{\pi}^*)\leq Q(\pi^*_{\beta})-Q(\tilde{\pi})=\EE\left[\inner{[\mu_{A_1}(\bx),\dots,\mu_{A_{|\cA|}}(\bx)]^\top}{\pi^*(\bx)-\tilde{\pi}(\bx)}\right]\leq M|\cA|\varepsilon.
\label{eqboundi1}
\end{align}

\noindent {\bf $\bullet$  Bounding ${\rm (II_1)}$.} \cms{high level picture} We introduce an intermediate reward function $\tilde{Q}$ to decompose the variance term ${\rm (II_1)}$. Define
\begin{align}
\tilde{Q} (\pi) = \frac{1}{n_2} \sum_{i=n_1+1}^n \left\langle \tilde{\Gamma}_i, \pi(\bx_i) \right\rangle \quad \textrm{with} \quad
\tilde{\Gamma}_i = \frac{y_i-\mu_{\ba_i}(\bx_i)}{e_{\ba_i}(\bx_i)}\cdot \ba_i+[\mu_{A_1}(\bx_i),\dots,\mu_{A_{|\cA|}}(\bx_i)]^{\top}\in \RR^{|\cA|}.
\label{eq.sketch1.tGamma}
\end{align}
Note that $\tilde{Q}$ has the same form as $\hat{Q}$ while the estimated propensity score $\hat{e}_{\ba}$ and expected reward $\hat{\mu}_{\ba}$ are replaced by their ground truth $e_{\ba}$ and $\mu_{\ba}$, respectively.

We decompose ${\rm (II_1)}$ as
\begin{align}
  {\rm (II_1)}&=Q(\hat{\pi}^*)-Q(\hat{\pi}_{\rm DR})
  \nonumber\\
  & = \hat{Q}(\hat\pi^*) - \hat{Q}(\hat{\pi}_{\rm DR}) + Q(\hat\pi^*) - \hat{Q}(\hat\pi^*) + \hat{Q}(\hat{\pi}_{\rm DR}) - Q(\hat{\pi}_{\rm DR}) \nonumber\\
  & \leq Q(\hat{\pi}^*) - Q(\hat{\pi}_{\rm DR}) + \hat{Q}(\hat{\pi}_{\rm DR}) - \hat{Q}(\hat{\pi}^*) \nonumber\\
  & \leq \sup_{\pi_1, \pi_2 \in \Pi_{\rm NN}^{|\cA|}} Q(\pi_1) - Q(\pi_2) - \left( \tilde{Q}(\pi_1) - \tilde{Q}(\pi_2)\right)+ \sup_{\pi_1, \pi_2 \in \Pi_{\rm NN}^{|\cA|}} \tilde{Q}(\pi_1) - \tilde{Q}(\pi_2) - \left(\hat{Q}(\pi_1) - \hat{Q}(\pi_2) \right) \nonumber\\
& \leq \underbrace{\sup_{\pi_1, \pi_2 \in \Pi_{\rm NN}^{|\cA|}} \Delta(\pi_1,\pi_2) - \tilde{\Delta}(\pi_1,\pi_2)}_{\cE_1} + \underbrace{\sup_{\pi_1, \pi_2 \in \Pi_{\rm NN}^{|\cA|}} \tilde{\Delta}(\pi_1,\pi_2) - \hat{\Delta}(\pi_1,\pi_2)}_{\cE_2},
\label{eq.sketch1.E.decomposition}
\end{align}
where $\Delta(\pi_1, \pi_2) = Q(\pi_1) - Q(\pi_2),\ \tilde{\Delta}(\pi_1, \pi_2) = \tilde{Q}(\pi_1) - \tilde{Q}(\pi_2)$ and $\hat{\Delta}(\pi_1, \pi_2) = \hat{Q}(\pi_1) - \hat{Q}(\pi_2).$ The first inequality in (\ref{eq.sketch1.E.decomposition}) come from (\ref{eqhatpistar}) which implies $ \hat{Q}(\hat{\pi}_{\rm DR})\leq \hat{Q}(\hat\pi^*)$.
In this decomposition, $\cE_1$ corresponds to the difference between $Q$ and $\tilde{Q}$ which can be bounded using the metric entropy argument, since $\tilde{Q}$ is unbiased, i.e. $\EE [\tilde{Q}(\pi)] = Q(\pi)$. The second term $\cE_2$ corresponds to the error between $\tilde{Q}$ and $\hat{Q}$, which can be bounded in terms of the estimation errors of the $e_{A_j}$'s and the $\mu_{A_j}$'s.

\noindent {\bf Bounding $\cE_1$.} We first show that $\EE\left[\tilde{Q}(\pi)\right]  = Q(\pi)$:
\begin{align}
\EE\left[\tilde{Q}(\pi)\right] & = \EE\left[\left\langle \EE\left[\frac{y - \mu_{\ba}(\bx)}{e_{\ba}(\bx)} \ba\Big| \bx\right], \pi(\bx)\right\rangle + \left\langle [\mu_{A_1}(\bx), \dots, \mu_{A_{|\cA|}}(\bx)]^\top, \pi(\bx) \right\rangle\right]  \nonumber\\
& = \EE\left[\left\langle [\mu_{A_1}(\bx), \dots, \mu_{A_{|\cA|}}(\bx)]^\top, \pi(\bx) \right\rangle \right]  = \EE[Y(\pi(\bx))] = Q(\pi),
\label{eq.ETildeQ}
\end{align}
which further implies $\EE\left[\tilde{\Delta}(\pi_1,\pi_2)\right]=\Delta(\pi_1,\pi_2)$.
In (\ref{eq.ETildeQ}), the second equality holds since
\begin{align*}
  \EE\left[\frac{y - \mu_{\ba}(\bx)}{e_{\ba}(\bx)} \ba\Big| \bx\right]=\frac{\EE[y|\bx] - \mu_{\ba}(\bx)}{e_{\ba}(\bx)} \EE[\ba|\bx]=0
\end{align*}
by Assumption \ref{assum.unconf}. Therefore we can write
\begin{align}
  \cE_1=\sup_{\pi_1, \pi_2 \in \Pi_{\rm NN}^{|\cA|}} \tilde{\Delta}(\pi_1,\pi_2)-\EE[\tilde{\Delta}(\pi_1,\pi_2)]
  \label{eq.E1.D}
\end{align}
with
\begin{align*}
  \tilde{\Delta}=\frac{1}{n_2} \sum_{i=n_1+1}^n \left\langle \tilde{\Gamma}_i, \pi_1(\bx_i) \right\rangle- \frac{1}{n_2} \sum_{i=n_1+1}^n \left\langle \tilde{\Gamma}_i, \pi_2(\bx_i) \right\rangle.
\end{align*}
We derive a bound of $\cE_1$ using the following lemma which is be proved by symmetrization and Dudley's entropy integral \citep{wainwright2019high,dudley1967sizes} in Appendix \ref{proof.E1}:

\begin{lemma}\label{lem.E1}
Let $\Pi: \cM\rightarrow \RR^{|\cA|}$ be a policy space on $|\cA|$ actions such that any $\pi\in\Pi$ maps a covariate $\bx\in\cM$ to $\pi(\bx)$ in the simplex of $\RR^{|\cA|}$, and $\cS_n = \{(\bx_i, y_i)\}_{i=1}^n$ be a set of i.i.d. samples, where $\bx_i$ is sampled from a probability distribution $\mathbb{P}$ supported on $\cM$ and $y_i \in \RR$. For any $(\bx, y)$, we define $\mathring{\Gamma}(\bx, y) \in \RR^{|\cA|}$ as a function of the sample $(\bx, y)$. Assume that there exists a constant $J \ge 0$, such that
\begin{align}
\sup_{(\bx, y) \in \cM \times \RR} |\mathring{\Gamma}(\bx, y)| \le  J.\label{eq:gammabound}
\end{align}
For any policies $\pi_1,\pi_2\in \Pi$, define
\begin{align}
  &\mathring{\Delta}(\pi_1,\pi_2)=\frac{1}{n}\sum_{i=1}^n\inner{\mathring{\Gamma}_i}{\pi_1(\bx_i)}- \frac{1}{n}\sum_{i=1}^n\inner{\mathring{\Gamma}_i}{\pi_2(\bx_i)}\quad \textrm{and} \label{eq.breveDelta}\\
  &\cD(\Pi)=\sup_{\pi_1,\pi_2\in\Pi} \mathring{\Delta}(\pi_1,\pi_2)-\EE[\mathring{\Delta}(\pi_1,\pi_2)]\label{eq.cD}
\end{align}
with the shorthand $\mathring{\Gamma}_i = \mathring{\Gamma}(\bx_i, y_i)$.
Then the following bound holds
  \begin{align}
  \cD(\Pi) \leq \inf_{\lambda}~ 4 \lambda + \frac{96}{\sqrt{n}} \int_{\lambda}^{\max\limits_{\pi \in \Pi} \norm{\pi}_\Gamma} \sqrt{\log \cN(\theta, \Pi, \norm{\cdot}_{\Gamma})} d\theta +12J\sqrt{\frac{\log 1/\delta}{2n}}
  \label{eq.sketch1.E1.lem}
\end{align}
with probability no less than $1-2\delta$ over $\cS_n$, where $\|\pi\|_{\Gamma}=\sqrt{\frac{1}{n}\sum_{i=1}^n \langle \mathring{\Gamma}_i,\pi(\bx_i)\rangle^2}$.
\end{lemma}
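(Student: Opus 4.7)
The plan is to establish the uniform deviation bound via the classical route: a bounded-differences step to peel the random deviation off from its expectation, a symmetrization step to pass to a Rademacher process, and Dudley's chaining tail bound to control that process in terms of the metric entropy of $\Pi$ under $\|\cdot\|_{\Gamma}$.

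First I would verify the bounded-differences property. Since each $\pi(\bx_i)$ lies in the simplex and $|\mathring{\Gamma}(\bx,y)| \le J$, we have $|\langle \mathring{\Gamma}_i, \pi(\bx_i)\rangle| \le J$ for every policy $\pi \in \Pi$. Changing a single sample $(\bx_i,y_i)$ therefore perturbs each term $\tfrac{1}{n}\sum_j\inner{\mathring{\Gamma}_j}{\pi(\bx_j)}$ by at most $2J/n$, so $\mathring{\Delta}(\pi_1,\pi_2)$ changes by at most $4J/n$, and this stability carries through the supremum defining $\cD(\Pi)$. McDiarmid's inequality then yields, with probability at least $1-\delta$,
\begin{equation*}
\cD(\Pi) \le \EE[\cD(\Pi)] + C J \sqrt{\log(1/\delta)/n}
\end{equation*}
for an explicit constant $C$ absorbed later.

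Next I would apply symmetrization. Because $\mathring{\Delta}(\pi_1,\pi_2)$ is an empirical mean of i.i.d.\ terms with mean equal to the population quantity, the standard Rademacher symmetrization gives
\begin{equation*}
\EE[\cD(\Pi)] \le 2\,\EE\sup_{\pi_1,\pi_2\in\Pi}\frac{1}{n}\sum_{i=1}^n \epsilon_i\bigl(\inner{\mathring{\Gamma}_i}{\pi_1(\bx_i)} - \inner{\mathring{\Gamma}_i}{\pi_2(\bx_i)}\bigr) \le 4\,\EE\sup_{\pi\in\Pi}\frac{1}{n}\sum_{i=1}^n \epsilon_i \inner{\mathring{\Gamma}_i}{\pi(\bx_i)}
\end{equation*}
where $\{\epsilon_i\}$ are independent Rademacher variables.

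Finally I would invoke Dudley's entropy integral. Conditional on the data, the Rademacher process $\pi \mapsto \tfrac{1}{n}\sum_i \epsilon_i\inner{\mathring{\Gamma}_i}{\pi(\bx_i)}$ has sub-Gaussian increments whose natural metric is exactly $\|\pi_1-\pi_2\|_\Gamma = \bigl(\tfrac{1}{n}\sum_i\inner{\mathring{\Gamma}_i}{\pi_1(\bx_i)-\pi_2(\bx_i)}^2\bigr)^{1/2}$; this is the point at which the funny-looking $\|\cdot\|_\Gamma$ appears. The chaining/Dudley bound then produces, for any $\lambda>0$,
\begin{equation*}
\EE\sup_{\pi\in\Pi}\frac{1}{n}\sum_{i=1}^n\epsilon_i\inner{\mathring{\Gamma}_i}{\pi(\bx_i)} \le \lambda + \frac{C'}{\sqrt{n}} \int_{\lambda}^{\max_{\pi\in\Pi}\|\pi\|_\Gamma}\sqrt{\log\cN(\theta,\Pi,\|\cdot\|_\Gamma)}\,d\theta.
\end{equation*}
Stacking the three steps and reorganizing constants yields exactly the claimed inequality with constants $4,96,12$.

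The routine pieces are the bounded-differences verification and symmetrization. The main technical care goes into the chaining step: I need to check that the increment of the process in terms of $\|\cdot\|_\Gamma$ is genuinely $\tfrac{1}{\sqrt n}$-sub-Gaussian with constant $1$ (by Hoeffding's lemma on the Rademacher sum conditional on the data), so that Dudley's integral applies with the stated constant. A minor subtlety is that $\|\cdot\|_\Gamma$ is a data-dependent pseudometric, which is fine because the chaining argument is carried out conditional on the sample and then the high-probability bound from Step 1 absorbs the random fluctuations.
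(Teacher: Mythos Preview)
Your three-step outline (McDiarmid, symmetrization, Dudley) is the right skeleton and matches the paper's route, but there is a genuine gap at the hinge between symmetrization and chaining that you paper over in your last paragraph.

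After your Step~1 you have, with probability at least $1-\delta$,
\[
\cD(\Pi)\le \EE[\cD(\Pi)] + 4J\sqrt{\tfrac{\log(1/\delta)}{2n}},
\]
and $\EE[\cD(\Pi)]$ is a \emph{deterministic} number. Symmetrization then bounds this deterministic number by
\[
\EE[\cD(\Pi)]\le 2\,\EE_{\cS_n}\!\Bigl[\,\EE_{\bm\xi}\bigl[\sup_{\pi_1,\pi_2}\bm\xi\odot\mathring{\Delta}(\pi_1,\pi_2)\,\big|\,\cS_n\bigr]\Bigr].
\]
The Dudley bound you quote controls only the \emph{inner} conditional expectation for each fixed sample, and its right-hand side---the entropy integral in the data-dependent pseudometric $\|\cdot\|_\Gamma$---is itself a random function of $\cS_n$. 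To pass from the outer expectation $\EE_{\cS_n}[\,\cdot\,]$ to the realized sample you need a \emph{second} concentration step; your Step~1 does not do this job, because it concerns the fluctuation of $\cD(\Pi)$ about $\EE[\cD(\Pi)]$, not the fluctuation of the conditional Rademacher complexity about its mean.

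The paper closes exactly this gap: it applies McDiarmid a second time to the function $f(\cS_n)=\EE_{\bm\xi}\bigl[\sup_{\pi_1,\pi_2}\bm\xi\odot\mathring{\Delta}\,\big|\,\cS_n\bigr]$ (same $4J/n$ bounded-difference constant), obtaining
\[
\EE_{\cS_n}[f]\le f(\cS_n)+4J\sqrt{\tfrac{\log(1/\delta)}{2n}}
\]
with probability at least $1-\delta$, and only then invokes the Dudley chaining on $f(\cS_n)$. This second McDiarmid is precisely why the final probability is $1-2\delta$ rather than $1-\delta$, and why the coefficient on the $J$ term is $12=4+2\cdot 4$ (one $4$ from Step~1, and $2\times 4$ from the second McDiarmid after multiplying through by the factor $2$ from symmetrization). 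Your current accounting cannot reproduce either of these.
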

A key observation is that when taking $\mathring{\Gamma} = \tilde{\Gamma}$ defined in \eqref{eq.sketch1.tGamma} and $\Pi = \Pi^{|\cA|}_{\rm NN}$, we have $\cE_1 = \cD(\Pi)$ in (\ref{eq.E1.D}). To apply Lemma \ref{lem.E1} for bounding $\cE_1$, we only need to verify the assertion \eqref{eq:gammabound}. In fact, due to Assumption \ref{assum.causal}, we see that $y$, $\mu_{A_j}(\bx)$, and $e_{A_j}(\bx)$ are all bounded. A simple calculation yields $\sup_{(\bx, y) \in \cM \times \RR} |\mathring{\Gamma}(\bx, y)| \leq J \coloneqq 2M/\eta+M$. Therefore, we bound $\cE_1$ as
\begin{align}
  \cE_1\leq \inf_{\lambda}~ 4 \lambda + \frac{96}{\sqrt{n_2}} \int_{\lambda}^{\max\limits_{\pi \in \Pi_{\rm NN}^{|\cA|}} \norm{\pi}_\Gamma} \sqrt{\log \cN(\theta, \Pi_{\rm NN}^{|\cA|}, \norm{\cdot}_{\Gamma})} d\theta +(24M/\eta+12M)\sqrt{\frac{\log 1/\delta}{2n_2}} \label{eq.proof1.boundE1}
\end{align}
with probability no less than $1-2\delta$.

\noindent{\bf Bounding  $\cE_2$.} The $\cE_2$ term depends on the difference between $\tilde{\Gamma}_i$ and $\hat{\Gamma}_i$, where $\tilde{\Gamma}_i$ and $\hat{\Gamma}_i$ are defined in (\ref{eq.sketch1.tGamma}) and (\ref{eq.hatGamma}), respectively. In $\cE_2$, we have
\begin{align*}
  \tilde{\Delta}(\pi_1,\pi_2)-\hat{\Delta}(\pi_1,\pi_2)&= \frac{1}{n_2}\sum_{i=n_1+1}^n \left\langle \pi_1(\bx_i)-\pi_2(\bx_i), \tilde{\Gamma}_i-\hat{\Gamma}_i\right\rangle \nonumber \\
  &= \sum_{j=1}^{|\cA|} \frac{1}{n_2} \sum_{i=n_1+1}^n \left(\pi_{1,j}(\bx_i)-\pi_{2,j}(\bx_i)\right) \left(\tilde{\Gamma}_{i,j}-\hat{\Gamma}_{i,j}\right),
\end{align*}
where $\pi_{k,j}$ and $\tilde{\Gamma}_{i,j}$ denote the $j$-th element of $\pi_k$ and $\tilde{\Gamma}_{i}$, respectively.
Define
\begin{align*}
  \Lambda_j(\pi_1,\pi_2) = \frac{1}{n_2} \sum_{i=n_1+1}^n \left(\pi_{1,j}(\bx_i)-\pi_{2,j}(\bx_i)\right) \left(\tilde{\Gamma}_{i,j}-\hat{\Gamma}_{i,j}\right)\in \RR, \quad \textrm{for}~  j=1,\dots,|\cA|.
\end{align*}
Then we can write $\tilde{\Delta}(\pi_1, \pi_2) - \hat{\Delta}(\pi_1, \pi_2)$ as
\begin{align}
  \tilde{\Delta}(\pi_1,\pi_2)-\hat{\Delta}(\pi_1,\pi_2)=\sum_{j=1}^{|\cA|}  \Lambda_j(\pi_1,\pi_2).
  \label{eq.sumLambda}
\end{align}
The error term $\tilde{\Gamma}_{i,j}-\hat{\Gamma}_{i,j}$ in $\Lambda_j(\pi_1,\pi_2)$ depends on the estimation error of $\hat{\mu}_{A_j}$ and $\hat{e}_{A_j}$. Based on the source of the error, we decompose each $ \Lambda_j(\pi_1,\pi_2)$ into three terms:
\begin{align}
  \Lambda_j(\pi_1,\pi_2)&=\frac{1}{n_2} \sum_{i=n_1+1}^n \left(\pi_{1,j}(\bx_i)-\pi_{2,j}(\bx_i)\right)
  \bigg[\left( \frac{y_i-\mu_{A_j}(\bx_i)}{e_{A_j}(\bx_i)}\ind_{\{\ba_i=A_j\}}+ \mu_{A_j}(\bx_i) \right)\nonumber\\
  &\quad - \left( \frac{y_i-\hat{\mu}_{A_j}(\bx_i)}{\hat{e}_{A_j}(\bx_i)}\ind_{\{\ba_i=A_j\}}+ \hat{\mu}_{A_j}(\bx_i) \right)\bigg]\nonumber \\
  &=\frac{1}{n_2} \sum_{i=n_1+1}^n \left(\pi_{1,j}(\bx_i)-\pi_{2,j}(\bx_i)\right)\left( \mu_{A_j}(\bx_i)- \hat{\mu}_{A_j}(\bx_i)\right) \nonumber \\
  &\quad +\frac{1}{n_2} \sum_{i=n_1+1}^n \left(\pi_{1,j}(\bx_i)-\pi_{2,j}(\bx_i)\right) \ind_{\{\ba_i=A_j\}} \left( \frac{y_i-\mu_{A_j}(\bx_i)}{e_{A_j}(\bx_i)}- \frac{y_i-\hat{\mu}_{A_j}(\bx_i)}{\hat{e}_{A_j}(\bx_i)}\right) \nonumber\\
  &= S_{j}^{(1)}(\pi_{1,j},\pi_{2,j})+S_{j}^{(2)}(\pi_{1,j},\pi_{2,j})+ S_{j}^{(3)}(\pi_{1,j},\pi_{2,j}),\label{eq.S.1}
\end{align}
where
\begin{align*}
  &S_{j}^{(1)}(\pi_{1,j},\pi_{2,j}) = \frac{1}{n_2} \sum_{i=n_1+1}^n \left( \pi_{1,j}(\bx_i)-\pi_{2,j}(\bx_i) \right) \left( \mu_{A_j}(\bx_i) - \hat{\mu}_{A_j}(\bx_i)\right) \left(1- \frac{\ind_{\{\ba_i=A_j\}}}{e_{A_j}(\bx_i)}\right),\\
  &S_{j}^{(2)}(\pi_{1,j},\pi_{2,j}) = \frac{1}{n_2} \sum_{\{n_1+1\leq i\leq n|\ba_i=A_j\}} \left( \pi_{1,j}(\bx_i)-\pi_{2,j}(\bx_i) \right) \left( y_i - \mu_{A_j}(\bx_i)\right) \left(\frac{1}{e_{A_j}(\bx_i)} -\frac{1}{\hat{e}_{A_j}(\bx_i)}  \right),\\
  &S_{j}^{(3)}(\pi_{1,j},\pi_{2,j}) = \frac{1}{n_2} \sum_{\{n_1+1\leq i\leq n|\ba_i=A_j\}} \left( \pi_{1,j}(\bx_i)-\pi_{2,j}(\bx_i) \right) \left( \hat{\mu}_{A_j}(\bx_i) - \mu_{A_j}(\bx_i)\right) \left(\frac{1}{\hat{e}_{A_j}(\bx_i)}- \frac{1}{e_{A_j}(\bx_i)} \right).
\end{align*}
Here $S_{j}^{(1)}(\pi_{1,j},\pi_{2,j})$ and $S_{j}^{(2)}(\pi_{1,j},\pi_{2,j})$ can be bounded using Lemma \ref{lem.E1}. $S_{j}^{(3)}(\pi_{1,j},\pi_{2,j})$ contains the product of the estimation error of $\hat{\mu}_{A_j}$ and $\hat{e}_{A_j}$, which gives the doubly robust property.
According to (\ref{eq.sumLambda}) and (\ref{eq.S.1}),
\begin{align}
  \cE_2&=\sup_{\pi_1,\pi_2\in \Pi_{\rm NN}^{|\cA|}} \left[\sum_{j=1}^{|\cA|} S_{j}^{(1)}(\pi_{1,j},\pi_{2,j})+S_{j}^{(2)}(\pi_{1,j},\pi_{2,j})+ S_{j}^{(3)}(\pi_{1,j},\pi_{2,j})\right]\nonumber\\
  &\leq \sum_{j=1}^{|\cA|} \sup_{\pi_1,\pi_2\in \Pi_{\rm NN}^{|\cA|}} S_{j}^{(1)}(\pi_{1,j},\pi_{2,j})+ \sup_{\pi_1,\pi_2\in \Pi_{\rm NN}^{|\cA|}}S_{j}^{(2)}(\pi_{1,j},\pi_{2,j})+ \sup_{\pi_1,\pi_2\in \Pi_{\rm NN}^{|\cA|}} S_{j}^{(3)}(\pi_{1,j},\pi_{2,j}).
  \label{eq.proof1.S2.1}
\end{align}
In the rest of the proof, when there is no ambiguity, we omit the dependency on $(\pi_{1,j},\pi_{2,j})$ and use the notations $S_j^{(1)}$, $S_j^{(2)}$ and $S_j^{(3)}$. We next derive the bounds for the $S_j^{(1)}$, $S_j^{(2)}$ and $S_j^{(3)}$ terms in the right hand side of \eqref{eq.proof1.S2.1} respectively.

\noindent \textbf{{Bounding $\sup_{\pi_1,\pi_2\in \Pi_{\rm NN}^{|\cA|}}S_j^{(1)}$}}: For $S_j^{(1)}$, one can show that $\EE[S_j^{(1)}]= 0$:
\begin{align*}
  \EE[S_j^{(1)}]&=\EE\left[ \frac{1}{n_2} \sum_{i=n_1+1}^n \EE\left[\left( \pi_{1,j}(\bx_i)-\pi_{2,j}(\bx_i) \right) \left( \mu_{A_j}(\bx_i) - \hat{\mu}_{A_j}(\bx_i)\right) \left(1- \frac{\ind_{\{\ba_i=A_j\}}}{e_{A_j}(\bx_i)}\right)\bigg| \bx_i\right] \right] \nonumber\\
  & =\EE\left[ \frac{1}{n_2} \sum_{i=n_1+1}^n \left( \pi_{1,j}(\bx_i)-\pi_{2,j}(\bx_i) \right) \left( \mu_{A_j}(\bx_i) - \hat{\mu}_{A_j}(\bx_i)\right)\EE \left[1- \frac{\ind_{\{\ba_i=A_j\}}}{e_{A_j}(\bx_i)}\bigg| \bx_i\right] \right]=0.
\end{align*}
Denote $$\bar{\Gamma}^{(1,j)}(\bx_i) = \left(\mu_{A_j}(\bx_i) - \hat{\mu}_{A_j}(\bx_i)\right) \left(1- \frac{\ind_{\{\ba_i=A_j\}}}{e_{A_j}(\bx_i)} \right)\in \RR,$$ then we have
\begin{align}
  &\sup_{\pi_1,\pi_2\in \Pi_{\rm NN}^{|\cA|}} S_j^{(1)}=  \sup_{\pi_1,\pi_2\in \Pi_{\rm NN}^{|\cA|}} S_j^{(1)}- \EE\left[S_j^{(1)}\right] \nonumber\\
=&\sup_{\pi_1,\pi_2\in \Pi_{\rm NN}^{|\cA|}} \frac{1}{n_2} \sum_{i=n_1+1}^n \left( \pi_{1,j}(\bx_i)-\pi_{2,j}(\bx_i) \right) \bar{\Gamma}^{(1,j)}(\bx_i)- \EE \left[\frac{1}{n_2} \sum_{i=n_1+1}^n \left( \pi_{1,j}(\bx_i)-\pi_{2,j}(\bx_i) \right) \bar{\Gamma}^{(1,j)}(\bx_i)\right].
  \label{eq.S1}
\end{align}

The expression in (\ref{eq.S1}) resembles the same form as $\cD$ in (\ref{eq.cD}) with $\mathring{\Gamma}=\bar{\Gamma}^{(1,j)}(\bx)$ and $\Pi = \Pi_{\rm NN}^{|\cA|}$. Therefore, we can estimate $\sup_{\pi_1,\pi_2\in \Pi_{\rm NN}^{|\cA|}}S_j^{(1)}$ using Lemma \ref{lem.E1}.
Due to Assumption \ref{assum.causal}, for any $\bx \in \cM$, we have $|\bar{\Gamma}^{(1,j)}(\bx)|\leq 2M/\eta$. After
substituting $J =2M/\eta$ in Lemma  \ref{lem.E1}, we have
\begin{align}
  \sup_{\pi_1,\pi_2\in \Pi_{\rm NN}^{|\cA|}}S_j^{(1)}\leq \inf_{\lambda}~ 4 \lambda + \frac{96}{\sqrt{n_2}} \int_{\lambda}^{\max\limits_{\pi \in \Pi_{\rm NN}^{|\cA|}} \norm{\pi}_\Gamma} \sqrt{\log \cN(\theta, \Pi, \norm{\cdot}_{\Gamma})} d\theta+ (24M/\eta) \sqrt{\frac{\log 1/\delta}{2n_2}}
  \label{eq.proof1.S1}
\end{align}
with probability no less than $1-2\delta$.

\noindent \textbf{Bounding $\sup_{\pi_1,\pi_2\in \Pi_{\rm NN}^{|\cA|}}S_j^{(2)}$:} Similarly, one can show $\EE \left[S_j^{(2)}\right]=0$.
 Denote
$$
\bar{\Gamma}^{(2,j)}(\bx_i, y_i) = \left( y_i - \mu_{A_j}(\bx_i)\right) \left(\frac{1}{e_{A_j}(\bx_i)} -\frac{1}{\hat{e}_{A_j}(\bx_i)}\right)\ind_{\{\ba_i=A_j\}} \in \RR.
$$
We follow the same calculation in \eqref{eq.S1} to express
$\sup_{\pi_1,\pi_2\in \Pi_{\rm NN}^{|\cA|}}S_j^{(2)}$ in the same form as $\cD$ in (\ref{eq.cD}) with $\mathring{\Gamma}=\bar{\Gamma}^{(2,j)}$ and $\Pi = \Pi_{\rm NN}^{|\cA|}$. 

An upper bound of $\sup_{(\bx, y)} |\bar{\Gamma}^{(2,j)}(\bx)|$ can be derived as follows.
With $\cG_{\rm NN}$ chosen in (\ref{eq.e.parameter}), its output is bounded by $M$, which implies
$\hat{e}_{A_j}\geq (|\cA|e^{2M})^{-1}.$ Thus
$$\left|\frac{1}{e_{A_j}(\bx_i)} -\frac{1}{\hat{e}_{A_j}(\bx_i)}\right|\leq |\cA|e^{2M},$$
since $M\geq -\log \eta$ by (\ref{eq.M}). By Assumption \ref{assum.causal} and (\ref{eq.M}), we have $\sup_{\bx, y}|y - \mu_{A_j}(\bx)| \leq 2M$ hold for any $j = 1, \dots, |\cA|$.
Therefore, we have
\begin{align}
  \sup_{(\bx, y) \in \cM \times \RR} |{\Gamma}^{(2,j)}(\bx)|\leq 2|\cA|e^{2M}M.
  \label{eq.barGamma2}
\end{align}
Using Lemma \ref{lem.E1} and substituting $J = 2|\cA|e^{2M}M$ give rise to
\begin{align}
\sup_{\pi_1,\pi_2\in \Pi_{\rm NN}^{|\cA|}}S_j^{(2)} &\leq \inf_{\lambda}~ 4 \lambda + \frac{96}{\sqrt{n_2}} \int_{\lambda}^{\max\limits_{\pi \in \Pi_{\rm NN}^{|\cA|}} \norm{\pi}_\Gamma} \sqrt{\log \cN(\theta, \Pi, \norm{\cdot}_{\Gamma})} d\theta+ (24|\cA|e^{2M}M )\sqrt{\frac{\log 1/\delta}{2n_2}}
  \label{eq.proof1.S2}
\end{align}
with probability no less than $1-2\delta$.

\noindent \textbf{Bounding $\sup_{\pi_1,\pi_2\in\Pi_{\rm NN}^{|\cA|}} S_j^{(3)}$:} We next derive an upper bound of $\sup_{\pi_1,\pi_2\in\Pi_{\rm NN}^{|\cA|}} S_j^{(3)}$ as the product of the estimation errors of the $\hat{\mu}_{A_j}$'s and the $\hat{e}_{A_j}$'s: 
\begin{align}
  &\sup_{\pi_1,\pi_2\in\Pi_{\rm NN}^{|\cA|}} S_j^{(3)} \nonumber\\
  &=\frac{1}{n_2} \sup_{\pi_1,\pi_2\in\Pi_{\rm NN}^{|\cA|}} \sum_{\{n_1+1\leq i\leq n|\ba_i=A_j\}}\left( \pi_{1,j}(\bx_i)-\pi_{2,j}(\bx_i) \right) \left( \hat{\mu}_{A_j}(\bx_i) - \mu_{A_j}(\bx_i)\right) \left(\frac{1}{\hat{e}_{A_j}(\bx_i)}- \frac{1}{e_{A_j}(\bx_i)} \right) \nonumber\\
  &\leq \frac{1}{n_2} \sum_{i=n_1+1}^n\left|  \hat{\mu}_{A_j}(\bx_i) - \mu_{A_j}(\bx_i)\right| \left|\frac{1}{\hat{e}_{A_j}(\bx_i)}- \frac{1}{e_{A_j}(\bx_i)}\right| \tag*{(since $ |\pi_{1,j}(\bx_i)-\pi_{2,j}(\bx_i)|\leq 1$)} \nonumber\\
  &\leq \sqrt{\frac{1}{n_2} \sum_{i=n_1+1}^n\left(  \hat{\mu}_{A_j}(\bx_i) - \mu_{A_j}(\bx_i)\right)^2} \sqrt{\frac{1}{n_2} \sum_{i=n_1+1}^n\left(\frac{1}{\hat{e}_{A_j}(\bx_i)}- \frac{1}{e_{A_j}(\bx_i)}\right)^2}\tag*{\mbox{(by Cauchy-Schwarz)}} \nonumber\\
  &\leq \eta^{-1}|\cA|e^{2M}\sqrt{\frac{1}{n_2} \sum_{i=n_1+1}^n\left(  \hat{\mu}_{A_j}(\bx_i) - \mu_{A_j}(\bx_i)\right)^2} \sqrt{\frac{1}{n_2} \sum_{i=n_1+1}^n\left(\hat{e}_{A_j}(\bx_i)- e_{A_j}(\bx_i)\right)^2}, \label{eq.proof1.S3}
\end{align}
where the last inequality holds since $e_{A_j}\geq \eta$ by Assumption \ref{assum.causal} and $\hat{e}_{A_j}\geq (|\cA|e^{2M})^{-1}$. We denote
\begin{align}
\omega_j = \eta^{-1}|\cA|e^{2M}\sqrt{\frac{1}{n_2} \sum_{i=n_1+1}^n\left(  \hat{\mu}_{A_j}(\bx_i) - \mu_{A_j}(\bx_i)\right)^2} \sqrt{\frac{1}{n_2} \sum_{i=n_1+1}^n\left(\hat{e}_{A_j}(\bx_i)- e_{A_j}(\bx_i)\right)^2},
\label{eq.omega}
\end{align}
and write $\sup_{\pi_1,\pi_2\in\Pi_{\rm NN}^{|\cA|}} S_j^{(3)} \le \omega_j$.

\noindent\textbf{Putting the $S_j^{(1)}$, $S_j^{(2)}$ and $S_j^{(3)}$ terms together:}
Combining (\ref{eq.proof1.S1}), (\ref{eq.proof1.S2}), (\ref{eq.proof1.S3}) gives rise to
\begin{align*}
  &\sup_{\pi_1,\pi_2\in\Pi_{\rm NN}^{|\cA|}} \Lambda_j(\pi_1,\pi_2)
\leq \sup_{\pi_1,\pi_2\in\Pi_{\rm NN}^{|\cA|}} S_j^{(1)} +\sup_{\pi_1,\pi_2\in\Pi_{\rm NN}^{|\cA|}} S_j^{(2)} + \sup_{\pi_1,\pi_2\in\Pi_{\rm NN}^{|\cA|}} S_j^{(3)} \nonumber\\
  &\leq \inf_{\lambda}~ 8 \lambda + \frac{192}{\sqrt{n_2}} \int_{\lambda}^{\max\limits_{\pi \in \Pi_{\rm NN}^{|\cA|}} \norm{\pi}_\Gamma} \sqrt{\log \cN(\theta, \Pi_{\rm NN}^{|\cA|}, \norm{\cdot}_{\Gamma})} d\theta  + 48|\cA|e^{2M}M \sqrt{\frac{\log 1/\delta}{2n_2}}+\omega_j
\end{align*}
with probability no less than $1-4\delta$ where we used $e^{2M}\geq \eta^{-1}$ according to (\ref{eq.M}).

According to (\ref{eq.proof1.S2.1}), we can apply the union probability bound for $j=1,\ldots,|\cA|$ and obtain
\begin{align}
  \cE_2&=\sup_{\pi_1,\pi_2\in \Pi_{\rm NN}^{|\cA|}} \tilde{\Delta}(\pi_1,\pi_2)-\hat{\Delta}(\pi_1,\pi_2)\nonumber\\
   &\leq \inf_{\lambda}~ 8|\cA| \lambda + \frac{192|\cA|}{\sqrt{n_2}} \int_{\lambda}^{\max\limits_{\pi \in \Pi_{\rm NN}^{|\cA|}} \norm{\pi}_\Gamma} \sqrt{\log \cN(\theta, \Pi_{\rm NN}^{|\cA|}, \norm{\cdot}_{\Gamma})} d\theta  + 48|\cA|^2e^{2M}M\sqrt{\frac{\log 1/\delta}{2n_2}}+\sum_{j=1}^{|\cA|}\omega_j \label{eq.proof1.boundE2}
\end{align}
with probability no less than $1-4|\cA|\delta$.

Combining (\ref{eq.proof1.boundE1}) and (\ref{eq.proof1.boundE2}), we have
\begin{align}
{\rm (II_1)} & \leq  \inf_{\lambda}~ (8|\cA|+4) \lambda + \frac{192|\cA|+96}{\sqrt{n_2}} \int_{\lambda}^{\max\limits_{\pi \in \Pi_{\rm NN}^{|\cA|}} \norm{\pi}_\Gamma} \sqrt{\log \cN(\theta, \Pi_{\rm NN}^{|\cA|}, \norm{\cdot}_{\Gamma})} d\theta \nonumber \\
& \quad + \sum_{j=1}^{|\cA|}\omega_j + \left(72|\cA|^2e^{2M}M + 12M\right)\sqrt{\frac{\log 1/\delta}{2n_2}}
\label{eqboundii1}
\end{align}
with probability at least $1-6|\cA|\delta$.

\noindent{\bf $\bullet$ Putting ${\rm (I_1),\ (II_1)}$ together.} Putting our estimates of ${\rm (I_1)}$ in \eqref{eqboundi1} and ${\rm (II_1)}$ in \eqref{eqboundii1} together, we get
\begin{align}
R(\pi^*_{\beta}, \hat{\pi}_{\rm DR}) & \leq  |\cA|M\varepsilon+\inf_{\lambda}~ (8|\cA|+4) \lambda + \frac{192|\cA|+96}{\sqrt{n_2}} \int_{\lambda}^{\max_{\pi \in \Pi_{\rm NN}^{|\cA|}} \norm{\pi}_\Gamma} \sqrt{\log \cN(\theta, \Pi_{\rm NN}^{|\cA|}, \norm{\cdot}_{\Gamma})} d\theta \nonumber \\
& \quad + \sum_{j=1}^{|\cA|}\omega_j + 84|\cA|^2e^{2M}M\sqrt{\frac{\log 1/\delta}{2n_2}}
\label{eq.proof1.regret1}
\end{align}
with probability at least $1-6|\cA|\delta$.
The upper bound in (\ref{eq.proof1.regret1}) depends on the covering number $\cN(\theta, \Pi_{\rm NN}^{|\cA|}, \norm{\cdot}_{\Gamma})$ and the integral upper limit $\max_{\pi \in \Pi_{\rm NN}^{|\cA|}} \norm{\pi}_\Gamma$ which can be estimated by the following lemmas (see the proofs in Appendix \ref{appendix.covering} and \ref{appendix.integral} respectively):
\begin{lemma}\label{lem.covering}
  Suppose Assumptions \ref{assum.causal} and \ref{assum.regularity} hold and define $\Pi_{\rm NN}^{|\cA|}$ according to (\ref{eq.PiNN}). Then
\begin{align}
\cN(\theta, \Pi_{\rm NN}, \|\cdot\|_{\Gamma}) \leq \left(\frac{2(|\cA|M + 2M/\eta)L_{\Pi}^2 (p_{\Pi}R/|\cA|+2) \kappa^L_{\Pi} (p_{\Pi}/|\cA|)^{L_{\Pi}+1}}{\theta}\right)^{K_{\Pi}}.
\label{eq.covering}
\end{align}
\end{lemma}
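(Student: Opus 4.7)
My plan is to build the covering via a three-step reduction: first cover the underlying vector-valued neural network class in the sup norm, then push the cover through the Softmax to get a cover of $\Pi_{\rm NN}^{|\cA|}$ in the sup norm, and finally bound the $\|\cdot\|_\Gamma$ distance between two policies by their sup-norm distance scaled by a uniform bound on $|\mathring{\Gamma}|$.

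For the first step, I would view $f \in \cF(L_\Pi, p_\Pi, K_\Pi, \kappa_\Pi, R_\Pi)$ with $|\cA|$ output coordinates as the concatenation of $|\cA|$ scalar-valued networks, each of effective width $p_\Pi/|\cA|$ (this explains the factor $p_\Pi/|\cA|$ appearing in the target bound). The standard Lipschitz-in-parameters argument for ReLU networks shows that perturbing each of the $K_\Pi$ nonzero parameters by at most $\delta'$ changes the output at every $\bx$ by at most $L_\Pi \kappa_\Pi^{L_\Pi - 1}(p_\Pi/|\cA|)^{L_\Pi} \delta'$ (using that activations through the network are bounded by $\kappa_\Pi^{L_\Pi-1}(p_\Pi/|\cA|)^{L_\Pi}$ times the input magnitude, plus biases). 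Placing a $\delta'$-grid on each of the $K_\Pi$ parameters in $[-\kappa_\Pi, \kappa_\Pi]$ and solving $\delta' \cdot L_\Pi \kappa_\Pi^{L_\Pi-1} (p_\Pi/|\cA|)^{L_\Pi} = \theta'$ yields a sup-norm $\theta'$-cover of $\cF$ of size at most $\bigl(\kappa_\Pi^{L_\Pi}L_\Pi(p_\Pi/|\cA|)^{L_\Pi}(p_\Pi R_\Pi/|\cA|+2)/\theta'\bigr)^{K_\Pi}$, where the additional $p_\Pi R_\Pi/|\cA|+2$ comes from bounding the output range.

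For the second step, I would use that the Softmax is $1$-Lipschitz coordinatewise in the sup norm (indeed $|\mathrm{Softmax}(u)_j - \mathrm{Softmax}(v)_j|\le \|u-v\|_\infty$), so a sup-norm cover of $\cF$ at resolution $\theta'$ yields a sup-norm cover of $\Pi_{\rm NN}^{|\cA|}$ at the same resolution. For the third step, since for any two policies $\pi, \pi'$ we have
\begin{align*}
\|\pi - \pi'\|_\Gamma^2 = \frac{1}{n}\sum_i \langle \mathring{\Gamma}_i, \pi(\bx_i)-\pi'(\bx_i)\rangle^2 \le \max_i |\mathring{\Gamma}_i|^2 \cdot \|\pi - \pi'\|_\infty^2,
\end{align*}
and since under Assumption \ref{assum.causal} one shows $\sup |\mathring{\Gamma}| \le |\cA|M + 2M/\eta$ for the vectors arising in the proof of Theorem \ref{thm.holder}, a sup-norm $\theta/(|\cA|M+2M/\eta)$-cover gives a $\|\cdot\|_\Gamma$ $\theta$-cover. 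Substituting $\theta' = \theta/(|\cA|M+2M/\eta)$ into the covering estimate from step one gives exactly the bound \eqref{eq.covering}, up to absorbing $L_\Pi$ into the factor $L_\Pi^2$ on the right.

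The main obstacle I anticipate is tracking the Lipschitz constant of the network with respect to its parameters carefully enough to obtain the specific polynomial factor $L_\Pi^2 \kappa_\Pi^{L_\Pi}(p_\Pi/|\cA|)^{L_\Pi+1}$ in the statement, rather than a looser bound. This requires the standard layer-by-layer induction for ReLU networks that propagates simultaneously a bound on the activations (to handle perturbations of weights in upper layers) and a bound on the gradient with respect to parameters in lower layers; the ReLU nonlinearity is handled because it is $1$-Lipschitz, so each layer contributes at most a factor $p_\Pi \kappa_\Pi/|\cA|$. The rest of the argument — Softmax Lipschitzness and the $\ell_\infty$-to-$\|\cdot\|_\Gamma$ comparison — is a routine substitution.
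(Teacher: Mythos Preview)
Your proposal is correct and follows essentially the same route as the paper: reduce the $\|\cdot\|_\Gamma$ covering to a sup-norm covering via the H\"older bound $\langle \tilde\Gamma_i,\pi-\pi'\rangle\le \|\tilde\Gamma_i\|_1\|\pi-\pi'\|_\infty$ with $\|\tilde\Gamma_i\|_1\le |\cA|M+2M/\eta$, pass through the $1$-Lipschitz Softmax, and view the underlying network as $|\cA|$ parallel scalar networks of width $p_\Pi/|\cA|$ and $K_\Pi/|\cA|$ nonzero parameters each. The only cosmetic difference is that the paper invokes the known bound $\cN(\theta,\cF(L,p,K,\kappa,R),\|\cdot\|_\infty)\le(2L^2(pR+2)\kappa^Lp^{L+1}/\theta)^K$ from \citet{chen2019efficient} as a black box and then raises it to the $|\cA|$-th power, whereas you sketch the underlying Lipschitz-in-parameters induction yourself; the resulting constants are the same.
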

\begin{lemma}\label{lem.integral}
Suppose Assumptions \ref{assum.causal} and \ref{assum.regularity} hold. For any $\pi\in \Pi_{\rm NN}^{|\cA|}$, the following holds
\begin{align}
\|\pi\|_{\Gamma}^2 \leq (2M/\eta+|\cA|M)^2.
\label{eq.upperlimit}
\end{align}
\end{lemma}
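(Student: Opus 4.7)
The plan is to bound $\|\pi\|_\Gamma$ by a uniform bound on $|\langle \mathring{\Gamma}_i, \pi(\bx_i) \rangle|$, where $\mathring{\Gamma}_i$ here is $\tilde{\Gamma}_i$ as defined in \eqref{eq.sketch1.tGamma} (this is the $\Gamma$ used when applying Lemma \ref{lem.E1} to bound $\cE_1$, and the argument is identical for the variants $\bar{\Gamma}^{(1,j)}$, $\bar{\Gamma}^{(2,j)}$ up to constants). Since $\pi(\bx_i)$ lies in the simplex $\Delta^{|\cA|}$, we have $\|\pi(\bx_i)\|_\infty \le 1$, so H\"older's inequality gives
\begin{align*}
\left|\langle \tilde{\Gamma}_i, \pi(\bx_i)\rangle\right| \;\le\; \|\tilde{\Gamma}_i\|_1 \cdot \|\pi(\bx_i)\|_\infty \;\le\; \|\tilde{\Gamma}_i\|_1.
\end{align*}

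First I would bound $\|\tilde{\Gamma}_i\|_1$ term by term. Recall
\begin{align*}
\tilde{\Gamma}_i \;=\; \frac{y_i - \mu_{\ba_i}(\bx_i)}{e_{\ba_i}(\bx_i)}\cdot \ba_i + [\mu_{A_1}(\bx_i),\dots,\mu_{A_{|\cA|}}(\bx_i)]^{\top}.
\end{align*}
For the first summand, $\ba_i$ is a one-hot vector, so only a single coordinate is nonzero; its magnitude is $|y_i - \mu_{\ba_i}(\bx_i)|/e_{\ba_i}(\bx_i)$, which by Assumption \ref{assum.causal} (boundedness of $Y_j$ and hence of $\mu_{A_j}$) and the overlap condition $e_{A_j} \ge \eta$, combined with the definition of $M$ in \eqref{eq.M}, is at most $2M/\eta$. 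The $L^1$-norm contribution of the second summand is $\sum_{j=1}^{|\cA|} |\mu_{A_j}(\bx_i)| \le |\cA|M$. The triangle inequality then yields
\begin{align*}
\|\tilde{\Gamma}_i\|_1 \;\le\; \frac{2M}{\eta} + |\cA|M
\end{align*}
uniformly in $i$ and independent of $\pi$.

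Combining the two displays gives $|\langle \tilde{\Gamma}_i, \pi(\bx_i)\rangle| \le 2M/\eta + |\cA|M$ pointwise. Squaring and averaging over $i = n_1+1, \dots, n$ yields
\begin{align*}
\|\pi\|_\Gamma^2 \;=\; \frac{1}{n_2}\sum_{i=n_1+1}^n \langle \tilde{\Gamma}_i, \pi(\bx_i)\rangle^2 \;\le\; \bigl(2M/\eta + |\cA|M\bigr)^2,
\end{align*}
which is precisely \eqref{eq.upperlimit}. There is no real obstacle here since the bound is uniform and deterministic; the only minor care needed is to use the $\ell_1$/$\ell_\infty$ pairing (rather than $\ell_2$/$\ell_2$) to exploit the simplex constraint on $\pi(\bx_i)$, since the naive Cauchy--Schwarz bound would introduce an extra $\sqrt{|\cA|}$ factor on $\|\pi(\bx_i)\|_2$ that we do not want.
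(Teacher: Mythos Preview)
Your proof is correct and follows essentially the same approach as the paper: apply the $\ell_1/\ell_\infty$ H\"older inequality to bound $|\langle \tilde{\Gamma}_i,\pi(\bx_i)\rangle|\le \|\tilde{\Gamma}_i\|_1\|\pi(\bx_i)\|_\infty\le \|\tilde{\Gamma}_i\|_1$, then bound $\|\tilde{\Gamma}_i\|_1\le 2M/\eta+|\cA|M$ using Assumption~\ref{assum.causal} and the definition of $M$. Your write-up gives slightly more detail on the term-by-term bound of $\|\tilde{\Gamma}_i\|_1$, but the argument is identical.
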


Setting the network parameter as in (\ref{eq.sketch1.PiPara}) and using (\ref{eq.covering}), we have
\begin{align}
  \log\cN(\theta, \Pi_{\rm NN}, \|\cdot\|_{\Gamma})\leq C_1|\cA|\varepsilon^{-\frac{d}{\beta}}\log\frac{1}{\varepsilon} \left(\log^2\frac{1}{\varepsilon}+\log\frac{1}{\theta}\right)
  \label{eq.logCovering}
\end{align}
with $C_1$ depending on $\log D$, $d$, $B$, $\tau$, $\eta$, $\beta$, and the surface area of $\cM$.

Substituting (\ref{eq.logCovering}) and (\ref{eq.upperlimit}) into (\ref{eq.proof1.regret1}) gives
\begin{align}
R(\pi^*_{\beta}, \hat{\pi}_{\rm DR})
&\leq  |\cA|M\varepsilon + \sum_{j=1}^{|\cA|}\omega_j + 84|\cA|^2e^{2M}M\sqrt{\frac{\log 1/\delta}{2n_2}}\nonumber\\
&\quad+ \inf_{\lambda}~ 12|\cA| \lambda + \frac{288|\cA|}{\sqrt{n_2}} \int_{\lambda}^{|\cA|M + 2M/\eta} \sqrt{C_1|\cA|\varepsilon^{-\frac{d}{\beta}}\log\frac{1}{\varepsilon} \left(\log^2\frac{1}{\varepsilon}+\log\frac{1}{\theta}\right)} d\theta\nonumber\\
&\leq |\cA|M\varepsilon + \sum_{j=1}^{|\cA|}\omega_j + 84|\cA|^2e^{2M}M\sqrt{\frac{\log 1/\delta}{2n_2}}+ \inf_{\lambda}~ 12|\cA| \lambda \nonumber\\
&\quad+ C_2\frac{288|\cA|^{3/2}}{\sqrt{n_2}}M\eta^{-1} \varepsilon^{-\frac{d}{2\beta}} \sqrt{\log\frac{1}{\varepsilon} \left(\log^2\frac{1}{\varepsilon}+\log\frac{1}{\lambda}\right)}
\label{eq.thm1bound}
\end{align}
with probability no less than $1-6|\cA|\delta $ and $C_2$ depending on $\log D$, $d$, $B$, $\tau$, $\eta$, $\beta$, and the surface area of $\cM$.
Setting $\varepsilon=n_2^{-\frac{\beta}{2\beta+d}}, \delta=n_2^{-\frac{\beta}{2\beta+d}},\lambda=n_2^{-\frac{\beta}{2\beta+d}}$ implies (\ref{eq.thm1.para}) and (\ref{eq.thm1.holderbound}) in Theorem \ref{thm.holder}.
\end{proof}

\subsection{Proof of Corollary \ref{coral.holder}}\label{sec.proof.coral1}
\begin{proof}[Proof of Corollary \ref{coral.holder}.]
Corollary \ref{coral.holder} is proved based on Theorem \ref{thm.holder} and Lemma \ref{lemma:stage1rate}. We first derive an upper bound of the $\omega_j$'s using Lemma \ref{lemma:stage1rate}.
Taking an expectation on the both sides of (\ref{eq.omega}) gives rise to
\begin{align*}
\EE[\omega_j]&\leq \eta^{-1}|\cA|e^{2M}\EE\left[ \sqrt{\frac{1}{n_2} \sum_{i=n_1+1}^n\left(  \hat{\mu}_{A_j}(\bx_i) - \mu_{A_j}(\bx_i)\right)^2} \sqrt{\frac{1}{n_2} \sum_{i=n_1+1}^n\left(\hat{e}_{A_j}(\bx_i)- e_{A_j}(\bx_i)\right)^2}\right] \nonumber\\
& \leq \eta^{-1}|\cA|e^{2M}\left( \sqrt{\frac{1}{n_2} \sum_{i=n_1+1}^n\EE\left(  \hat{\mu}_{A_j}(\bx_i) - \mu_{A_j}(\bx_i)\right)^2} \sqrt{\frac{1}{n_2} \sum_{i=n_1+1}^n\EE \left(\hat{e}_{A_j}(\bx_i)- e_{A_j}(\bx_i)\right)^2}\right) \nonumber\\
&\leq \eta^{-1}|\cA|e^{2M}\sqrt{\EE\left[  \|\hat{\mu}_{A_j} - \mu_{A_j}\|_{L^2}^2\right]} \sqrt{\EE \left[\|e_{A_j} -\hat{e}_{A_j}\|_{L^2}^2\right]}\nonumber\\
&\leq C_1e^{2M}(M+\sigma) \eta^{-\frac{3\alpha+d}{2\alpha+d}}|\cA|^{\frac{4\alpha+d}{2\alpha+d}} n_1^{-\frac{2\alpha}{2\alpha+d}}\log^3 n_1,
\end{align*}
where the second inequality is due to Jensen's inequality and the unconfoundedness condition in Assumption \ref{assum.unconf}, the last inequality is due to Lemma \ref{lemma:stage1rate}, and $C_1$ is a constant depending on $\log D, d, B,\tau,\alpha$ and the surface area of $\cM$.

%

By  Markov's inequality, for any $\delta>0$,
\begin{align}
\PP\left(\omega_j>\delta\right)\leq \frac{\EE\left[\omega_j\right]}{ \delta}\leq \frac{1}{\delta}C_1G_1 n_1^{-\frac{2\alpha}{2\alpha+d}}\log^3 n_1,
\label{eq.coro1.markov}
\end{align}
where $G_1=e^{2M}(M+\sigma) \eta^{-\frac{3\alpha+d}{2\alpha+d}}|\cA|^{\frac{4\alpha+d}{2\alpha+d}}$.
Applying a union probability bound gives rise to
\begin{align}
  \PP\left(\sum_{j=1}^{|\cA|}\omega_j> |\cA|\delta\right)\leq \frac{C_1}{\delta}|\cA|G_1 n_1^{-\frac{2\alpha}{2\alpha+d}}\log^3 n_1.
  \label{eq.SumOmega}
\end{align}

Substituting (\ref{eq.SumOmega}) into (\ref{eq.thm1.holderbound}) and setting $\delta=C_1|\cA|G_1n_1^{-\frac{\alpha}{2\alpha+d}}$, we get
\begin{align*}
  R(\pi^*_{\beta}, \hat{\pi}_{\rm DR})& \leq C_2e^{2M}|\cA|^{2}(M+\sigma) n_2^{-\frac{\beta}{2\beta+d}}\log^{2}n_2+C_1|\cA|^2G_1n_1^{-\frac{\alpha}{2\alpha+d}}\nonumber\\
  &\leq C_3e^{2M}|\cA|^{\frac{8\alpha+2d}{2\alpha+d}} (M+\sigma)n^{-\frac{\alpha \wedge\beta}{2(\alpha \wedge\beta)+d}}\log^{2}n
\end{align*}
with probability no less than $1-C_4n^{-\frac{\alpha\wedge \beta}{2(\alpha\wedge \beta)+d}}\log^3 n$, where $C_4$ is an absolute constant, $C_2,C_3$ are constants depending on $\log D,d,B,\tau,\alpha,\beta,\eta$, and the surface area of $\cM$.
\end{proof}

\subsection{Proof of Theorem \ref{thm.optimal}}\label{sec.proof.thm2}
\begin{proof}[Proof of Theorem \ref{thm.optimal}.]
In Theorem \ref{thm.optimal}, $\pi^*$ is the unconstrained optimal policy. We prove Theorem \ref{thm.optimal} in a similar manner as  we prove Theorem \ref{thm.holder}. We first decompose the regret using an oracle inequality:
\begin{align}\label{eq.proof2.decomposition}
R(\pi^*, \hat{\pi}_{\rm DR}) = \underbrace{Q(\pi^*) - Q(\hat{\pi}^*)}_{\rm (I_2)} + \underbrace{Q(\hat{\pi}^*) - Q(\hat{\pi}_{\rm DR})}_{\rm (II_2)},
\end{align}
where $\hat{\pi}^*$ is the same as in (\ref{eqhatpistar}).
In (\ref{eq.proof2.decomposition}), ${\rm (I_2)}$ is the bias of approximating $\pi^*$ by the policy class $\Pi_{{\rm NN}(\tem)}^{|\cA|}$, and ${\rm (II_2)}$ is the same as ${\rm(II_1)}$ in (\ref{eq.sketch1.decomposition}) which can be bounded similarly.

Following the proof of Theorem \ref{thm.holder} and Corollary \ref{coral.holder}, we can derive that
  \begin{align*}
 {\rm (II_2)}\leq  C_1e^{2M}|\cA|^{\frac{8\alpha+2d}{2\alpha+d}}(M+\sigma)n^{-\frac{\alpha }{2\alpha +d}}\log^{2}n\log^{1/2} (1/\tem)
  \end{align*}
  with probability no less than $1-C_2n^{-\frac{\alpha}{2\alpha+d}}\log^3 n $ where $C_2$ is an absolute constant and $C_1$ depends on $\log D,d,B,\tau,\alpha$, and the surface area of $\cM$. In addition, $\hat{\pi}_{\rm DR}\in \Pi_{{\rm NN}(\tem)}^{|\cA|}$ with $L_{\Pi}, p_{\Pi}, K_{\Pi}, \kappa_{\Pi}$ and $R_{\Pi}$ given in (\ref{eq.thm1.para}).   It remains to show ${\rm (I_2)} \leq 2cMt^q+M|\cA|^2\exp\left[\left(-Mt+2n^{-\frac{2\alpha}{2\alpha+d}}\right)/H\right]$ for any $t\in(0,1)$.

\noindent {\bf  Bounding ${\rm (I_2)}$}.
We estimate $Q(\pi^*) - Q(\hat{\pi}^*)$ on two regions. The first region is, for any given $t\in(0,1)$,
\begin{align*}
  \chi_t=\left\{\bx~\big|~ \bx\in\cM, \mu_{A_{j^*(\bx)}}(\bx)-\max_{j\neq j^*(\bx)}\mu_{A_j}(\bx)\leq Mt\right\}
\end{align*}
with $j^*(\bx)=\argmax_j \mu_{A_j}(\bx)$. On $\chi_t$, the gap between $\mu_{A_{j^*(\bx)}}(\bx)$, the reward of the optimal action, and the reward of the second optimal action is smaller than $Mt$. Assumption \ref{assum.noise} yields $\PP(\chi_t)\leq ct^q$. The second region is
\begin{align*}
  \chi_t^{\complement}=\left\{\bx~\big|~ \bx\in\cM, \mu_{A_{j^*(\bx)}}(\bx)-\max_{j\neq j^*(\bx)}\mu_{A_j}(\bx)> Mt\right\}
\end{align*}
on which the gap between $\mu_{A_{j^*}}(\bx)$ and the reward of any other action is larger than $Mt$.

For any policy $\pi$, we have
  \begin{align*}
    Q(\pi)=\EE[Y(\pi(\bx))]=\int_{\cM}\left\langle\bmu(\bx),\pi(\bx)\right\rangle d\PP(\bx),
  \end{align*}
where $\bmu(\bx) = [\mu_{A_1}(\bx),\dots,\mu_{A_{|\cA|}}(\bx)]^{\top}$.
According to \citet[Theorem 2]{chen2019efficient}, for any $\varepsilon\in(0,1)$, there is a neural network architecture $\cF(L,p,K,\kappa,R)$ with
  \begin{align*}
    L=O\left(\log 1/\varepsilon\right), \ p=O\left(\varepsilon^{-\frac{d}{\alpha}}\right), \  K=O\left(\varepsilon^{-\frac{d}{\alpha}}\log 1/\varepsilon\right), \ \kappa=\max\{B,M,\sqrt{d},\tau^2\}, \ R=M,
  \end{align*}
  such that for each $\mu_{A_j}$, there exists $\tilde{\mu}_{A_j}\in \cF(L,p,K,\kappa,R)$ and $\|\tilde{\mu}_{A_j}-\mu_{A_j}\|_{\infty}\leq \varepsilon$.

  Define $\tilde{\pi}=\Softmax_{\tem}(\tilde{\mu}_{A_1},\dots,\tilde{\mu}_{A_{|\cA|}})$. Since $\hat{\pi}^*=\arg\max_{\pi\in\Pi_{{\rm NN}(\tem)}^{|\cA|}} Q(\pi)$, $\tilde{\pi}\in \Pi_{{\rm NN}(\tem)}^{|\cA|}$, and $Q(\hat{\pi}^*)\geq Q(\tilde{\pi})$, we have
  \begin{align}
    Q(\pi^*)-Q(\hat{\pi}^*)&\leq Q(\pi^*)-Q(\tilde{\pi})=\int_{\cM}\inner{\bmu(\bx)}{\pi^*(\bx)-\tilde{\pi}(\bx)}d\PP(\bx) \nonumber\\
    &= \int_{\chi_t}\inner{\bmu(\bx)}{\pi^*(\bx)-\tilde{\pi}(\bx)}d\PP(\bx) + \int_{\chi_t^{\complement}}\inner{\bmu(\bx)}{\pi^*(\bx)-\tilde{\pi}(\bx)}d\PP(\bx).
    \label{eq.proof2.bias}
  \end{align}
  The first integral in (\ref{eq.proof2.bias}) can be bounded as
  \begin{align}
    \int_{\chi_t}\inner{\bmu(\bx)}{\pi^*(\bx)-\tilde{\pi}(\bx)}d \PP(\bx)&\leq \int_{\chi_t}\|\bmu(\bx)\|_{\infty}\|\pi^*(\bx)-\tilde{\pi}(\bx)\|_{1}d\PP(\bx)\nonumber\\
    &\leq 2M\int_{\chi_t} 1 d\PP(\bx)\leq 2cMt^q \label{eq.proof2.chi}
  \end{align}
  where $\|\bmu(\bx)\|_{\infty}=\max_j |\mu_{A_j}(\bx)|$ and $\|\pi(\bx)\|_1=\sum_{j=1}^{|\cA|} |[\pi(\bx)]_j|$.
  For the second integral, we first derive an upper bound of $\|\pi^*(\bx)-\tilde{\pi}(\bx)\|_{\infty}$. Since $\pi^*$ is the unconstrained optimal policy, represented by a one-hot vector $\pi^*(\bx)=A_{j^*(\bx)}$, we deduce
  \begin{align*}
    [\pi^*(\bx)-\tilde{\pi}(\bx)]_j=\begin{cases}
      -\frac{\exp(\tilde{\mu}_{A_j}(\bx)/\tem)}{\sum_{k}\exp(\tilde{\mu}_{A_k}(\bx)/\tem)} &\mbox{ if } \pi^*(\bx)\neq A_j,\\
     \frac{\sum_{k\neq j}\exp(\tilde{\mu}_{A_k}(\bx)/\tem)}{\sum_{k}\exp(\tilde{\mu}_{A_k}(\bx)/\tem)} &\mbox{ if } \pi^*(\bx)=A_j,
    \end{cases}
  \end{align*}
  \begin{align*}
    \left|[\pi^*(\bx)-\tilde{\pi}(\bx)]_j\right|\leq\begin{cases}
      \frac{\max_{k\neq j^*}\exp((\mu_{A_k}(\bx)+\varepsilon)/\tem)}{\exp((\mu_{A_{j^*}}(\bx)-\varepsilon)/\tem)} &\mbox{ if } j\neq j^*(\bx),\\
     \frac{|\cA|\max_{k\neq j^*}\exp((\mu_{A_k}(\bx)+\varepsilon)/\tem)}{\exp((\mu_{A_{j^*}}(\bx)-\varepsilon)/\tem)} &\mbox{ if } j=j^*(\bx).
    \end{cases}
  \end{align*}
  Therefore $\|\pi^*(\bx)-\tilde{\pi}(\bx)\|_{\infty}\leq |\cA|\exp\left(\max_{k\neq j^*(\bx)}\left(\mu_{A_k}(\bx)-\mu_{A_{j^*}}(\bx)+2\varepsilon\right)/\tem\right)$.
  Thus
  \begin{align}
&\int_{\chi_t^{\complement}}\inner{\bmu(\bx)}{\pi^*(\bx)-\tilde{\pi}(\bx)}d\PP(\bx)\leq \int_{\chi_t^{\complement}} \|\bmu(\bx)\|_{1} \|(\pi^*(\bx)-\tilde{\pi}(\bx))\|_{\infty} d\PP(\bx) \nonumber\\
\leq& \int_{\chi_t^{\complement}} M|\cA|^2\exp\left((-Mt+2\varepsilon)/\tem\right)d\PP(\bx)\leq M|\cA|^2\exp\left((-Mt+2\varepsilon)/\tem\right). \label{eq.proof2.chic}
  \end{align}
  Combining (\ref{eq.proof2.chi}) and (\ref{eq.proof2.chic}),  and setting $\varepsilon=n^{-\frac{\alpha}{2\alpha+d}}$
  give rise to $$Q(\pi^*)-Q(\hat{\pi}^*)\leq 2cMt^q+M|\cA|^2\exp\left((-Mt+2n^{-\frac{\alpha}{2\alpha+d}})/\tem\right),$$ which completes the proof.
\end{proof}

\subsection{Proof of Theorem \ref{thm:continuous}}\label{sec.proof.thm3}
\begin{proof}[Proof of Theorem \ref{thm:continuous}.]
  We denote
  $$
  \hat{\pi}^*=\argmax_{\pi\in \Pi_{\textrm{ NN}(\tem)}^V} Q^{\rm (D)}(\pi),
  $$
  where $Q^{\rm (D)}(\pi)$ is defined in (\ref{eq:discretizedQ}).  
  The regret can be decomposed as
  \begin{align}
    R(\pi^*_{\rm C},\hat{\pi}_{\text{ C-DR}})=\underbrace{Q(\pi^*_{\rm C})-Q^{\rm (D)}(\hat{\pi}^*)}_{\rm (I_3)}+ \underbrace{Q^{\rm (D)}(\hat{\pi}^*) -Q^{\rm (D)}(\hat{\pi}_{\rm C-DR})}_{\rm (II_3)} +\underbrace{Q^{\rm (D)}(\hat{\pi}_{\rm C-DR})- Q(\hat{\pi}_{\rm C-DR})}_{\rm (III_3)}.
\label{eq.proof3.decompose}
  \end{align}
  In (\ref{eq.proof3.decompose}), ${\rm (I_3)}$ is the bias of approximating the optimal policy $\pi^*_{\rm C}$ using the neural network policy class $\Pi_{{\rm NN}(\tem)}^V$ in the discretized setting. ${\rm (II_3)}$ is the variance of the estimated policy in $\Pi_{{\rm NN}(\tem)}^V$. ${\rm (III_3)}$ characterizes the difference between the discretized policy reward and the continuous policy reward of $\hat{\pi}_{\rm C-DR}$. We next derive the bounds for each part.

  \noindent\textbf{Bounding ${\rm (I_3)}$.}
  By Assumption \ref{assum.regularity.continuous}, $\mu_{I_j} \in \cH^\alpha(\cM)$. According to \citet{chen2019efficient}, H\"older functions can be uniformly approximated by a neural network class if the network parameters are properly chosen. For any $\varepsilon\in(0,1)$ there exists a network architecture $\cF(L,p,K,\kappa,R)$ with
  \begin{align}
    L=O(\log 1/\varepsilon), p=O\left(\varepsilon^{-\frac{d}{\alpha}}\right), K=O\left(\varepsilon^{-\frac{d}{\alpha}}\log 1/\varepsilon\right), \kappa=\max\{B,M,\sqrt{d},\tau^2\}, R=M,
    \label{eq.proof3.para}
  \end{align}
  such that if the weight parameters are properly chosen, we have $\tilde{\mu}_{I_j}\in \cF(L,p,K,\kappa,R)$   satisfying $$\|\tilde{\mu}_{I_j}-\mu_{I_j}\|_{\infty}\leq \varepsilon.$$ We then define an intermediate policy $$\tilde{\pi}=\Softmax_\tem(\tilde{\mu}_{I_1},\dots,\tilde{\mu}_{I_V}).$$


Let $A^*(\bx)=\argmax_{A\in[0,1]} \mu(\bx, A)$. Then $\pi^*_{\rm C}(\bx)=A^*(\bx)$.
  After defining $\bmu(\bx)=[\mu_{I_1}(\bx),\dots,\mu_{I_V}(\bx)]^{\top}\in \RR^V$, we can bound ${\rm (I_3)}$ as
  \begin{align}
{\rm (I_3)}&=Q(\pi^*_{\rm C})-Q^{\rm (D)}(\hat{\pi}^*)\leq Q(\pi^*_{\rm C})-Q^{\rm (D)}(\tilde{\pi})=\int_{\cM} \mu(\bx,A^*(\bx))d\PP(\bx)- \int_{\cM} \langle \bmu(\bx) , \tilde{\pi}(\bx)\rangle d\PP(\bx) \nonumber\\
    &=\underbrace{\int_{\cM} \mu(\bx,A^*(\bx)) d\PP(\bx)- \int_{\cM} \langle \bmu(\bx) , [\ind_{\{A^*(\bx)\in I_1\}},\dots,\ind_{\{A^*(\bx)\in I_V\}}]^{\top}\rangle d\PP(\bx)}_{T_1} \nonumber \\
    &\quad+ \underbrace{\int_{\cM} \inner{\bmu(\bx)}{[\ind_{\{A^*(\bx)\in I_1\}},\dots,\ind_{\{A^*(\bx)\in I_V\}}]^{\top}}d\PP(\bx)-\int_{\cM} \langle \bmu(\bx) , \tilde{\pi}(\bx) \rangle d\PP(\bx)}_{T_2}.
    \label{eq.proof.continu.I.0}
  \end{align}
  If $A^*(\bx)\in I_j$, we denote $j^*(\bx)=j$ and $I_*(\bx)=I_j$. According to Assumption \ref{assum.regularity.continuous} and (\ref{eq.MC}), $M$ is a Lipschitz constant of the function $\mu(\bx,\cdot)$ for any fixed $\bx \in \cM$. Since $A^*(\bx)\in I_*(\bx)$, $|\mu(\bx,A^*(\bx))-\mu_{I_*(\bx)}(\bx)|\leq M/V$ for any $\bx\in\cM$. Hence $T_1$ can be bounded as
  \begin{align}
    T_1=\int_{\cM} \mu(\bx,A^*(\bx))-\mu_{I_*(\bx)}(\bx) d\PP(\bx)\leq M/V.
    \label{eq.proof.continu.T1}
  \end{align}
  We then derive the bound for $T_2$ on two regions. The first region is 
  \begin{align*}
  \chi_{t,\gamma}=\{\bx| \mu(\bx,A^*(\bx))-\mu(\bx,A)\leq Mt \mbox{ given } |A-A^*(\bx)|\geq \gamma\}
  \end{align*}
  and the second region is $ \chi_{t,\gamma}^{\complement}$. According to Assumption \ref{assum.noise.continuous}, $\PP(\chi_{t,\gamma})\leq ct^q(1-\gamma)$.

  $T_2$ is decomposed as
  \begin{align}
    T_2&=\int_{\chi_{t,\gamma}} \inner{\bmu(\bx)}{ [\ind_{\{A^*(\bx)\in I_1\}},\dots,\ind_{\{A^*(\bx)\in I_V\}}]^{\top}-\tilde{\pi}(\bx)}d\PP(\bx) \nonumber\\
    &\quad+\int_{\chi^{\complement}_{t,\gamma}} \inner{\bmu(\bx)}{[\ind_{\{A^*(\bx)\in I_1\}},\dots,\ind_{\{A^*(\bx)\in I_V\}}]^{\top}-\tilde{\pi}(\bx)}d\PP(\bx).
    \label{eq.proof3.T2}
  \end{align}
  The first integral in (\ref{eq.proof3.T2}) is bounded as
  \begin{align}
    \int_{\chi_{t,\gamma}} \inner{\bmu(\bx)}{[\ind_{\{A^*(\bx)\in I_1\}},\dots,\ind_{\{A^*(\bx)\in I_V\}}]^{\top}-\tilde{\pi}(\bx)}d\PP(\bx)\leq 2cMt^q(1-\gamma).
    \label{eq.proof.continu.T2.chi}
  \end{align}

  We then derive an upper bound of the second integral in (\ref{eq.proof3.T2}) in a way similar to the derivation of (\ref{eq.proof2.chic}). Denote
  \begin{align*}
    \Xi(\bx)= \left[-\frac{\exp(\tilde{\mu}_{I_1}(\bx)/\tem)}{\sum_{j=1}^V \exp(\tilde{\mu}_{I_j}(\bx)/\tem)},\dots,\frac{\sum_{j\neq j^*(\bx)}\exp(\tilde{\mu}_{I_j}(\bx)/\tem)}{\sum_{j=1}^V \exp(\tilde{\mu}_{I_j}(\bx)/\tem)},\dots,-\frac{\exp(\tilde{\mu}_{I_V}(\bx)/\tem)}{\sum_{j=1}^V \exp(\tilde{\mu}_{I_j}(\bx)/\tem)}\right]^{\top}.
  \end{align*}
  Similar to (\ref{eq.proof2.chic}), we have
  \begin{align}
    &\quad~ \int_{\chi^{\complement}_{t,\gamma}} \inner{\bmu(\bx)}{[\ind_{\{A^*(\bx)\in I_1\}},\dots,\ind_{\{A^*(\bx)\in I_V\}}]^{\top}-\tilde{\pi}(\bx)}d\PP(\bx) =\int_{\chi^{\complement}_{t,\gamma}} \inner{\bmu(\bx)}{\Xi(\bx)} d \PP(\bx)\nonumber\\
    &\leq\int_{\chi^{\complement}_{t,\gamma}}\|\bmu(\bx)\|_1 \|\Xi(\bx)\|_{\infty} d \PP(\bx)\leq VM\int_{\chi^{\complement}_{t,\gamma}} \|\Xi(\bx)\|_{\infty} d \PP(\bx)  \label{eq.proof.continu.T2.chic.0}.
  \end{align}

  To derive an upper bound of $\|\Xi\|_{\infty}$, we need a lower bound of $\mu_{I_*(\bx)}(\bx)-\mu_{I_j}(\bx)$ for any $1\leq j\leq V$ and $ j\neq j^*(\bx)$.
  By Assumption \ref{assum.regularity.continuous},
  For any $j$ and $\tilde{A} \in I_j$, one has
  \begin{align*}
    |\mu(\bx,\tilde{A})-\mu(\bx, A_j)|\leq M/V,
  \end{align*}
  and
  \begin{align*}
    |\mu_{I_j}(\bx)-\mu(\bx, A_j)|\leq \frac{1}{|I_j|}\int_{I_j} |\mu(\bx,A)-\mu(\bx,A_j)|dA\leq M/V
  \end{align*}
  where $|I_j|=1/V$ represents the length of $I_j$.

  As a result, on $\chi^{\complement}_{t,\gamma}$, for any $j\neq j^*(\bx)$, we have
  \begin{align*}
\mu_{I_*(\bx)}(\bx)-\mu_{I_j}(\bx)& \geq \mu(\bx, A_{j^*(\bx)})-\mu(\bx, A_j) -2M/V \nonumber\\
    &\geq \mu(\bx,A^*(\bx))-\mu(\bx, A_j)-|\mu(\bx,A^*(\bx))-\mu(\bx, A_{j^*(\bx)})|-2M/V\nonumber\\
    &\geq Mt-3M/V,
  \end{align*}
  where the last inequality holds for two reasons: (1) $A^*(\bx)\in I_*(\bx)$ and $A_{j^*(\bx)} \in I_*(\bx)$; (2) We set $V <1/(2\gamma) $, and then $j\neq j^*(\bx)$ implies $|A_j-A^*(\bx)|\geq 1/(2V)\geq \gamma$. We then deduce
  \begin{align}
    \|\Xi\|_{\infty}\leq (V-1)\exp(-(Mt-3M/V-2\varepsilon)/\tem).
    \label{eq.XiBound}
  \end{align}
  Plugging (\ref{eq.XiBound}) into (\ref{eq.proof.continu.T2.chic.0}), we have
  \begin{align}
    & \int_{\chi^{\complement}_{t,\gamma}} \inner{\bmu(\bx)}{[\ind_{\{A^*(\bx)\in I_1\}},\dots,\ind_{\{A^*(\bx)\in I_V\}}]^{\top}-\tilde{\pi}(\bx)}d\PP(\bx) \nonumber\\
\leq& VM\cdot (V-1)\exp(-(Mt-3M/V-2\varepsilon)/\tem)\nonumber\\
\leq& MV^2\exp(-(Mt-3M/V-2\varepsilon)/\tem) \label{eq.proof.continu.T2.chic}.
  \end{align}

  Substituting (\ref{eq.proof.continu.T1}), (\ref{eq.proof.continu.T2.chi}) and (\ref{eq.proof.continu.T2.chic}) into (\ref{eq.proof.continu.I.0}), if $V <1/(2\gamma)$, we have
  \begin{align}
    {\rm (I_3)}\leq \frac{M}{V}+2cMt^q(1-\gamma)+MV^2\exp\left(-(Mt-3M/V-2\varepsilon)/\tem\right).
    \label{eq.proof.continu.I}
  \end{align}
  \textbf{Bounding ${\rm (II_3)}$.} ${\rm (II_3)}$ has the same form as ${\rm (II_1)}$ in (\ref{eq.sketch1.decomposition}). We derive the upper bound by following the same procedure while $|\cA|$ is replaced $V$. Besides, we need to express the estimation error of $\hat{\mu}_{I_j}$'s and $\hat{e}_{I_j}$'s in terms of $V$.
  Note that $e_{I_j}\geq \eta/V$. By Lemma \ref{lemma:stage1rate}, we can find $\hat{\mu}_{I_j}\in \cF(L_1,p_1,K_1,\kappa_1,R_1)$ with
  \begin{align*}
  &L_1=O(\log (\eta n_1/V)), \quad p_1=O\left( (\eta n_1/V)^{\frac{d}{2\alpha+d}}\right), \quad K_1=O\left( (\eta n_1/V)^{\frac{d}{2\alpha+d}}\log (\eta n_1/V)\right), \\
  &\hspace{1.3in} \kappa_1=\max\{ B,M,\sqrt{d},\tau^2\}, \quad R_1=M,
  \end{align*}
  such that
  \begin{align}
  \EE\left[\|\hat{\mu}_{I_j}-\mu_{I_j}\|_{L^2}^2\right]\leq C_1(M^2+\sigma^2) (\eta n_1/V)^{-\frac{2\alpha}{2\alpha+d}}\log^3 (\eta n_1/V)
  \label{eq.proof3.hmu}
  \end{align}
  with $C_1$ being a constant depending on $\log D,B, \tau,\alpha$ and the surface area of $\cM$.
  Similarly, we can find $\hat{g}\in \cF(L_2,p_2,K_2,\kappa_2,R_2)$ with
  \begin{align*}
  &L_2=O(\log (n_1/V)),\ p_2=O\left( V^{-\frac{2d}{2\alpha+d}}n_1^{\frac{d}{2\alpha+d}}\right),\ K_2=O\left( V^{-\frac{2d}{2\alpha+d}}n_1^{\frac{d}{2\alpha+d}}\log (n_1/V)\right), \\
  &\hspace{1.3in}\kappa_2=\max\{ B,M,\sqrt{d},\tau^2\},\ R_2=M,
\end{align*}
such that
\begin{align}
  \EE[\|\hat{e}_{I_j}-e_{I_j}\|_{L^2}^2]\leq C_2 M^2V^{\frac{4\alpha}{2\alpha+d}} n_1^{-\frac{2\alpha}{2\alpha+d}}\log^3n_1
  \label{eq.proof3.he}
\end{align}
with $C_2$ depending on $\log D,B,M,\alpha$ and the surface area of $\cM$.

Following the proof of Corollary \ref{coral.holder} and using (\ref{eq.proof3.hmu}) and (\ref{eq.proof3.he}), we rewrite (\ref{eq.SumOmega}) as
\begin{align}
  \PP\left(\sum_{j=1}^{V}\omega_j\geq V\delta_1\right) \leq \frac{C_3G_2}{\delta_1}V^{\frac{9\alpha+3d}{2\alpha+d}} n_1^{-\frac{2\alpha}{2\alpha+d}}\log^3 n_1
  \label{eq.proof3.omega}
\end{align}
with $G_2=e^{2M}(M+\sigma)\eta^{-\frac{3\alpha+d}{2\alpha+d}}$ and $C_3$ being an constant depending on $\log D,B,\tau,\eta,\alpha$ and the surface area of $\cM$.

By replacing $|\cA|$ by $V$ and $\eta$ by $\eta/V$ in $\cE_1$ and $\cE_2$ in the proof of Theorem \ref{thm.holder}, and substituting (\ref{eq.proof3.omega}), one derives
\begin{align*}
Q^{\rm (D)}(\hat{\pi}^*)- Q^{\rm (D)}( \hat{\pi}_{\text{ C-DR}}) & \leq V \delta_1 + 84e^{2M}V^2M\sqrt{\frac{\log 1/\delta}{2n_2}}+\inf_{\lambda}~ 12V \lambda \nonumber\\
&\quad + \frac{288V}{\sqrt{n_2}} \int_{\lambda}^{VM + 2VM/\eta} \bigg[K_{\Pi} \log \big(\theta^{-1}(VM + 2VM/\eta)\times\nonumber\\
&\qquad L_{\Pi}^2(p_{\Pi}B/V+2)\max(\kappa_{\Pi},1/\tem)^{L_{\Pi}} (p_{\Pi}/V)^{L_{\Pi}+1}\big)\bigg]^{1/2} d\theta
\end{align*}
with probability no less than
$$1-6V\delta-\frac{C_3 G_2}{\delta_1} V^{\frac{9\alpha+3d}{2\alpha+d}} n_1^{-\frac{2\alpha}{2\alpha+d}}\log^3 n_1.$$
Here $\hat{\pi}_{\text{ C-DR}}\in \Pi_{{\rm NN}(\tem)}^V$ where the network class $\Pi_{{\rm NN}(\tem)}^V$ has the parameters
\begin{align*}
  L_{\Pi}=L,\ p_{\Pi}=O(Vp),\ K_{\Pi}=O(VK),\ \kappa_{\Pi}=\kappa,\ R_{\Pi}=R
\end{align*}
with $L,p,K,\kappa$ and $R$ defined in (\ref{eq.proof3.para}).

Setting $ \varepsilon=n^{-\frac{\alpha}{2\alpha+d}}, \delta=n^{-\frac{\alpha}{2\alpha+d}}, \delta_1=C_3G_2V^{\frac{3}{2}}n_1^{-\frac{\alpha}{2\alpha+d}}$ and $\lambda= V^{\frac{3}{2}}n_2^{-\frac{\alpha}{2\alpha+d}}$ gives rise to
\begin{align*}
  &L_{\Pi}=O(\log n), p_{\Pi}=O\left( Vn^{\frac{d}{2\alpha+d}}\right), K_{\Pi}=O\left( Vn^{\frac{d}{2\alpha+d}}\log n\right), \kappa_{\Pi}=\max\{B,M,\sqrt{d},  \tau^2\}, R_{\Pi}=M,
\end{align*}
and
\begin{align}
  {\rm (II_3)} \leq C_4e^{2M}(M+\sigma) V^{\frac{5}{2}} n^{-\frac{\alpha}{2\alpha+d}}\log^{2} n \log^{1/2} (1/\tem)
  \label{eq.proof.continu.II}
\end{align}
with probability no less than $1-C_5V^{\frac{12\alpha+3d}{2(2\alpha+d)}} n^{-\frac{\alpha}{2\alpha+d}}\log^3 n$, where $C_4$ is a constant depending on $\log D,B,\tau,\eta,\alpha$, and the surface area of $\cM$, $C_5$ is an absolute constant.

\textbf{Bounding ${\rm (III_3)}$.}
According to Lemma \ref{lem.diserror},
\begin{align}
  {\rm (III_3)}&=Q^{\rm (D)}(\hat{\pi}_{\text{ C-DR}})-Q(\hat{\pi}_{\text{ C-DR}})\leq M/V.
  \label{eq.proof.continu.III}
\end{align}


\textbf{Putting all ingredients together.}
Putting (\ref{eq.proof.continu.I}), (\ref{eq.proof.continu.II}) and (\ref{eq.proof.continu.III}) together and using $\varepsilon=n^{-\frac{\alpha}{2\alpha+d}}$ give rise to
\begin{align*}
  R(\pi^*_{\rm C},\hat{\pi}_{\text{ C-DR}})&\leq \frac{2M}{V} +C_4e^{2M}(M+\sigma) V^{\frac{5}{2}} n^{-\frac{\alpha}{2\alpha+d}}\log^{2} n \log^{1/2} 1/\tem \nonumber \\
  &\quad+2cMt^q(1-\gamma)+MV^2\exp\left(-\left(Mt-3M/V-2n^{-\frac{\alpha}{2\alpha+d}}\right)/\tem\right)
\end{align*}
with probability no less than $1-C_5V^{\frac{12\alpha+3d}{2(2\alpha+d)}} n^{-\frac{\alpha}{2\alpha+d}}\log^3 n $ for any $t$ and $\gamma<1/4V$.

Setting $V=n^{\frac{2\alpha}{7(2\alpha+d)}}, \gamma=\frac{1}{4V}$ and $t>\frac{2}{V}+2\varepsilon/M$, we get
\begin{align*}
  R(\pi^*_{\rm C},\hat{\pi}_{\text{ C-DR}})&\leq C_4e^{2M}(M+\sigma)n^{-\frac{2\alpha}{7(2\alpha+d)}}\log^{2} n \log^{1/2} 1/\tem \nonumber \\
  &\quad+2cMt^q+Mn^{\frac{4\alpha}{7(2\alpha+d)}} \exp\left(-\left(Mt-4Mn^{-\frac{2\alpha}{7(2\alpha+d)}}\right)/\tem\right)
\end{align*}
for any $t \in (2(1+1/M)n^{-\frac{2\alpha}{7(2\alpha+d)}},1)$ with probability no less than $1-C_6n^{-\frac{2\alpha^2+4\alpha d}{7(2\alpha+d)^2}}\log^3 n$,  where $C_6$ is an absolute constant. In addition, $\hat{\pi}_{\text{ C-DR}}\in \Pi_{{\rm NN}(\tem)}^V $ with $L_{\Pi},p_{\Pi},K_{\Pi},\kappa_{\Pi},R_{\Pi}$ defined in (\ref{eq.thm3.para}), $\hat{\mu}_{I_j}\in \cF(L_1,p_1,K_1,\kappa_1,R_1)$ for $j=1,\dots,V$ with the parameters defined in (\ref{eq.mupara.continuous}), $\hat{g}\in \cF(L_2,p_2,K_2,\kappa_2,R_2)$ with the parameters defined in (\ref{eq.epara.continuous}).
\end{proof}

\section{Conclusion and Discussion}\label{sec:discussion}
This paper establishes statistical guarantee for doubly robust off-policy learning by neural networks. The covariate is assumed to be on a low-dimensional manifold. Non-asymptotic regret bounds for the learned policy are proved in the finite-action scenario and in the continuous-action scenario.  Our results show that when the covariates exhibit low dimensional-structures, neural networks provide a fast convergence rate whose exponent depends on the intrinsic dimension of the manifold instead of the ambient dimension. Our results partially justify the success of neural networks in causal inference with high-dimensional covariates.


We finally provide some discussions in connection with the existing literature.

\vspace{0.1in}
\noindent {\bf $\bullet$ Sample Complexity Lower Bound without Low Dimensional Structures}. \citet{gao2020minimax} established a lower bound of the sample complexity for policy evaluation (or treatment effect estimation), when the covariates are in $\RR^D$ and do not have low-dimensional structures. Specifically, they assume that both the initial policy and reward functions belong to a H\"{o}lder space. The sample complexity needs to be at least exponential in the dimension $D$. This result shows that the rate can not be improved unless additional assumptions are made. By assuming that the covariates are on a $d$-dimensional manifold, our sample complexity only depends on the intrinsic dimension $d$.
We remark that \citet{gao2020minimax} studied the H\"{o}lder space with a H\"{o}lder index $\alpha\in(0,1]$, while we focus on the case of $\alpha\geq1$. In the case that $\alpha=1$, if we have $q\geq 1$ (in Assumption \ref{assum.noise}), 
Corollary \ref{corollaryd2} gives the convergence rate $O\left(n^{-\frac{1}{2+d}}\log^3n\right)$. This rate is better than the minimax rate $O\left(n^{-\frac{1}{2+D}}\right)$ from \citet{gao2020minimax} thanks to the low-dimensional structures of the covariates.

\vspace{0.1in}
\noindent {\bf $\bullet$ Nonconvex Optimization of Deep Neural Networks}. Our theoretical guarantees hold for the global optimum of \eqref{eq:hat_mu}-\eqref{eq:pidr}. However, solving these optimization can be difficult in practice. Some recent empirical and theoretical results have shown that large neural networks help to ease the optimization without sacrificing  statistical efficiency \citep{zhang2016understanding,arora2019fine,allen2019learning}. This is also referred to as an overparameterization phenomenon. We will leave it for future investigation.

\bibliographystyle{ims} 
\bibliography{ref} 

\begin{thebibliography}{74}
\expandafter\ifx\csname natexlab\endcsname\relax\def\natexlab#1{#1}\fi
\expandafter\ifx\csname url\endcsname\relax
  \def\url#1{\texttt{#1}}\fi
\expandafter\ifx\csname urlprefix\endcsname\relax\def\urlprefix{}\fi

\bibitem[{Allen-Zhu et~al.(2019)Allen-Zhu, Li and Liang}]{allen2019learning}
\textsc{Allen-Zhu, Z.}, \textsc{Li, Y.} and \textsc{Liang, Y.} (2019).
\newblock Learning and generalization in overparameterized neural networks,
  going beyond two layers.
\newblock In \textit{Advances {I}n {N}eural {I}nformation {P}rocessing
  {S}ystems}.

\bibitem[{Arora et~al.(2019)Arora, Du, Hu, Li and Wang}]{arora2019fine}
\textsc{Arora, S.}, \textsc{Du, S.~S.}, \textsc{Hu, W.}, \textsc{Li, Z.} and
  \textsc{Wang, R.} (2019).
\newblock Fine-grained analysis of optimization and generalization for
  overparameterized two-layer neural networks.
\newblock \textit{arXiv preprint arXiv:1901.08584}.

\bibitem[{Athey and Wager(2017)}]{athey2017efficient}
\textsc{Athey, S.} and \textsc{Wager, S.} (2017).
\newblock Efficient policy learning.
\newblock \textit{arXiv preprint arXiv:1702.02896}.

\bibitem[{Baptista et~al.(2000)Baptista, Caldas, Baptista, Baptista, Ferreira
  and Heller}]{baptista2000low}
\textsc{Baptista, M.~S.}, \textsc{Caldas, I.~L.}, \textsc{Baptista, M.~S.},
  \textsc{Baptista, C.~S.}, \textsc{Ferreira, A.~A.} and \textsc{Heller, M.
  V.~A.} (2000).
\newblock Low-dimensional dynamics in observables from complex and
  higher-dimensional systems.
\newblock \textit{Physica A: Statistical Mechanics and its Applications},
  \textbf{287} 91--99.

\bibitem[{Benkeser et~al.(2020)Benkeser, Cai, van~der Laan
  et~al.}]{benkeser2020nonparametric}
\textsc{Benkeser, D.}, \textsc{Cai, W.}, \textsc{van~der Laan, M.~J.}
  \textsc{et~al.} (2020).
\newblock A nonparametric super-efficient estimator of the average treatment
  effect.
\newblock \textit{Statistical Science}, \textbf{35} 484--495.

\bibitem[{Benkeser et~al.(2017)Benkeser, Carone, Laan and
  Gilbert}]{benkeser2017doubly}
\textsc{Benkeser, D.}, \textsc{Carone, M.}, \textsc{Laan, M. V.~D.} and
  \textsc{Gilbert, P.} (2017).
\newblock Doubly robust nonparametric inference on the average treatment
  effect.
\newblock \textit{Biometrika}, \textbf{104} 863--880.

\bibitem[{Bennett and Kallus(2020)}]{bennett2020efficient}
\textsc{Bennett, A.} and \textsc{Kallus, N.} (2020).
\newblock Efficient policy learning from surrogate-loss classification
  reductions.
\newblock \textit{arXiv preprint arXiv:2002.05153}.

\bibitem[{Beygelzimer and Langford(2009)}]{beygelzimer2009offset}
\textsc{Beygelzimer, A.} and \textsc{Langford, J.} (2009).
\newblock The offset tree for learning with partial labels.
\newblock In \textit{Proceedings of the 15th ACM SIGKDD international
  conference on Knowledge discovery and data mining}.

\bibitem[{Cao et~al.(2009)Cao, Tsiatis and Davidian}]{cao2009improving}
\textsc{Cao, W.}, \textsc{Tsiatis, A.~A.} and \textsc{Davidian, M.} (2009).
\newblock Improving efficiency and robustness of the doubly robust estimator
  for a population mean with incomplete data.
\newblock \textit{Biometrika}, \textbf{96} 723--734.

\bibitem[{Cassel et~al.(1976)Cassel, S{\"a}rndal and Wretman}]{cassel1976some}
\textsc{Cassel, C.~M.}, \textsc{S{\"a}rndal, C.~E.} and \textsc{Wretman, J.~H.}
  (1976).
\newblock Some results on generalized difference estimation and generalized
  regression estimation for finite populations.
\newblock \textit{Biometrika}, \textbf{63} 615--620.

\bibitem[{Chalupka et~al.(2014)Chalupka, Perona and
  Eberhardt}]{chalupka2014visual}
\textsc{Chalupka, K.}, \textsc{Perona, P.} and \textsc{Eberhardt, F.} (2014).
\newblock Visual causal feature learning.
\newblock \textit{arXiv preprint arXiv:1412.2309}.

\bibitem[{Chan et~al.(2016)Chan, Yam and Zhang}]{chan2016globally}
\textsc{Chan, K. C.~G.}, \textsc{Yam, S. C.~P.} and \textsc{Zhang, Z.} (2016).
\newblock Globally efficient non-parametric inference of average treatment
  effects by empirical balancing calibration weighting.
\newblock \textit{Journal of the Royal Statistical Society. Series B,
  Statistical methodology}, \textbf{78} 673.

\bibitem[{Chen et~al.(2019)Chen, Jiang, Liao and Zhao}]{chen2019efficient}
\textsc{Chen, M.}, \textsc{Jiang, H.}, \textsc{Liao, W.} and \textsc{Zhao, T.}
  (2019).
\newblock Efficient approximation of deep {R}e{LU} networks for functions on
  low dimensional manifolds.
\newblock In \textit{Advances in Neural Information Processing Systems}.

\bibitem[{Choi et~al.(2016)Choi, Chiu and Sontag}]{choi2016learning}
\textsc{Choi, Y.}, \textsc{Chiu, C. Y.-I.} and \textsc{Sontag, D.} (2016).
\newblock Learning low-dimensional representations of medical concepts.
\newblock \textit{AMIA Summits on Translational Science Proceedings},
  \textbf{2016} 41.

\bibitem[{Crump et~al.(2008)Crump, Hotz, Imbens and
  Mitnik}]{crump2008nonparametric}
\textsc{Crump, R.~K.}, \textsc{Hotz, V.~J.}, \textsc{Imbens, G.~W.} and
  \textsc{Mitnik, O.~A.} (2008).
\newblock Nonparametric tests for treatment effect heterogeneity.
\newblock \textit{The Review of Economics and Statistics}, \textbf{90}
  389--405.

\bibitem[{Demirer et~al.(2019)Demirer, Syrgkanis, Lewis and
  Chernozhukov}]{demirer2019semi}
\textsc{Demirer, M.}, \textsc{Syrgkanis, V.}, \textsc{Lewis, G.} and
  \textsc{Chernozhukov, V.} (2019).
\newblock Semi-parametric efficient policy learning with continuous actions.
\newblock \textit{arXiv preprint arXiv:1905.10116}.

\bibitem[{Dud{\'\i}k et~al.(2011)Dud{\'\i}k, Langford and Li}]{dudik2011doubly}
\textsc{Dud{\'\i}k, M.}, \textsc{Langford, J.} and \textsc{Li, L.} (2011).
\newblock Doubly robust policy evaluation and learning.
\newblock \textit{arXiv preprint arXiv:1103.4601}.

\bibitem[{Dudley(1967)}]{dudley1967sizes}
\textsc{Dudley, R.~M.} (1967).
\newblock The sizes of compact subsets of {H}ilbert space and continuity of
  {G}aussian processes.
\newblock \textit{Journal of Functional Analysis}, \textbf{1} 290--330.

\bibitem[{Farias and Li(2019)}]{farias2019learning}
\textsc{Farias, V.~F.} and \textsc{Li, A.~A.} (2019).
\newblock Learning preferences with side information.
\newblock \textit{Management Science}, \textbf{65} 3131--3149.

\bibitem[{Farrell et~al.(2018)Farrell, Liang and Misra}]{farrell2018deep}
\textsc{Farrell, M.~H.}, \textsc{Liang, T.} and \textsc{Misra, S.} (2018).
\newblock Deep neural networks for estimation and inference: Application to
  causal effects and other semiparametric estimands.
\newblock \textit{arXiv preprint arXiv:1809.09953}.

\bibitem[{Federer(1959)}]{federer1959curvature}
\textsc{Federer, H.} (1959).
\newblock Curvature measures.
\newblock \textit{Transactions of the American Mathematical Society},
  \textbf{93} 418--491.

\bibitem[{Fr{\"o}lich et~al.(2017)Fr{\"o}lich, Huber and
  Wiesenfarth}]{frolich2017finite}
\textsc{Fr{\"o}lich, M.}, \textsc{Huber, M.} and \textsc{Wiesenfarth, M.}
  (2017).
\newblock The finite sample performance of semi-and non-parametric estimators
  for treatment effects and policy evaluation.
\newblock \textit{Computational Statistics \& Data Analysis}, \textbf{115}
  91--102.

\bibitem[{Gao and Han(2020)}]{gao2020minimax}
\textsc{Gao, Z.} and \textsc{Han, Y.} (2020).
\newblock Minimax optimal nonparametric estimation of heterogeneous treatment
  effects.
\newblock \textit{arXiv preprint arXiv:2002.06471}.

\bibitem[{Hartford et~al.(2017)Hartford, Lewis, Leyton-Brown and
  Taddy}]{hartford2017deep}
\textsc{Hartford, J.}, \textsc{Lewis, G.}, \textsc{Leyton-Brown, K.} and
  \textsc{Taddy, M.} (2017).
\newblock Deep {IV}: A flexible approach for counterfactual prediction.
\newblock In \textit{Proceedings of the 34th International Conference on
  Machine Learning-Volume 70}. JMLR. org.

\bibitem[{Heckman(1977)}]{heckman1977sample}
\textsc{Heckman, J.~J.} (1977).
\newblock Sample selection bias as a specification error (with an application
  to the estimation of labor supply functions).
\newblock Tech. rep., National Bureau of Economic Research.

\bibitem[{Heckman and Vytlacil(2007)}]{heckman2007econometric}
\textsc{Heckman, J.~J.} and \textsc{Vytlacil, E.~J.} (2007).
\newblock Econometric evaluation of social programs, part {I}: Causal models,
  structural models and econometric policy evaluation.
\newblock \textit{Handbook of econometrics}, \textbf{6} 4779--4874.

\bibitem[{Hill(2011)}]{hill2011bayesian}
\textsc{Hill, J.~L.} (2011).
\newblock Bayesian nonparametric modeling for causal inference.
\newblock \textit{Journal of Computational and Graphical Statistics},
  \textbf{20} 217--240.

\bibitem[{Horvitz and Thompson(1952)}]{horvitz1952generalization}
\textsc{Horvitz, D.~G.} and \textsc{Thompson, D.~J.} (1952).
\newblock A generalization of sampling without replacement from a finite
  universe.
\newblock \textit{Journal of the American Statistical Association}, \textbf{47}
  663--685.

\bibitem[{Johansson et~al.(2016)Johansson, Shalit and
  Sontag}]{johansson2016learning}
\textsc{Johansson, F.}, \textsc{Shalit, U.} and \textsc{Sontag, D.} (2016).
\newblock Learning representations for counterfactual inference.
\newblock In \textit{International Conference on Machine Learning}.

\bibitem[{Kallus(2018)}]{kallus2018balanced}
\textsc{Kallus, N.} (2018).
\newblock Balanced policy evaluation and learning.
\newblock In \textit{Advances in Neural Information Processing Systems}.

\bibitem[{Kallus(2020)}]{kallus2020more}
\textsc{Kallus, N.} (2020).
\newblock More efficient policy learning via optimal retargeting.
\newblock \textit{Journal of the American Statistical Association} 1--13.

\bibitem[{Kallus and Santacatterina(2019)}]{kallus2019kernel}
\textsc{Kallus, N.} and \textsc{Santacatterina, M.} (2019).
\newblock Kernel optimal orthogonality weighting: A balancing approach to
  estimating effects of continuous treatments.
\newblock \textit{arXiv preprint arXiv:1910.11972}.

\bibitem[{Kallus and Zhou(2018{\natexlab{a}})}]{kallus2018confounding}
\textsc{Kallus, N.} and \textsc{Zhou, A.} (2018{\natexlab{a}}).
\newblock Confounding-robust policy improvement.
\newblock In \textit{Advances in Neural Information Processing Systems}.

\bibitem[{Kallus and Zhou(2018{\natexlab{b}})}]{kallus2018policy}
\textsc{Kallus, N.} and \textsc{Zhou, A.} (2018{\natexlab{b}}).
\newblock Policy evaluation and optimization with continuous treatments.
\newblock In \textit{International Conference on Artificial Intelligence and
  Statistics}.

\bibitem[{Kennedy(2020)}]{kennedy2020optimal}
\textsc{Kennedy, E.~H.} (2020).
\newblock Optimal doubly robust estimation of heterogeneous causal effects.
\newblock \textit{arXiv preprint arXiv:2004.14497}.

\bibitem[{Kennedy et~al.(2017)Kennedy, Ma, McHugh and Small}]{kennedy2017non}
\textsc{Kennedy, E.~H.}, \textsc{Ma, Z.}, \textsc{McHugh, M.~D.} and
  \textsc{Small, D.~S.} (2017).
\newblock Non-parametric methods for doubly robust estimation of continuous
  treatment effects.
\newblock \textit{Journal of the Royal Statistical Society: Series B
  (Statistical Methodology)}, \textbf{79} 1229--1245.

\bibitem[{Kim et~al.(2011)Kim, Herbst, Wistuba, Lee, Blumenschein, Tsao,
  Stewart, Hicks, Erasmus, Gupta et~al.}]{kim2011battle}
\textsc{Kim, E.~S.}, \textsc{Herbst, R.~S.}, \textsc{Wistuba, I.~I.},
  \textsc{Lee, J.~J.}, \textsc{Blumenschein, G.~R.}, \textsc{Tsao, A.},
  \textsc{Stewart, D.~J.}, \textsc{Hicks, M.~E.}, \textsc{Erasmus, J.},
  \textsc{Gupta, S.} \textsc{et~al.} (2011).
\newblock The battle trial: personalizing therapy for lung cancer.
\newblock \textit{Cancer discovery}, \textbf{1} 44--53.

\bibitem[{Kitagawa and Tetenov(2018)}]{kitagawa2018should}
\textsc{Kitagawa, T.} and \textsc{Tetenov, A.} (2018).
\newblock Who should be treated? {E}mpirical welfare maximization methods for
  treatment choice.
\newblock \textit{Econometrica}, \textbf{86} 591--616.

\bibitem[{Koulouriotis and Xanthopoulos(2008)}]{koulouriotis2008reinforcement}
\textsc{Koulouriotis, D.~E.} and \textsc{Xanthopoulos, A.} (2008).
\newblock Reinforcement learning and evolutionary algorithms for non-stationary
  multi-armed bandit problems.
\newblock \textit{Applied Mathematics and Computation}, \textbf{196} 913--922.

\bibitem[{Kuleshov and Precup(2014)}]{kuleshov2014algorithms}
\textsc{Kuleshov, V.} and \textsc{Precup, D.} (2014).
\newblock Algorithms for multi-armed bandit problems.
\newblock \textit{arXiv preprint arXiv:1402.6028}.

\bibitem[{Lee(2003)}]{lee2003introduction}
\textsc{Lee, J.} (2003).
\newblock \textit{Introduction to Smooth Manifolds}.
\newblock Graduate Texts in Mathematics, Springer.

\bibitem[{Lee et~al.(2020)Lee, Kennedy and Mitra}]{lee2020doubly}
\textsc{Lee, Y.}, \textsc{Kennedy, E.} and \textsc{Mitra, N.} (2020).
\newblock Doubly robust nonparametric instrumental variable estimators for
  survival outcomes.
\newblock \textit{arXiv preprint arXiv:2007.12973}.

\bibitem[{Liao and Maggioni(2019)}]{liao2019adaptive}
\textsc{Liao, W.} and \textsc{Maggioni, M.} (2019).
\newblock Adaptive geometric multiscale approximations for intrinsically
  low-dimensional data.
\newblock \textit{Journal of Machine Learning Research}, \textbf{20} 1--63.

\bibitem[{Lim(2018)}]{lim2018forecasting}
\textsc{Lim, B.} (2018).
\newblock Forecasting treatment responses over time using recurrent marginal
  structural networks.
\newblock In \textit{Advances in Neural Information Processing Systems}.

\bibitem[{Lopez-Paz et~al.(2017)Lopez-Paz, Nishihara, Chintala, Scholkopf and
  Bottou}]{lopez2017discovering}
\textsc{Lopez-Paz, D.}, \textsc{Nishihara, R.}, \textsc{Chintala, S.},
  \textsc{Scholkopf, B.} and \textsc{Bottou, L.} (2017).
\newblock Discovering causal signals in images.
\newblock In \textit{Proceedings of the IEEE Conference on Computer Vision and
  Pattern Recognition}.

\bibitem[{Lunceford and Davidian(2004)}]{lunceford2004stratification}
\textsc{Lunceford, J.~K.} and \textsc{Davidian, M.} (2004).
\newblock Stratification and weighting via the propensity score in estimation
  of causal treatment effects: a comparative study.
\newblock \textit{Statistics in medicine}, \textbf{23} 2937--2960.

\bibitem[{Mahoney and Drineas(2009)}]{mahoney2009cur}
\textsc{Mahoney, M.~W.} and \textsc{Drineas, P.} (2009).
\newblock Cur matrix decompositions for improved data analysis.
\newblock \textit{Proceedings of the National Academy of Sciences},
  \textbf{106} 697--702.

\bibitem[{Massart(2000)}]{massart2000some}
\textsc{Massart, P.} (2000).
\newblock Some applications of concentration inequalities to statistics.
\newblock In \textit{Annales de la Facult{\'e} des sciences de Toulouse:
  Math{\'e}matiques}, vol.~9.

\bibitem[{Maurer(2016)}]{maurer2016vector}
\textsc{Maurer, A.} (2016).
\newblock A vector-contraction inequality for rademacher complexities.
\newblock In \textit{International Conference on Algorithmic Learning Theory}.
  Springer.

\bibitem[{McDiarmid(1989)}]{mcdiarmid1989method}
\textsc{McDiarmid, C.} (1989).
\newblock On the method of bounded differences.
\newblock \textit{Surveys in Combinatorics}, \textbf{141} 148--188.

\bibitem[{Niyogi et~al.(2008)Niyogi, Smale and Weinberger}]{niyogi2008finding}
\textsc{Niyogi, P.}, \textsc{Smale, S.} and \textsc{Weinberger, S.} (2008).
\newblock Finding the homology of submanifolds with high confidence from random
  samples.
\newblock \textit{Discrete \& Computational Geometry}, \textbf{39} 419--441.

\bibitem[{Peyr{\'e}(2009)}]{peyre2009manifold}
\textsc{Peyr{\'e}, G.} (2009).
\newblock Manifold models for signals and images.
\newblock \textit{Computer Vision and Image Understanding}, \textbf{113}
  249--260.

\bibitem[{Pham and Shen(2017)}]{pham2017deep}
\textsc{Pham, T.~T.} and \textsc{Shen, Y.} (2017).
\newblock A deep causal inference approach to measuring the effects of forming
  group loans in online non-profit microfinance platform.
\newblock \textit{arXiv preprint arXiv:1706.02795}.

\bibitem[{Richardson et~al.(2014)Richardson, Hudgens, Gilbert and
  Fine}]{richardson2014nonparametric}
\textsc{Richardson, A.}, \textsc{Hudgens, M.~G.}, \textsc{Gilbert, P.~B.} and
  \textsc{Fine, J.~P.} (2014).
\newblock Nonparametric bounds and sensitivity analysis of treatment effects.
\newblock \textit{Statistical Science: A Review Journal of the Institute of
  Mathematical Statistics}, \textbf{29} 596.

\bibitem[{Robins et~al.(1994)Robins, Rotnitzky and Zhao}]{robins1994estimation}
\textsc{Robins, J.~M.}, \textsc{Rotnitzky, A.} and \textsc{Zhao, L.~P.} (1994).
\newblock Estimation of regression coefficients when some regressors are not
  always observed.
\newblock \textit{Journal of the American statistical Association}, \textbf{89}
  846--866.

\bibitem[{Roweis and Saul(2000)}]{roweis2000nonlinear}
\textsc{Roweis, S.~T.} and \textsc{Saul, L.~K.} (2000).
\newblock Nonlinear dimensionality reduction by locally linear embedding.
\newblock \textit{Science}, \textbf{290} 2323--2326.

\bibitem[{Rubin(1974)}]{rubin1974estimating}
\textsc{Rubin, D.~B.} (1974).
\newblock Estimating causal effects of treatments in randomized and
  nonrandomized studies.
\newblock \textit{Journal of Educational Psychology}, \textbf{66} 688.

\bibitem[{Sharma et~al.(2015)Sharma, Hofman and Watts}]{sharma2015estimating}
\textsc{Sharma, A.}, \textsc{Hofman, J.~M.} and \textsc{Watts, D.~J.} (2015).
\newblock Estimating the causal impact of recommendation systems from
  observational data.
\newblock In \textit{Proceedings of the Sixteenth ACM Conference on Economics
  and Computation}.

\bibitem[{Swaminathan and Joachims(2015)}]{swaminathan2015batch}
\textsc{Swaminathan, A.} and \textsc{Joachims, T.} (2015).
\newblock Batch learning from logged bandit feedback through counterfactual
  risk minimization.
\newblock \textit{The Journal of Machine Learning Research}, \textbf{16}
  1731--1755.

\bibitem[{Tenenbaum et~al.(2000)Tenenbaum, De~Silva and
  Langford}]{tenenbaum2000global}
\textsc{Tenenbaum, J.~B.}, \textsc{De~Silva, V.} and \textsc{Langford, J.~C.}
  (2000).
\newblock A global geometric framework for nonlinear dimensionality reduction.
\newblock \textit{Science}, \textbf{290} 2319--2323.

\bibitem[{Tsybakov(2008)}]{tsybakov2008introduction}
\textsc{Tsybakov, A.~B.} (2008).
\newblock \textit{Introduction to Nonparametric Estimation}.
\newblock Springer Science \& Business Media.

\bibitem[{Tsybakov et~al.(2004)}]{tsybakov2004optimal}
\textsc{Tsybakov, A.~B.} \textsc{et~al.} (2004).
\newblock Optimal aggregation of classifiers in statistical learning.
\newblock \textit{The Annals of Statistics}, \textbf{32} 135--166.

\bibitem[{Tu(2010)}]{tu2010introduction}
\textsc{Tu, L.} (2010).
\newblock \textit{An Introduction to Manifolds}.
\newblock Universitext, Springer New York.

\bibitem[{van Amsterdam et~al.(2019)van Amsterdam, Verhoeff, de~Jong, Leiner
  and Eijkemans}]{van2019eliminating}
\textsc{van Amsterdam, W.}, \textsc{Verhoeff, J.}, \textsc{de~Jong, P.},
  \textsc{Leiner, T.} and \textsc{Eijkemans, M.} (2019).
\newblock Eliminating biasing signals in lung cancer images for prognosis
  predictions with deep learning.
\newblock \textit{npj Digital Medicine}, \textbf{2} 1--6.

\bibitem[{Wainwright(2019)}]{wainwright2019high}
\textsc{Wainwright, M.~J.} (2019).
\newblock \textit{High-dimensional Statistics: A Non-asymptotic Viewpoint},
  vol.~48.
\newblock Cambridge University Press.

\bibitem[{Wang and Singh(2016)}]{wang2016noise}
\textsc{Wang, Y.} and \textsc{Singh, A.} (2016).
\newblock Noise-adaptive margin-based active learning and lower bounds under
  tsybakov noise condition.
\newblock In \textit{Thirtieth AAAI Conference on Artificial Intelligence}.

\bibitem[{Ward et~al.(2019)Ward, Zhou, Bambos, Wang and
  Scheinker}]{ward2019anesthesiologist}
\textsc{Ward, A.}, \textsc{Zhou, Z.}, \textsc{Bambos, N.}, \textsc{Wang, E.}
  and \textsc{Scheinker, D.} (2019).
\newblock Anesthesiologist surgery assignments using policy learning.
\newblock In \textit{ICC 2019-2019 IEEE International Conference on
  Communications (ICC)}. IEEE.

\bibitem[{Wasserman(2013)}]{wasserman2013all}
\textsc{Wasserman, L.} (2013).
\newblock \textit{All of Statistics: A Concise Course in Statistical
  Inference}.
\newblock Springer Science \& Business Media.

\bibitem[{Zhang et~al.(2012)Zhang, Tsiatis, Davidian, Zhang and
  Laber}]{zhang2012estimating}
\textsc{Zhang, B.}, \textsc{Tsiatis, A.~A.}, \textsc{Davidian, M.},
  \textsc{Zhang, M.} and \textsc{Laber, E.} (2012).
\newblock Estimating optimal treatment regimes from a classification
  perspective.
\newblock \textit{Stat}, \textbf{1} 103--114.

\bibitem[{Zhang et~al.(2016)Zhang, Bengio, Hardt, Recht and
  Vinyals}]{zhang2016understanding}
\textsc{Zhang, C.}, \textsc{Bengio, S.}, \textsc{Hardt, M.}, \textsc{Recht, B.}
  and \textsc{Vinyals, O.} (2016).
\newblock Understanding deep learning requires rethinking generalization.
\newblock \textit{arXiv preprint arXiv:1611.03530}.

\bibitem[{Zhang et~al.(2019)Zhang, Yao, Sun and Tay}]{zhang2019deep}
\textsc{Zhang, S.}, \textsc{Yao, L.}, \textsc{Sun, A.} and \textsc{Tay, Y.}
  (2019).
\newblock Deep learning based recommender system: A survey and new
  perspectives.
\newblock \textit{ACM Computing Surveys (CSUR)}, \textbf{52} 1--38.

\bibitem[{Zhao et~al.(2012)Zhao, Zeng, Rush and Kosorok}]{zhao2012estimating}
\textsc{Zhao, Y.}, \textsc{Zeng, D.}, \textsc{Rush, A.~J.} and \textsc{Kosorok,
  M.~R.} (2012).
\newblock Estimating individualized treatment rules using outcome weighted
  learning.
\newblock \textit{Journal of the American Statistical Association},
  \textbf{107} 1106--1118.

\bibitem[{Zhao et~al.(2015)Zhao, Zeng, Laber, Song, Yuan and
  Kosorok}]{zhao2015doubly}
\textsc{Zhao, Y.-Q.}, \textsc{Zeng, D.}, \textsc{Laber, E.~B.}, \textsc{Song,
  R.}, \textsc{Yuan, M.} and \textsc{Kosorok, M.~R.} (2015).
\newblock Doubly robust learning for estimating individualized treatment with
  censored data.
\newblock \textit{Biometrika}, \textbf{102} 151--168.

\bibitem[{Zhou et~al.(2018)Zhou, Athey and Wager}]{zhou2018offline}
\textsc{Zhou, Z.}, \textsc{Athey, S.} and \textsc{Wager, S.} (2018).
\newblock Offline multi-action policy learning: Generalization and
  optimization.
\newblock \textit{arXiv preprint arXiv:1810.04778}.

\end{thebibliography}

\newpage
 \begin{appendix}

 \section*{\begin{center}Supplementary Materials for Doubly Robust Off-Policy Learning on\\ Low-Dimensional Manifolds by Deep Neural Networks\end{center}}



\section{Proof of Lemma \ref{lemma:stage1rate}}\label{appendix.stage1rate}
\begin{proof}[Proof of Lemma \ref{lemma:stage1rate}.]
We first derive the error bound $\|\hat{\mu}_{A_j}-\mu_{A_j}\|_{L^2}$ for any $j = 1, \dots, |\cA|$. Note that $\hat{\mu}_{A_j}\in \cF(L_1,p_1,K_1,\kappa_1,R_1)$ is the minimizer of (\ref{eq:hat_mu}). If we choose
\begin{align}
  L_1=O(\log n_{A_j}),\ p_1=O\left( n_{A_j}^{\frac{d}{2\alpha+d}}\right),\ K_1=O\left( n_{A_j}^{\frac{d}{2\alpha+d}}\log n_{A_j}\right),\ \kappa_1=\max\{ B,M,\sqrt{d},\tau^2\},\ R_1=M,
  \label{eq.proof.mu.net0}
\end{align}
then according to \citet[Theorem 1]{chen2019efficient}, for each $j$, we have
\begin{align}
  \EE\left[\|\hat{\mu}_{A_j}-\mu_{A_j}\|_{L^2}^2\right]\leq C_1(M^2+\sigma^2) n_{A_j}^{-\frac{2\alpha}{2\alpha+d}}\log^3 n_{A_j},
  \label{eq.proof.mu.bound0}
\end{align}
where $n_{A_j}=\sum_{i=1}^{n_1} \mathds{1}_{\{\ba_i=A_j\}}$ and $C_1$ is a constant only depending on $\log D, B,\tau$ and the surface area of $\cM$. In (\ref{eq.proof.mu.bound0}) the expectation is taken with respect to the randomness of samples.

Next, we derive a high probability lower bound of $n_{A_j}$ for all $j$'s in terms of $n_1$. By Assumption \ref{assum.causal}(ii), $\EE(n_{A_j}/n_1)\geq \eta$. By \citet[Lemma 29]{liao2019adaptive}, we have
\begin{align*}
  \PP\left(\left| \frac{n_{A_j}}{n_1}-\EE\left(\frac{n_{A_j}}{n_1}\right)\right|\geq\frac{1}{2}\EE\left(\frac{n_{A_j}}{n_1}\right)\right) \leq 2\exp\left( -\frac{3}{28}n_1\EE\left(\frac{n_{A_j}}{n_1}\right)\right).
\end{align*}
Thus $n_{A_j}\geq \eta n_1/2$ holds with probability at least $1-2\exp\left( -\frac{3}{28}\eta n_1\right)$. Denote the event $E_1\coloneqq\{n_{A_j}\geq \eta n_1/2\}$ and its complement by $E_1^{\complement}$. When $n_1$ (so as $n$) is large enough, we have
\begin{align*}
\EE\left[\|\hat{\mu}_{A_j}-\mu_{A_j}\|_{L^2}^2\right]&=\EE\left[\|\hat{\mu}_{A_j}-\mu_{A_j}\|_{L^2}^2| E_1\right]\PP\left( E_1\right)+ \EE\left[\|\hat{\mu}_{A_j}-\mu_{A_j}\|_{L^2}^2| E_1^\complement\right]\PP\left( E_1^{\complement}\right)\nonumber\\
  &\leq C_1(M^2+\sigma^2) (\eta n_1)^{-\frac{2\alpha}{2\alpha+d}}\log^3 (\eta n_1) +2C_1(M^2+\sigma^2)\exp\left( -\frac{3}{28}\eta n_1\right)\nonumber\\
  &\leq C_2(M^2+\sigma^2) (\eta n_1)^{-\frac{2\alpha}{2\alpha+d}}\log^3 (\eta n_1),
\end{align*}
where $C_2$ is a constant depending on $\log D, B, \tau$ and the surface area of $\cM$.
Substituting $n_{A_j}=\eta n_1$ into (\ref{eq.proof.mu.net0})  gives rise to
$\hat{\mu}_{A_j}\in \cF(L_1,p_1,K_1,\kappa_1,R_1)$ with $L_1,p_1,K_1,\kappa_1,R_1$ in (\ref{eq.mu.parameter}).


To estimate $\EE \left[\|\hat{e}_{A_j} -e_{A_j}\|_{L^2}^2\right]$, we use $\cH^{\alpha}_{|\cA|-1}(\cM)$ to denote the space of the $|\cA|-1$ dimensional vectors whose elements are in $\cH^{\alpha}(\cM)$. We denote $g^*=\left[g_{A_1},\dots,g_{A_{|\cA|-1}}\right]^{\top}$ with $g_{A_j}=\log\frac{e_{A_j}}{e_{A_{|\cA|}}}$. According to Assumption \ref{assum.regularity}, $g_{A_j}=\log e_{A_j}-\log e_{A_{|\cA|}}\in \cH^{\alpha}$, $\|g_{A_j}\|_{\cH^{\alpha}}\leq M$ and $g^*\in \cH^{\alpha}_{|\cA|-1}(\cM)$. Let $\hat{g}$ be the minimizer of (\ref{eq.hatg}).
From \citet[Corollary 4]{maurer2016vector} and the proof of \citet[Theorem 2]{farrell2018deep}, setting $\cG_{\rm NN}=\cF(L,p,K,\kappa,R)$ gives rise to
\begin{align*}
  \|\hat{g}-g^*\|_{L^2}^2\leq C_3M^2\left( \frac{|\cA|LK\log K}{n_1}\log n_1+ \frac{|\cA|(\log\log n_1+\gamma)}{n_1}+\sup_{g'\in \cH^{\alpha}_{|\cA|-1}(\cM)}\inf_{g\in \cG_{NN}} \|g^*-g'\|_{\infty}^2\right)
\end{align*}
with probability at least $1-\exp(-\gamma)$, where $C_3$ is an absolute constant. 

According to \citet[Theorem 2]{chen2019efficient}, for any $\varepsilon_2\in(0,1)$, there exists a neural network architecture $\cF(L,p,K,\kappa,R)$ with
\begin{align*}
L=O\left(\log \frac{1}{\varepsilon_2}\right), p=O\left(|\cA|\varepsilon_2^{-\frac{d}{\alpha}}\right), K=O\left(|\cA|\varepsilon_2^{-\frac{d}{\alpha}}\log \frac{1}{\varepsilon_2}\right),\kappa=\max\{ B,M,\sqrt{d},\tau^2\},R=M
\end{align*}
such that for any $g\in \cH^{\alpha}_{|\cA|-1}(\cM)$, there exists $\tilde{g}\in \cF(L,p,K,\kappa,R)$ with
$\|\tilde{g}-g\|_{\infty}\leq \varepsilon_2$, where $\|g\|_{\infty}=\sup_{\xb\in \cM} \max_j |g_j(\bx)|$
. Setting $\varepsilon_2=|\cA|^{\frac{2\alpha}{2\alpha+d}}n_1^{-\frac{2\alpha}{2\alpha+d}}, \gamma=|\cA|^{-\frac{2d}{2\alpha+d}}n_1^{\frac{d}{2\alpha+d}}$ gives rise to $\cG_{\rm NN}=\cF(L_2,p_2,K_2,\kappa_2,R_2)$ with 
\begin{align*}
  &L_2=O(\log (n_1/|\cA|)),\ p_2=O\left( |\cA|^{-\frac{2d}{2\alpha+d}}n_1^{\frac{d}{2\alpha+d}}\right),\ K_2=O\left( |\cA|^{-\frac{2d}{2\alpha+d}}n_1^{\frac{d}{2\alpha+d}}\log (n_1/|\cA|)\right), \nonumber\\
  &\kappa_2=\max\{B,M,\sqrt{d},\tau^2\},\ R_2=M
\end{align*}
which implies (\ref{eq.e.parameter}).
Then with probability no less than $1-\exp\left(-|\cA|^{-\frac{2d}{2\alpha+d}}n_1^{\frac{d}{2\alpha+d}}\right)$, we deduce
\begin{align*}
  \|\hat{g}-g^*\|_{L^2}^2\leq C_4 M^2|\cA|^{\frac{4\alpha}{2\alpha+d}}n_1^{-\frac{2\alpha}{2\alpha+d}}\log^3n_1
\end{align*}
with $C_4$ depending on $\log D,B,\tau$ and the surface area of $\cM$. Denote the event
\begin{align*}
E_2=\left\{\|\hat{g}-g^*\|_{L^2}^2\leq C_4 M^2|\cA|^{\frac{4\alpha}{2\alpha+d}}n_1^{-\frac{2\alpha}{2\alpha+d}}\log^3n_1\right\}
.\end{align*}
When $n_1$ (so as $n$) is large enough, we obtain
\begin{align*}
\EE[\|\hat{g}-g^*\|_{L^2}^2]&=\EE[\|\hat{g}-g^*\|_{L^2}^2|E_2]\PP(E_2)+ \EE[\|\hat{g}-g^*\|_{L^2}^2|E_2^{\complement}]\PP(E_2^{\complement})\nonumber\\
  &\leq  C_4 M^2|\cA|^{\frac{4\alpha}{2\alpha+d}}n_1^{-\frac{2\alpha}{2\alpha+d}}\log^3n_1+4(M^2+\sigma^2) \exp\left(-|\cA|^{-\frac{2d}{2\alpha+d}}n_1^{\frac{d}{2\alpha+d}}\right)\nonumber\\
  &\leq C_5  M^2|\cA|^{\frac{4\alpha}{2\alpha+d}}n_1^{-\frac{2\alpha}{2\alpha+d}}\log^3n_1
\end{align*}
with $C_5$ depending on $\log D,B,\tau$ and the surface area of $\cM$.

Define $r_j(g)=\frac{\exp([g]_j)}{1+\sum_{k=1}^{|\cA|-1}\exp([g]_k)}$ for $j=1,\dots,|\cA|-1$. Since $\|\nabla r_j\|_{\infty}\leq 1$ for any $j$, we have
\begin{align*}
\EE\left[\|\hat{e}_{A_j}-e_{A_j}\|_{L^2}^2\right]&=\EE\left[\|r_j(\hat{g})-r_j(g^*)\|_{L^2}^2\right] \nonumber\\
  &\leq \EE\left[\left( \|\nabla r_j\|_{\infty} \|\hat{g}-g^*\|_{L^2} \right)^2\right]
 \leq C_5M^2|\cA|^{\frac{4\alpha}{2\alpha+d}}n_1^{-\frac{2\alpha}{2\alpha+d}}\log^3n_1.
\end{align*}
Similarly, one can show $\EE\left[\|\hat{e}_{A_{|\cA|}}-e_{A_{|\cA|}}\|_{L^2}^2\right]\leq C_5M^2|\cA|^{\frac{4\alpha}{2\alpha+d}}n_1^{-\frac{2\alpha}{2\alpha+d}}\log^3n_1.$
\end{proof}
\section{Proof of Lemma \ref{lem.diserror}}\label{proof.diserror}
\begin{proof}[Proof of Lemma \ref{lem.diserror}.]
Recall that
\begin{align*}
&Q^{\rm (D)}(\pi)= \int_{\cM} \inner{[\mu_{I_1}(\bx),\dots,\mu_{I_V}(\bx)]^{\top}}{\pi(\bx)}  d\PP(\bx),\\
&Q(\pi)=\int_{\cM} \inner{[\mu_{A_1}(\bx),\dots,\mu_{A_V}(\bx)]^{\top} }{ \pi(\bx)} d\PP(\bx).
\end{align*}
Since $L_{\mu}$ is a uniform Lipschitz constant of $\mu(\bx,\cdot)$ for any $\bx\in\cM$, we derive
\begin{align*}
Q^{\rm (D)}(\pi)-Q(\pi) = \int_{\cM} \inner{[\mu_{I_1}(\bx)-\mu(\bx, A_1),\dots,\mu_{I_V}(\bx)-\mu(\bx, A_V)]^{\top} }{ \pi(\bx) } d\PP(\bx)\leq L_{\mu}/V.
\end{align*}
\end{proof}
\section{Proof of Lemma \ref{lem.E1}}\label{proof.E1}
\begin{proof}[Proof of Lemma \ref{lem.E1}.]

We first use McDiarmid’s inequality (Lemma \ref{lem.McDiarmid}) to show $\cD(\Pi)$ concentrates around $\EE[\cD(\Pi)]$ and then derive a bound of $\EE[\cD(\Pi)]$. To simplify the notation, we omit the domain $\Pi$ in $\cD$.

We denote $\{\mathring{\Gamma}_i'\}_{i=1}^n$ as the counterpart of $\{\mathring{\Gamma}\}_{i=1}^n$ when one sample $(\bx_k,\Gamma_k)$ is replaced by $(\bx_k,\Gamma_k')$ for any $k$ with $1\leq k\leq n$. $\mathring{\Delta}'(\pi_1,\pi_2)$ and $\cD'$ are defined analogously. We have
\begin{align}
|\cD-\cD'| &\leq \sup_{\pi_1,\pi_2\in \Pi} \mathring{\Delta}(\pi_1,\pi_2)-\mathring{\Delta}'(\pi_1,\pi_2) \leq \sup_{\pi_1, \pi_2\in \Pi} \frac{1}{n} \left\langle \mathring{\Gamma}_k - \mathring{\Gamma}'_k, \pi_1(\bx_k) - \pi_2(\bx_k) \right\rangle \nonumber \\
& \leq \frac{1}{n} \norm{\mathring{\Gamma}_k - \mathring{\Gamma}'_k}_\infty \norm{\pi_1(\bx_k) - \pi_2(\bx_k)}_1 \leq \frac{2}{n} \norm{\mathring{\Gamma}_k - \mathring{\Gamma}'_k}_\infty \leq \frac{4}{n}J,
\label{eq.ReplaceSample.1}
\end{align}
where $\|\cdot\|_{\infty}$ and $\|\cdot\|_1$ stand for the $\ell^{\infty}$ and $\ell^1$ norm for vectors. Applying Lemma \ref{lem.McDiarmid} with $f = \cD$, we have
\begin{align}
&\PP\left(\cD - \EE[\cD]\geq t\right)\leq \exp\left(-2nt^2/\left(16J^2\right)\right).
\end{align}
Setting $t=4J\sqrt{\frac{\log 1/\delta}{2n}}$ gives rise to
\begin{align}
  \cD\leq \EE[\cD]+4J\sqrt{\frac{\log 1/\delta}{2n}}
  \label{eq.proof1.E1}
\end{align}
with probability no less than $1-\delta$.

We next derive a bound of $\EE[\cD]$
by symmetrization:
\begin{align*}
\EE[\cD] & = \EE\left[\sup_{\pi_1, \pi_2\in \Pi} \mathring{\Delta}(\pi_1, \pi_2) - \EE\left[\mathring{\Delta}(\pi_1, \pi_2)\right] \right] \leq \EE \left[\sup_{\pi_1, \pi_2\in \Pi} \mathring{\Delta}(\pi_1, \pi_2) - \mathring{\Delta}_{\rm copy}(\pi_1, \pi_2) \right] \nonumber\\
&= \EE \EE_{\bm{\xi}} \left[\sup_{\pi_1, \pi_2\in \Pi} \bm{\xi} \odot \left(\mathring{\Delta}(\pi_1, \pi_2) - \mathring{\Delta}_{\rm copy}(\pi_1, \pi_2)\right) \right] = 2 \EE \EE_{\bm{\xi}} \left[\sup_{\pi_1, \pi_2\in \Pi} \bm{\xi} \odot \mathring{\Delta}(\pi_1, \pi_2) \right],
\end{align*}
where $\mathring{\Delta}_{\rm copy}$ denotes $\mathring{\Delta}$ using independent copies of samples and $\bm{\xi}=[\xi_{1},\dots,\xi_{n} ]^\top$ with  $\xi_i$'s being i.i.d. Rademacher variables which take value $1$ or $-1$ with the same probability. Here $\bm{\xi} \odot \mathring{\Delta}$ denotes the entry-wise product of $\xi$ and $\mathring{\Delta}$, i.e.,
\begin{align*}
\bm{\xi} \odot \mathring{\Delta}(\pi_1, \pi_2): = \frac{1}{n} \sum_{i=1}^n \xi_i \left\langle \mathring{\Gamma}_i , \pi_1(\bx_i) - \pi_2(\bx_i) \right\rangle.
\end{align*}

We next apply Lemma \ref{lem.McDiarmid} with $f=\EE_{\bm{\xi}} \left[\sup_{\pi_1, \pi_2 \in \Pi} \bm{\xi} \odot \mathring{\Delta}(\pi_1, \pi_2) \right]$. Again, we denote $\{\mathring{\Gamma}_i'\}_{i=1}^n$ as the counterpart of $\{\mathring{\Gamma}\}_{i=1}^n$ when one sample $(\bx_k,\mathring{\Gamma}_k)$ is replaced by $(\bx_k',\mathring{\Gamma}_k')$ for any $k$ with $1\leq k\leq n$. $\mathring{\Delta}'(\pi_1,\pi_2)$ is defined analogously. We get
\begin{align}
&\EE_{\bm{\xi}} \left[\sup_{\pi_1, \pi_2\in \Pi} \bm{\xi} \odot \mathring{\Delta}(\pi_1, \pi_2) \right] - \EE_{\bm{\xi}} \left[\sup_{\pi_1, \pi_2\in \Pi} \bm{\xi} \odot \mathring{\Delta}'(\pi_1, \pi_2) \right]\nonumber \\
\leq& \EE_{\bm{\xi}} \left[\sup_{\pi_1, \pi_2\in \Pi} \frac{1}{n} \xi_k \left\langle \mathring{\Gamma}_k - \mathring{\Gamma}'_k, \pi_1(\bx_k) - \pi_2(\bx_k) \right\rangle \right] \nonumber \\
\leq &\frac{1}{n} \norm{\mathring{\Gamma}_k - \mathring{\Gamma}'_k}_\infty \norm{\pi_1(\bx_k) - \pi_2(\bx_k)}_1 \leq \frac{4}{n} J.
\label{eq.ReplaceSample.1}
\end{align}
Applying Lemma \ref{lem.McDiarmid} with $f=\EE_{\bm{\xi}} \left[\sup_{\pi_1, \pi_2 \in \Pi} \bm{\xi} \odot \tilde{\Delta}(\pi_1, \pi_2) \right]$ gives rise to
\begin{align}
&\PP\Bigg(\EE \EE_{\bm{\xi}} \bigg[\sup_{\pi_1, \pi_2 \in \Pi} \bm{\xi} \odot \mathring{\Delta}(\pi_1, \pi_2) \bigg] - \EE_{\bm{\xi}} \bigg[\sup_{\pi_1, \pi_2 \in \Pi} \bm{\xi} \odot \mathring{\Delta}(\pi_1, \pi_2) \bigg] \geq t\Bigg) \leq \exp\left(-2nt^2/\left(16J^2\right)\right). \label{eq.proof.symbound1}
\end{align}

Setting $t=4J\sqrt{\frac{\log 1/\delta}{2n_2}}$ gives rise to
\begin{align}
\PP\Bigg(\EE \EE_{\bm{\xi}} \bigg[\sup_{\pi_1, \pi_2 \in \Pi} \bm{\xi} \odot \mathring{\Delta}(\pi_1, \pi_2) \bigg] &- \EE_{\bm{\xi}} \bigg[\sup_{\pi_1, \pi_2 \in \Pi} \bm{\xi} \odot \mathring{\Delta}(\pi_1, \pi_2) \bigg]\geq 4J\sqrt{\frac{\log 1/\delta}{2n}}\Bigg)\leq \delta. \label{eq.proof.symbound}
\end{align}

The following lemma provides an upper bound of $\EE_{\bm{\xi}} \left[\sup\limits_{\pi_1, \pi_2 \in \Pi} \bm{\xi} \odot \mathring{\Delta}(\pi_1, \pi_2) \right]$ (see a proof in Appendix \ref{appendix.RademacherBound}):
\begin{lemma}\label{lem.RademacherBound}
   Let $\bm{\xi}$ be a set of Rademacher random variable and $\mathring{\Delta}(\pi_1, \pi_2) $ defined in (\ref{eq.breveDelta}). Then the following bound holds
  \begin{align}
\EE_{\bm{\xi}} \left[\sup_{\pi_1, \pi_2 \in \Pi} \bm{\xi} \odot \mathring{\Delta}(\pi_1, \pi_2) \right] \leq \inf_{\lambda}~ 2 \lambda + \frac{48}{\sqrt{n}} \int_{\lambda}^{\max\limits_{\pi \in \Pi} \norm{\pi}_\Gamma} \sqrt{\log \cN(\theta, \Pi, \norm{\cdot}_{\Gamma})} d\theta,
\label{eq.sketch1.RademacherBound}
\end{align}
where $\cN(\theta, \Pi_{\rm NN}^{|\cA|},\norm{\cdot}_{\Gamma})$ is the $\theta$-covering number (see Definition \ref{def.covering}) of $\Pi$ with respect to the measure $\|\pi\|_{\Gamma}=\sqrt{\frac{1}{n}\sum_{i=1}^n \langle \mathring{\Gamma}_i,\pi(\bx_i)\rangle^2}$.
\end{lemma}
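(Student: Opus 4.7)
The plan is to reduce the two-policy supremum to a single-policy Rademacher supremum, then apply Dudley's entropy integral with truncation. First, decompose
\begin{align*}
\bm{\xi}\odot\mathring{\Delta}(\pi_1,\pi_2)=\frac{1}{n}\sum_{i=1}^n \xi_i\langle \mathring{\Gamma}_i,\pi_1(\bx_i)\rangle-\frac{1}{n}\sum_{i=1}^n \xi_i\langle \mathring{\Gamma}_i,\pi_2(\bx_i)\rangle,
\end{align*}
and set $X_\pi=\frac{1}{n}\sum_i \xi_i\langle\mathring{\Gamma}_i,\pi(\bx_i)\rangle$. Taking the supremum and using that $\{\xi_i\}$ and $\{-\xi_i\}$ are equidistributed,
\begin{align*}
\EE_{\bm\xi}\sup_{\pi_1,\pi_2\in\Pi}\bm{\xi}\odot\mathring{\Delta}(\pi_1,\pi_2)\le \EE_{\bm\xi}\sup_{\pi_1\in\Pi}X_{\pi_1}+\EE_{\bm\xi}\sup_{\pi_2\in\Pi}(-X_{\pi_2})=2\,\EE_{\bm\xi}\sup_{\pi\in\Pi}X_\pi.
\end{align*}

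Next, I would verify that, conditionally on the samples $(\bx_i,\mathring{\Gamma}_i)$, the process $\{X_\pi\}_{\pi\in\Pi}$ is zero-mean and sub-Gaussian with respect to the pseudometric $d(\pi_1,\pi_2):=\|\pi_1-\pi_2\|_\Gamma/\sqrt{n}$. Indeed, Hoeffding's inequality applied to the bounded random variables $\xi_i\langle \mathring{\Gamma}_i,\pi_1(\bx_i)-\pi_2(\bx_i)\rangle$ gives
\begin{align*}
\PP\bigl(|X_{\pi_1}-X_{\pi_2}|\ge t\bigr)\le 2\exp\Bigl(-\frac{n t^2}{2\|\pi_1-\pi_2\|_\Gamma^2}\Bigr),
\end{align*}
which is exactly sub-Gaussianity with parameter $d(\pi_1,\pi_2)$. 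Fixing an arbitrary reference $\pi_0\in\Pi$, we have $\sup_\pi X_\pi=\sup_\pi(X_\pi-X_{\pi_0})$ up to the additive constant $X_{\pi_0}$, which has zero expectation and so contributes nothing to $\EE_{\bm\xi}\sup_\pi X_\pi$.

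Third, I would apply the truncated form of Dudley's entropy integral to the sub-Gaussian process $\{X_\pi-X_{\pi_0}\}$: for every $\lambda\ge 0$,
\begin{align*}
\EE_{\bm\xi}\sup_{\pi\in\Pi}(X_\pi-X_{\pi_0})\le \lambda+ 24\int_\lambda^{\mathrm{diam}_d(\Pi)}\sqrt{\log\cN(\epsilon,\Pi,d)}\,d\epsilon.
\end{align*}
A change of variables $\theta=\sqrt{n}\,\epsilon$ converts the $d$-covering numbers into $\|\cdot\|_\Gamma$-covering numbers and contributes the factor $1/\sqrt{n}$, while $\mathrm{diam}_d(\Pi)\le 2\max_{\pi\in\Pi}\|\pi\|_\Gamma/\sqrt{n}$ lets us replace the upper limit by $\max_\pi\|\pi\|_\Gamma$ at the cost of a constant absorbed into the prefactor. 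Multiplying by the factor $2$ from the symmetrization step and taking the infimum over $\lambda$ yields (\ref{eq.sketch1.RademacherBound}).

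The main technical subtlety is the truncated chaining step: one needs to carry out a standard successive-refinement argument at dyadic scales $\lambda\cdot 2^{-k}$, on each level picking a $\theta$-net of $(\Pi,d)$ and controlling the maximum over net points by a union bound against the sub-Gaussian tail. The $\lambda$ term absorbs the contribution from the finest scale below $\lambda$ (where the entropy integral may diverge), and the additive constants from geometric summation are what turn the ``$12$'' from single-scale Gaussian maxima into the ``$24$'' in Dudley's bound and then into the ``$48$'' after the doubling from symmetrization. Aside from this, the estimate is routine; the only other thing to track is that $X_{\pi_0}$ drops out because it is $\bm\xi$-measurable with zero mean, which is why the bound is stated in terms of $\max_\pi\|\pi\|_\Gamma$ and not a diameter.
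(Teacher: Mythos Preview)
Your approach is correct and ultimately the same chaining argument as the paper's, but organized differently. The paper does not first reduce to a single policy; instead it telescopes $\pi_1-\pi_2$ through a chain of dyadic coverings \emph{simultaneously} for both policies, and at each scale applies Massart's finite-class lemma directly to the Rademacher sum, then converts the resulting geometric sum to the entropy integral via $\delta_{i+1}+\delta_i = 6(\delta_{i+1}-\delta_{i+2})$. Your symmetry step $\EE_{\bm\xi}\sup_{\pi_1,\pi_2}(X_{\pi_1}-X_{\pi_2})=2\,\EE_{\bm\xi}\sup_\pi X_\pi$ is a clean simplification the paper does not exploit, and invoking Dudley as a black box is more concise than the paper's explicit Massart-based chaining; conversely, the paper's hands-on derivation makes every constant transparent.

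One bookkeeping point to tighten: the truncated Dudley bound you state, $\lambda+24\int_\lambda^{\mathrm{diam}_d}\sqrt{\log\cN(\epsilon,\Pi,d)}\,d\epsilon$ with $d=\|\cdot\|_\Gamma/\sqrt{n}$, is not the right form for the truncation term in that metric. For a generic sub-Gaussian process the finest-scale residual is $\EE\sup_{d(\pi,\pi')\le\lambda}|X_\pi-X_{\pi'}|$, which is not simply $\lambda$. What saves you here is the almost-sure Cauchy--Schwarz bound specific to Rademacher averages: $|X_\pi-X_{\pi'}|\le \tfrac{1}{n}\|\bm\xi\|_2\sqrt{\sum_i\langle\mathring\Gamma_i,(\pi-\pi')(\bx_i)\rangle^2}=\|\pi-\pi'\|_\Gamma$. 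So the residual is $\le\lambda$ when $\lambda$ is measured in $\|\cdot\|_\Gamma$, but $\le\sqrt{n}\,\lambda$ when measured in $d$. If you carry this $\sqrt{n}$ through your change of variables, you recover exactly the paper's $2\lambda+\tfrac{48}{\sqrt{n}}\int_\lambda^{\max\|\pi\|_\Gamma}\sqrt{\log\cN(\theta,\Pi,\|\cdot\|_\Gamma)}\,d\theta$; as written, your display would give the (false, in general) stronger bound with $2\lambda/\sqrt{n}$. This is a cosmetic fix, not a gap in the argument.
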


Substituting (\ref{eq.sketch1.RademacherBound}) into (\ref{eq.proof.symbound}) yields
\begin{align}
  \EE[\cD]&\leq\inf_{\lambda}~ 4 \lambda + \frac{96}{\sqrt{n}} \int_{\lambda}^{\max\limits_{\pi \in \Pi} \norm{\pi}_\Gamma} \sqrt{\log \cN(\theta, \Pi, \norm{\cdot}_{\Gamma})} d\theta  +8J\sqrt{\frac{\log 1/\delta}{2n}} \label{eq.proof1.boundEE1}
\end{align}
with probability no less than $1-\delta$.

Substituting (\ref{eq.proof1.boundEE1}) into (\ref{eq.proof1.E1}) give rise to
\begin{align}
  \cD\leq \inf_{\lambda}~ 4 \lambda + \frac{96}{\sqrt{n}} \int_{\lambda}^{\max\limits_{\pi \in \Pi} \norm{\pi}_\Gamma} \sqrt{\log \cN(\theta, \Pi, \norm{\cdot}_{\Gamma})} d\theta +12J\sqrt{\frac{\log 1/\delta}{2n}}
\end{align}
with probability no less than $1-2\delta$.
\end{proof}

\section{Proof of Lemma \ref{lem.covering}}\label{appendix.covering}
\begin{proof}[Proof of Lemma \ref{lem.covering}.]
We derive the bound of the covering number $\cN(\theta, \Pi_{\rm NN}^{|\cA|}, \norm{\cdot}_{\Gamma})$ using the covering number of the neural network class $\cN(\theta, \cF(L,p,K,\kappa,R), \|\cdot\|_{\infty})$. Let $\pi^{(1)}=\Softmax(\mu_{A_1}^{(1)},\dots,\mu_{A_{|\cA|}}^{(1)})$ and $\pi^{(2)}=\Softmax(\mu_{A_1}^{(2)},\dots,\mu_{A_{|\cA|}}^{(2)})$ be two policies in $\Pi_{\rm NN}^{|\cA|}(L_{\Pi},p_{\Pi},K_{\Pi},\kappa_{\Pi},R_{\Pi})$ such that for each $j$, $\|\mu_{A_j}^{(1)}-\mu_{A_j}^{(2)}\|_{\infty}\leq \theta$. By Assumption \ref{assum.causal} and (\ref{eq.M}), $\|\tilde{\Gamma}_i\|_{1}\leq 2M/\eta+|\cA|M$ for any $n_1\leq i\leq n$. Therefore we have
\begin{align*}
&\norm{\pi^{(1)} - \pi^{(2)}}_{\Gamma}^2 = \frac{1}{n_2}\sum_{i=n_1+1}^n \left\langle \tilde\Gamma_i, (\pi^{(1)} - \pi^{(2)})(\bx_i) \right\rangle^2 \nonumber\\
\leq& \frac{1}{n_2} \sum_{i=n_1+1}^n \norm{\tilde\Gamma_i}_{1}^2 \norm{(\pi^{(1)} - \pi^{(2)})(\bx_i)}_{\infty}^2  \leq \left(|\cA|M + 2M/\eta)\right)^2 \theta^2.
\end{align*}
Thus we obtain
\begin{align}
\cN(\theta, \Pi_{\rm NN}^{|\cA|}, \norm{\cdot}_{\Gamma}) \leq \cN\left(\theta/(|\cA|M + 2M/\eta), \Pi_{\rm NN}^{|\cA|}, \norm{\cdot}_\infty\right).
\label{eq.cover.1}
\end{align}
Since for every $\pi\in\Pi_{\rm NN}^{|\cA|}$ with $L_{\Pi}=L,p_{\Pi}=|\cA|p,K_{\Pi}=|\cA|K,\kappa_{\Pi}=\kappa,R_{\Pi}=R$, it contains $|\cA|$ parallel ReLU networks in $\cF(L,p,K,\kappa,R)$ with an additional softmax layer, we have $\cN(\theta, \Pi_{\rm NN}^{|\cA|}, \norm{\cdot}_\infty)\leq\cN(\theta, \cF(L,p,K,\kappa,R), \norm{\cdot}_\infty)^{|\cA|}$. From \citet[Proof of Theorem 3.1]{chen2019efficient}, we have
\begin{align*}
  \cN(\theta, \cF(L,p,K,\kappa,R), \norm{\cdot}_\infty)\leq \left(\frac{2L^2 (pR+2) \kappa^L p^{L+1}}{\theta}\right)^K.
\end{align*}
We get
\begin{align}
\cN(\theta, \Pi_{\rm NN}, \|\cdot\|_{\infty}) \leq \left(\frac{2L^2 (pR+2) \kappa^L p^{L+1}}{\theta}\right)^{|\cA|K}.
\label{eq.cover.2}
\end{align}
Combining (\ref{eq.cover.1}) and (\ref{eq.cover.2}) proves Lemma \ref{lem.covering}.
\end{proof}

\section{Proof of Lemma \ref{lem.integral}}\label{appendix.integral}
\begin{proof}[Proof of Lemma \ref{lem.integral}.]
For any $\pi\in \Pi_{\rm NN}^{|\cA|}$,
\begin{align*}
\|\pi\|_{\Gamma}^2&\leq \frac{1}{n_2}\sum_{i=n_1+1}^n \langle \tilde{\Gamma}_i, \pi(\bx_i)\rangle^2 \leq \frac{1}{n_2}\sum_{i=n_1+1}^n \|\tilde{\Gamma}_i\|_{1}^2 \|\pi(\bx_i)\|_{\infty}^2 \leq \frac{1}{n_2}\sum_{i=n_1+1}^n \|\tilde{\Gamma}_i\|_{1}^2.
\end{align*}
By Assumption \ref{assum.causal} and (\ref{eq.M}), $\|\tilde{\Gamma}_i\|_{1}\leq 2M/\eta+|\cA|M$ for any $n_1\leq i \leq n$. Therefore we obtain
\begin{align*}
  \|\pi\|_{\Gamma}^2\leq (2M/\eta+|\cA|M)^2.
\end{align*}
\end{proof}

\section{Proof of Lemma \ref{lem.RademacherBound}}\label{appendix.RademacherBound}
We first define the covering number of a set.
\begin{definition}\label{def.covering}
  Let $\cF$ be a set equipped with metric $\rho$. For any $\delta>0$, a $\delta$-covering of $\cF$ is a set $\{f_1,\dots,f_N\}\subset\cF$ such that for any $f\in \cF$, there exists $f_k$ for $1\leq k\leq N$ with $\rho(f_k,f)\leq \theta$. The $\delta$-covering number of $\cF$ is defined as
  \begin{align}
    \cN(\delta,\cF,\rho)=&\inf \{N: \mbox{ there exists } \{f_1,\dots,f_N\} \mbox{ which is a $\theta$-covering of $\cF$}\}.
  \end{align}
\end{definition}
\begin{proof}[Proof of Lemma \ref{lem.RademacherBound}.]
To bound $\EE_{\bm{\xi}} \left[\sup\limits_{\pi_1, \pi_2 \in \Pi} \bm{\xi} \odot \mathring{\Delta}(\pi_1, \pi_2) \right]$  with respect to the measure $\|\cdot\|_{\Gamma}$, we construct a series of $I$ coverings of $\Pi$ with resolutions $\{\delta_i\}_{i=1}^I$ satisfying $\delta_{i+1} = \frac{1}{2} \delta_i$. The elements in the $(i)$-th covering are denoted as $\{\pi_i^{(i)}\}_{i=1}^{N^{(i)}}$, where the $N^{{(i)}}$'s are to be determined later. Thus for any $\pi \in \Pi$, there exists $\pi^{(i)}$ in the ${(i)}$-th covering such that
\begin{align*}
\sqrt{\frac{1}{n}\sum_{i=1}^n \left\langle \mathring \Gamma_i, \pi(\bx_i) - \pi^{(i)}(\bx_i) \right\rangle^2} \leq \delta_i.
\end{align*}

Let $\pi_1^{(i)}$ denote the closest element of $\pi_1$ in the ${(i)}$-th covering, and $\pi_2^{(i)}$ is defined analogously. We now expand $\pi_1 - \pi_2$ using a telescoping sum:
\begin{align}
\pi_1 - \pi_2 = \left(\pi_1 - \pi_1^{(I)} + \sum_{i=1}^{I-1} \pi_1^{(i+1)} - \pi_1^{(i)} + \pi_1^{(1)}\right) - \left(\pi_2 - \pi_2^{(I)} + \sum_{i=1}^{I-1} \pi_2^{(i+1)} - \pi_2^{(i)} + \pi_2^{(1)}\right).
\label{eq.proof.tele}
\end{align}
Substituting (\ref{eq.proof.tele}) into $\EE_{\bm{\xi}} \left[\sup\limits_{\pi_1, \pi_2 \in \Pi} \bm{\xi} \otimes \mathring{\Delta}(\pi_1, \pi_2) \right]$, due to the bi-linearity of $\mathring{\Delta}$, we have
\begin{align}
&\EE_{\bm{\xi}} \left[\sup_{\pi_1, \pi_2 \in \Pi} \bm{\xi} \otimes \mathring{\Delta}(\pi_1, \pi_2) \right] \nonumber\\
\leq& \EE_{\bm{\xi}} \left[\sup_{\pi_1 \in \Pi} \frac{1}{n} \sum_{i=1}^n \xi_i \left\langle \mathring{\Gamma}_i, \left(\pi_1 - \pi_1^{(I)} + \sum_{i=1}^{I-1} \pi_1^{(i+1)} - \pi_1^{(i)} + \pi_1^{(1)}\right)(\bx_i) \right\rangle \right] \nonumber \\
&  + \EE_{\bm{\xi}} \left[ \sup_{\pi_2 \in \Pi} \frac{1}{n} \sum_{i=1}^n \xi_i \left\langle \tilde{\Gamma}_i, \left(\pi_2 - \pi_2^{(I)} + \sum_{i=1}^{I-1} \pi_2^{(i+1)} - \pi_2^{(i)} + \pi_2^{(1)}\right)(\bx_i) \right\rangle \right]. \label{eq.proof.Exi}
\end{align}
By the construction of the coverings, we immediately have
\begin{align}
&\EE_{\bm{\xi}} \left[\sup_{\pi_1 \in \Pi} \frac{1}{n} \sum_{i=1}^n \xi_i \left\langle \mathring{\Gamma}_i, \left(\pi_1 - \pi_1^{(I)}\right) (\bx_i) \right\rangle \right] \nonumber \\
\leq& \EE_{\bm{\xi}} \left[\sup_{\pi_1 \in \Pi} \frac{1}{n} \norm{\bm{\xi}}_2 \sqrt{\sum_{i=1}^n \left\langle \mathring{\Gamma}_i, \left(\pi_1 - \pi_1^{(I)}\right) (\bx_i) \right\rangle^2} \right] \leq \delta_I. \label{eq.proof.coverI}
\end{align}
We can also check
\begin{align}
&\sqrt{\sum_{i=1}^n \left\langle \mathring\Gamma_i, \pi^{(i+1)}(\bx_i) - \pi^{(i)}(\bx_i) \right\rangle^2} = \sqrt{\sum_{i=1}^n \left\langle \mathring\Gamma_i, \pi^{(i+1)}(\bx_i) - \pi(\bx_i) + \pi(\bx_i) - \pi^{(i)}(\bx_i) \right\rangle^2} \nonumber \\
\leq& \sqrt{2\sum_{i=1}^n \left\langle \mathring{\Gamma}_i, \pi^{(i+1)}(\bx_i) - \pi(\bx_i) \right\rangle^2 + 2\sum_{i=1}^n \left\langle \mathring{\Gamma}_i, \pi(\bx_i) - \pi^{(i)}(\bx_i) \right\rangle^2} \nonumber\\
\leq &\sqrt{2n(\delta_{i+1}^2 + \delta_i^2)}\leq \sqrt{2n}(\delta_{i+1}+\delta_i). \label{eq.proof.coveri}
\end{align}
Using Lemma \ref{lem.Massart}, we have
\begin{align}
&\EE_{\bm{\xi}} \left[\sup_{\pi_1 \in \Pi} \frac{1}{n} \sum_{i=1}^n \xi_i \left\langle \mathring\Gamma_i, \pi_1^{(i+1)}(\bx_i) - \pi_1^{(i)}(\bx_i) \right\rangle \right] \nonumber \\
\leq& \frac{2(\delta_{i+1} + \delta_i)\sqrt{\log (\cN(\delta_{i}, \Pi, \norm{\cdot}_{\Gamma}) \cN(\delta_{i+1}, \Pi, \norm{\cdot}_{\Gamma}))}}{\sqrt{n}}  \leq \frac{4(\delta_{i+1} + \delta_i) \sqrt{\log \cN(\delta_{i+1}, \Pi, \norm{\cdot}_{\Gamma})}}{\sqrt{n}}\label{eq.proof.coveri1},
\end{align}
where the metric in the covering is $\norm{\pi}_{\Gamma} = \sqrt{\frac{1}{n}\sum_{i=1}^n \left\langle \mathring \Gamma_i, \pi(\bx_i) \right\rangle^2}$. Substituting (\ref{eq.proof.coverI}), (\ref{eq.proof.coveri1}) into (\ref{eq.proof.Exi}), and invoking the identity $\delta_{i+1} + \delta_i = 6(\delta_{i+1} - \delta_{i+2})$ yield
\begin{align*}
&\EE_{\bm{\xi}} \left[\sup_{\pi_1, \pi_2 \in \Pi} \bm{\xi} \otimes \mathring{\Delta}(\pi_1, \pi_2) \right] \leq 2\delta_I + \sum_{i=1}^{I-1} \frac{8(\delta_{i+1} + \delta_i) \sqrt{\log \cN(\delta_{i+1}, \Pi, \norm{\cdot}_{\Gamma})}}{\sqrt{n}} \nonumber \\
\leq& 2 \delta_I + \frac{48(\delta_{i+1} - \delta_{i+2}) \sqrt{\log \cN(\delta_{i+1}, \Pi, \norm{\cdot}_{\Gamma})}}{\sqrt{n}} \leq 2 \delta_I + \frac{48}{\sqrt{n}} \int_{\delta_I}^{\delta_1} \sqrt{\log \cN(\tau, \Pi, \norm{\cdot}_{\Gamma})} d\tau.
\end{align*}
Choosing $\delta_1 = \max_{\pi \in \Pi} \norm{\pi}_\Gamma$ so that the first covering only consists of one element, we derive
\begin{align*}
\EE_{\bm{\xi}} \left[\sup_{\pi_1, \pi_2 \in \Pi} \bm{\xi} \odot \mathring{\Delta}(\pi_1, \pi_2) \right] \leq \inf_{\lambda}~ 2 \lambda + \frac{48}{\sqrt{n}} \int_{\lambda}^{\max\limits_{\pi \in \Pi} \norm{\pi}_\Gamma} \sqrt{\log \cN(\theta, \Pi, \norm{\cdot}_{\Gamma})} d\theta.
\end{align*}
\end{proof}

\section{Some Useful Lemmas}\label{appendix:helper}

\begin{lemma}\label{lem.holder.integral}
  Let $f(\bx,A)$ be any function defined on $\cM\times[0,1]$. Assume there exists $M>0$ such that
  \begin{align}
    \sup_{A\in [0,1]} \|f(\cdot,A)\|_{\cH^{\alpha}(\cM)}\leq M \mbox{ and } \sup_{\bx\in\cM}|f(\bx,A)-f(\bx,\tilde{A})|\leq M|A-\tilde{A}|,\ \forall A,\tilde{A}\in[0,1].
  \end{align}
  Then $F^{(I)}=\int_I f(\bx,A)dA\in \cH^{\alpha}(\cM)$ satisfies $\|F^{(I)}\|_{\cH^{\alpha}(\cM)}\leq M|I|$ for any interval $I\subset [0,1]$ where $|I|$ is the length of $I$.
\end{lemma}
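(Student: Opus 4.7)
The plan is to work chart by chart and push the integration over $A$ through the H\"older norm by Minkowski's integral inequality. Fix an arbitrary chart $(U,\phi)$ in the fixed $C^\infty$ atlas of $\cM$. By definition,
$$
F^{(I)}\circ\phi^{-1}(\bx)\;=\;\int_I f(\phi^{-1}(\bx),A)\,dA\;=\;\int_I (f(\cdot,A)\circ\phi^{-1})(\bx)\,dA,\qquad \bx\in\phi(U).
$$
The first assumption gives $\|f(\cdot,A)\circ\phi^{-1}\|_{\cH^\alpha(\phi(U))}\le M$ uniformly in $A$, so every partial derivative $\partial^{\bs}(f(\cdot,A)\circ\phi^{-1})$ of order $|\bs|\le\lceil\alpha-1\rceil$ exists and is uniformly bounded in $(\bx,A)$ by $M$; the second assumption, Lipschitz continuity in $A$, ensures joint measurability and supplies an integrable dominating function. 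These are exactly the hypotheses needed to differentiate under the integral sign, so for every multi-index $\bs$ with $|\bs|\le\lceil\alpha-1\rceil$,
$$
\partial^{\bs}\bigl(F^{(I)}\circ\phi^{-1}\bigr)(\bx)\;=\;\int_I \partial^{\bs}(f(\cdot,A)\circ\phi^{-1})(\bx)\,dA.
$$

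Next I would bound the two pieces of $\|F^{(I)}\|_{\cH^\alpha(U)}$ separately. For $|\bs|<\lceil\alpha-1\rceil$, the triangle inequality gives
$$
\sup_{\bx\in\phi(U)}\bigl|\partial^{\bs}(F^{(I)}\circ\phi^{-1})(\bx)\bigr|\;\le\;\int_I \sup_{\bx\in\phi(U)}\bigl|\partial^{\bs}(f(\cdot,A)\circ\phi^{-1})(\bx)\bigr|\,dA.
$$
For $|\bs|=\lceil\alpha-1\rceil$, the same interchange applied to the difference quotient yields
$$
\frac{\bigl|\partial^{\bs}(F^{(I)}\circ\phi^{-1})(\bx)-\partial^{\bs}(F^{(I)}\circ\phi^{-1})(\by)\bigr|}{\|\bx-\by\|_2^{\alpha-\lceil\alpha-1\rceil}}\;\le\;\int_I \frac{\bigl|\partial^{\bs}(f(\cdot,A)\circ\phi^{-1})(\bx)-\partial^{\bs}(f(\cdot,A)\circ\phi^{-1})(\by)\bigr|}{\|\bx-\by\|_2^{\alpha-\lceil\alpha-1\rceil}}\,dA.
$$
Taking sups on the left, then the max over the two ranges of $|\bs|$, and exchanging sup with the integral (Minkowski-type inequality), I obtain
$$
\|F^{(I)}\|_{\cH^\alpha(U)}\;\le\;\int_I \|f(\cdot,A)\circ\phi^{-1}\|_{\cH^\alpha(\phi(U))}\,dA\;\le\;\int_I M\,dA\;=\;M|I|.
$$
Since this holds for every chart in the fixed atlas, taking the supremum over charts gives $\|F^{(I)}\|_{\cH^\alpha(\cM)}\le M|I|$, as claimed.

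The only genuine technical point is justifying the derivative-integral interchange on the manifold; this is standard once one works in local coordinates and uses the uniform-in-$A$ bound on the H\"older norm as a dominating function, with the Lipschitz-in-$A$ hypothesis providing the continuity needed for measurability. The rest is a direct application of Minkowski's integral inequality to each of the two summands in the H\"older norm, and I do not anticipate any further obstacle.
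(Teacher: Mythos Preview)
Your proposal is correct and follows essentially the same route as the paper: work in a fixed chart, justify differentiation under the integral sign via the uniform-in-$A$ derivative bounds and dominated convergence, then bound the sup-norm part and the H\"older seminorm part separately. The only notable difference is that you combine the two summands before integrating, which cleanly yields the sharp constant $M|I|$, whereas the paper bounds each summand by $M|I|$ separately (so the written combination technically gives $2M|I|$ for the sum, though the paper's concluding line only claims finiteness).
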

\begin{proof}[Proof of Lemma \ref{lem.holder.integral}.]
To show $F^{(I)}\in \cH^{\alpha}(\cM)$, it is sufficient to show $\|F^{(I)}\|_{\cH^{\alpha}(U)}<\infty$ for any chart $(U,\phi)$ of $\cM$. For simplicity, we denote
$$F^{(I)}_{\phi}(\bz)\coloneqq F^{(I)}\circ\phi^{-1}(\bz),\ f_{\phi}(\bz,A)\coloneqq f(\phi^{-1}(\bz),A)$$
for $\bz\in \phi(U)$. Then $F^{(I)}_{\phi}(\bz)=\int_I f_{\phi}(\bz,A)dA$. 

We first consider $0<\alpha\leq1$. In this case, we have
\begin{align}
\|F^{(I)}_{\phi}\|_{\cH^{\alpha}(\phi(U))}&=
\sup\limits_{ \bz\neq \by \in \phi(U)}\frac{|F^{(I)}_{\phi}(\bz)- F^{(I)}_{\phi}(\by)|}{\|\bz-\by\|_2^{\alpha}}\nonumber\\
&\leq
\sup\limits_{ \bz\neq \by \in \phi(U)}\int_I\frac{ |f_{\phi}(\bz,A)- f_{\phi}(\by,A)|}{\|\bz-\by\|_2^{\alpha}}dA\leq M|I|< \infty,
\end{align}
which implies $F^{(I)}\in \cH^{\alpha}(\cM)$.

Next we consider $\alpha>1$.
We first show that $\partial^{\bs} F^{(I)}_{\phi}(\bz)=\int_I \partial^{\tilde{\bs}} f_{\phi}(\bz,A)dA$ for any $|\bs|\leq \lceil \alpha-1 \rceil$ where $\tilde{\bs}=[\bs^{\top},0]^{\top}$.
Let $\{h_n\}_{n=1}^{\infty}$ be any sequence converging to 0. When $|\bs|=1$, by definition, we have
\begin{align*}
  \partial^{\bs} F^{(I)}_{\phi}(\bz)&=\lim_{n\rightarrow\infty} \frac{F^{(I)}_{\phi}(\bz+h_n\bs)-F^{(I)}_{\phi}(\bz)}{h_n} =\lim_{n\rightarrow\infty} \int_I \frac{f_{\phi}(\bz+h_n\bs,A)-f_{\phi}(\bz,A)}{h_n}dA.
\end{align*}
Since $\|f_{\phi}(\bz,A)\|_{\cH^{\alpha}(\phi(U))}\leq M$ for any fixed $A\in[0,1]$, by the mean value theorem,
\begin{align*}
\left|\frac{f_{\phi}(\bz+h_n\bs,A)-f_{\phi}(\bz,A)}{h_n}\right|\leq \max_{\tilde{\bz}\in\phi(U)} |\partial^{\tilde{\bs}}f_{\phi}(\tilde{\bz},A)|\leq M.
\end{align*}
Since
\begin{align}
\lim_{n\rightarrow\infty} \frac{f_{\phi}(\bz+h_n\bs,A)-f_{\phi}(\bz,A)}{h_n}=\partial^{\tilde{\bs}}f_{\phi}(\bz,A)
\label{eq.derivativeDef}
\end{align}
and by the dominated convergence theorem, we obtain
\begin{align*}
  \partial^{\bs} F^{(I)}_{\phi}(\bz)&=\lim_{n\rightarrow\infty} \frac{F^{(I)}_{\phi}(\bz+h_n\bs)-F^{(I)}_{\phi}(\bz)}{h_n}\nonumber\\
  &= \int_I \lim_{n\rightarrow\infty}\frac{f_{\phi}(\bz+h_n\bs,A)-f_{\phi}(\bz,A)}{h_n}dA= \int_I \partial^{\tilde{\bs}}f_{\phi}(\bz,A)dA.
\end{align*}
Similarly, for any $|\bs|\leq \lceil \alpha-1 \rceil$, $\partial^{\tilde{\bs}}f(\bx,A)$ can be expressed in the form similar to (\ref{eq.derivativeDef}) using the Taylor series. Following the same procedure, one can show
\begin{align*}
  \partial^{\bs} F^{(I)}_{\phi}(\bz)= \int_I \partial^{\tilde{\bs}}f_{\phi}(\bz,A)dA
\end{align*}
for any $|\bs|\leq \lceil \alpha-1 \rceil$. Therefore we have
\begin{align}
  \max_{|\bs|\leq \lceil \alpha-1 \rceil}\sup_{\bz\in \phi(U)} |\partial^{\bs} F^{(I)}_{\phi}|\leq M|I|<\infty,
  \label{eq.holderint.1}
\end{align}
where $|I|$ represents the length of $I$.

On the other hand,
\begin{align}
&\max\limits_{|\bs|=\lceil \alpha-1\rceil}
\sup\limits_{ \bz\neq \by \in \phi(U)}\frac{|\partial^{\bs}F^{(I)}_{\phi}(\bz)- \partial^{\bs}F^{(I)}_{\phi}(\by)|}{\|\bz-\by\|_2^{\alpha - \lceil \alpha-1\rceil}}\nonumber\\
\leq& \max\limits_{|\bs|=\lceil \alpha-1\rceil}
\sup\limits_{ \bz\neq \by \in \phi(U)}\int_I\frac{ |\partial^{\tilde{\bs}}f_{\phi}(\bz,A)- \partial^{\tilde{\bs}}f_{\phi}(\by,A)|}{\|\bz-\by\|_2^{\alpha - \lceil \alpha-1\rceil}}dA\leq M|I|< \infty.
\label{eq.holderint.2}
\end{align}
Combining (\ref{eq.holderint.1}) and (\ref{eq.holderint.2}) gives $\|F^{(I)}\|_{\cH^{\alpha}(U)}<\infty$ for any chart $(U,\phi)$ which implies $F^{(I)}\in \cH^{\alpha}(\cM)$.

\end{proof}

\begin{lemma}\label{lem.holder.frac}
  Assume Assumption \ref{assum.cM}. Let $f,g\in \cH^{\alpha}(\cM)$ with $\inf_{\bx\in\cM}g(\bx)\geq \eta>0$. Let $M>0$ be a constant such that  $\|f\|_{\cH^{\alpha}(\cM)}\leq M$ and $\|g\|_{\cH^{\alpha}(\cM)}\leq M$. Then we have $f/g\in \cH^{\alpha}(\cM)$ with $\|f/g\|_{\cH^{\alpha}(\cM)}\leq 2^{\frac{5+\lceil \alpha-1\rceil}{2}\lceil \alpha-1\rceil}(M/\eta)^{2^{\lceil \alpha\rceil}}(2B+1)$.
\end{lemma}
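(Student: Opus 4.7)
The plan is to reduce the claim to a Euclidean statement via the chart definition of $\cH^\alpha(\cM)$, and then assemble the quotient bound from two ingredients: a product-rule bound for Hölder norms, and a reciprocal bound obtained by induction on the smoothness order. By the definition of $\|\cdot\|_{\cH^\alpha(\cM)}$, it suffices to bound $\|(f/g)\circ\phi^{-1}\|_{\cH^\alpha(\phi(U))}$ uniformly over all charts $(U,\phi)$ of the fixed atlas. Writing $\tilde f = f\circ\phi^{-1}$ and $\tilde g = g\circ\phi^{-1}$ on $\phi(U)\subset\RR^d$, we have $\|\tilde f\|_{\cH^\alpha},\|\tilde g\|_{\cH^\alpha}\le M$ and $\tilde g\ge\eta$, and the problem becomes bounding $\|\tilde f/\tilde g\|_{\cH^\alpha(\phi(U))}$.

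First, I would establish a Hölder product rule of the form $\|uv\|_{\cH^\alpha}\le c_1(\alpha)\|u\|_{\cH^\alpha}\|v\|_{\cH^\alpha}$ on $\phi(U)$. For $|\bs|\le\lceil\alpha-1\rceil$, Leibniz's formula $\partial^\bs(uv)=\sum_{\bt\le\bs}\binom{\bs}{\bt}\partial^\bt u\,\partial^{\bs-\bt}v$ controls the sup-norm of derivatives, and the identity $u(\bx)v(\bx)-u(\by)v(\by)=(u(\bx)-u(\by))v(\bx)+u(\by)(v(\bx)-v(\by))$ (applied to each top-order derivative $\partial^\bs(uv)$ for $|\bs|=\lceil\alpha-1\rceil$) controls the Hölder seminorm. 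Summing binomial coefficients contributes a factor $\le 2^{\lceil\alpha-1\rceil}$, and the factor $(2B+1)$ enters here because it uniformly bounds $\sup_{\bx,\by\in\phi(U)}\|\bx-\by\|_2^{\alpha-\lceil\alpha-1\rceil}\vee 1$ (via Assumption~\ref{assum.cM}), which is needed to dominate lower-order sup-norm terms by the top-order Hölder seminorm when rewriting the combined norm.

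Second, I would prove the reciprocal bound $\|1/\tilde g\|_{\cH^\alpha(\phi(U))}\le c_2(\alpha)(M/\eta)^{2^{\lceil\alpha\rceil}-1}/\eta$ by induction on $k=\lceil\alpha\rceil$. The base case $0<\alpha\le 1$ is direct: $|1/\tilde g(\bx)-1/\tilde g(\by)|=|\tilde g(\by)-\tilde g(\bx)|/(\tilde g(\bx)\tilde g(\by))\le (M/\eta^2)\|\bx-\by\|^\alpha$ together with $|1/\tilde g|\le 1/\eta$. For the inductive step, differentiate to obtain $\partial_i(1/\tilde g)=-(\partial_i\tilde g)\cdot(1/\tilde g)^2$; then
\begin{align*}
\|1/\tilde g\|_{\cH^\alpha}\le 1/\eta+\max_i\|\partial_i(1/\tilde g)\|_{\cH^{\alpha-1}}\le 1/\eta+c_1(\alpha-1)^2\,\|\tilde g\|_{\cH^\alpha}\,\|1/\tilde g\|_{\cH^{\alpha-1}}^2.
\end{align*}
This yields a recursion $R_k\le 1/\eta+CMR_{k-1}^2$, which self-similarly squares at every level, producing the doubly exponential $R_k\lesssim(M/\eta)^{2^k}$. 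Because the reciprocal bound is the main driver of the final exponent, this is where the prefactor $2^{(5+\lceil\alpha-1\rceil)\lceil\alpha-1\rceil/2}$ accumulates; the $5/2$ inside the exponent comes from combining $c_1(\alpha)\sim 2^{\lceil\alpha-1\rceil}$ twice per level together with one extra factor contributed by squaring $(1/\tilde g)^2$.

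Finally, writing $\tilde f/\tilde g=\tilde f\cdot(1/\tilde g)$ and applying the product bound from the first step gives
\begin{align*}
\|\tilde f/\tilde g\|_{\cH^\alpha(\phi(U))}\le c_1(\alpha)\|\tilde f\|_{\cH^\alpha}\|1/\tilde g\|_{\cH^\alpha}\le c_1(\alpha)\,M\cdot c_2(\alpha)(M/\eta)^{2^{\lceil\alpha\rceil}-1}/\eta,
\end{align*}
which after collecting constants matches the claimed bound $2^{(5+\lceil\alpha-1\rceil)\lceil\alpha-1\rceil/2}(M/\eta)^{2^{\lceil\alpha\rceil}}(2B+1)$. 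Taking the supremum over charts then completes the proof. The main obstacle will be the bookkeeping of constants through the induction: in particular, tracking how the combinatorial $2^{\lceil\alpha-1\rceil}$-factors from Leibniz's rule, the squaring in the reciprocal step, and the geometry-dependent $(2B+1)$ (needed to absorb lower-order sup-norms into the top-order Hölder seminorm) combine to give the precise prefactor stated in the lemma. Everything else — the Leibniz expansion, the quotient-rule derivative identity, and the base case — reduces to routine computation once the constants are laid out.
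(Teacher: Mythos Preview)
Your proposal is correct in outline and would yield a bound of the stated form, but it takes a genuinely different route from the paper. The paper does not decompose $f/g=f\cdot(1/g)$ at all; instead it works directly with the quotient, repeatedly applying the quotient rule to $f_\phi/g_\phi$ and observing that for $|\bs|\le\lceil\alpha-1\rceil$ the derivative $\partial^\bs(f_\phi/g_\phi)$ has the structural form $\big(\sum_i G_i\big)/g_\phi^{2^{|\bs|}}$, where each $G_i$ is a product of $2^{|\bs|}$ factors drawn from $\{\partial^{\bar\bs}f_\phi,\partial^{\bar\bs}g_\phi:|\bar\bs|\le|\bs|\}$ and the sum has $2^{(1+|\bs|)|\bs|/2}$ terms. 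The sup bound and the H\"older seminorm bound are then obtained by telescoping this explicit expression at two points $\bz,\by$, which is where the constants $2^{(5+\lceil\alpha-1\rceil)\lceil\alpha-1\rceil/2}$ and $(2B+1)$ are read off directly. Your modular approach (product rule plus an inductive reciprocal bound driven by $\partial_i(1/\tilde g)=-(\partial_i\tilde g)(1/\tilde g)^2$) is arguably cleaner and more reusable, and it correctly identifies the doubly-exponential mechanism $R_k\lesssim(M/\eta)^{2^k}$ coming from the squaring at each step; the paper's approach, by contrast, makes the combinatorics of the exact constant completely explicit in one shot without induction.

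One small caution: your justification of the $(2B+1)$ factor as bounding $\sup_{\bx,\by\in\phi(U)}\|\bx-\by\|_2^{\alpha-\lceil\alpha-1\rceil}\vee1$ via Assumption~\ref{assum.cM} is not quite right, since that assumption bounds $\|\bx\|_\infty$ on $\cM$, not the diameter of the chart image $\phi(U)\subset\RR^d$. In the paper the $(2B+1)$ appears because terms like $|g_\phi(\bz)-g_\phi(\by)|\le M\|\bz-\by\|$ must be divided by $\|\bz-\by\|^{\alpha-\lceil\alpha-1\rceil}$, leaving a residual positive power of $\|\bz-\by\|$ that is bounded using the ambient diameter; the paper glosses over exactly how $B$ controls this on $\phi(U)$, so you are not losing anything relative to the paper, but be aware that making this step precise requires an assumption on the atlas (e.g., charts with bounded distortion).
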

\begin{proof}[Proof of Lemma \ref{lem.holder.frac}.]
To prove Lemma \ref{lem.holder.frac}, it is sufficiently to show $\|f/g\|_{\cH^{\alpha}(U)}<\infty$ for any chart $(U,\phi)$ of $\cM$. For simplicity, denote
$$f_{\phi}(\bz)\coloneqq f\circ \phi^{-1}(\bz),\ g_{\phi}(\bz)\coloneqq g\circ \phi^{-1}(\bz)$$
for any $\bz\in\phi(U)$. 

We first consider $0<\alpha\leq1$. In this case,
\begin{align}
  &\|f/g\|_{\cH^{\alpha}(U)}=\sup_{\bz\neq \by\in \phi(U)} \frac{\left|\left(f_{\phi}(\bz)/g_{\phi}(\bz)\right)- \left(f_{\phi}(\by)/g_{\phi}(\by)\right)\right|}{\|\bz-\by\|_2^{\alpha}}\nonumber\\
\leq& \frac{|f_{\phi}(\bz)g_{\phi}(\by)-f_{\phi}(\bz)g_{\phi}(\bz) +f_{\phi}(\bz)g_{\phi}(\bz) -f_{\phi}(\by)g_{\phi}(\bz)|}{g_{\phi}(\by)g_{\phi}(\bz)\|\bz-\by\|_2^{\alpha}}\nonumber\\
\leq& \frac{1}{\eta^2}\left( M\frac{|g_{\phi}(\by)-g_{\phi}(\bz)|}{\|\bz-\by\|_2^{\alpha}} +M\frac{|f_{\phi}(\bz)-f_{\phi}(\by)|}{\|\bz-\by\|_2^{\alpha}}\right)\leq 2M^2/\eta^2<\infty
\end{align}
which implies $f/g\in\cH^{\alpha}(\cM)$.

We next consider the case $\alpha>1$. We first show $|\partial^{\bs} (f_{\phi}(\bz)/g_{\phi}(\bz))|<\infty$ for $|\bs|\leq \lceil \alpha-1\rceil$. When $|\bs|=1$, we have
\begin{align*}
  \left|\partial^{\bs}\left(\frac{f_{\phi}(\bz)}{g_{\phi}(\bz)}\right)\right|= \left|\frac{\partial^{\bs} f_{\phi}(\bz)g_{\phi}(\bz) -f_{\phi}(\bz)\partial^{\bs}g_{\phi}(\bz)}{ g_{\phi}^2(\bz)}\right|\leq 2M^2/\eta^2.
\end{align*}
For any $|\bs|\leq \lceil \alpha-1\rceil$,
following this process, one can show
\begin{align}
  \left|\partial^{\bs}\left(\frac{f_{\phi}(\bz)}{g_{\phi}(\bz)}\right)\right| =\frac{\sum_{i=1}^{2^{\frac{1+|\bs|}{2}|\bs|}}G_i}{g_{\phi}^{2^{|\bs|}}}
  \leq 2^{\frac{1+|\bs|}{2}|\bs|}(M/\eta)^{2^{|\bs|}}<\infty,
  \label{eq.holder.frac.1}
\end{align}
where each $G_i$ is the product of $2^{|\bs|}$ terms from $\{\partial^{\bar{\bs}}f_{\phi}, \partial^{\bar{\bs}}f_{\phi}| |\bar{\bs}|\leq |\bs|\}$.

On the other hand, note that for any $\bs$ with $|\bs|=1$, we have
\begin{align*}
&\left|\partial^{\bs}\left(\frac{f_{\phi}(\bz)}{g_{\phi}(\bz)}\right)- \partial^{\bs}\left(\frac{f_{\phi}(\by)}{g_{\phi}(\by)}\right)\right|\\
=& \left|\frac{\partial^{\bs}f_{\phi}(\bz)g_{\phi}(\bz) -f_{\phi}(\bz)\partial^{\bs}g_{\phi}(\bz)}{g^2_{\phi}(\bz)}- \frac{\partial^{\bs}f_{\phi}(\by)g_{\phi}(\by) -f_{\phi}(\by)\partial^{\bs}g_{\phi}(\by)}{g^2_{\phi}(\by)}\right|\\
=&\left|\frac{g^2_{\phi}(\by)\partial^{\bs}f_{\phi}(\bz)g_{\phi}(\bz) -g^2_{\phi}(\bz)\partial^{\bs}f_{\phi}(\by)g_{\phi}(\by) -\left(g^2_{\phi}(\by)f_{\phi}(\bz)\partial^{\bs}g_{\phi}(\bz) -g^2_{\phi}(\bz)f(\by)\partial^{\bs}g_{\phi}(\by)\right)}{g^2_{\phi}(\bz)g^2_{\phi}(\by)}\right|\\
\leq& \frac{1}{\eta^4}\Big|g_{\phi}(\by)g_{\phi}(\bz)\left[\partial^{\bs}f_{\phi}(\bz)g_{\phi}(\by) -\partial^{\bs}f_{\phi}(\bz)g_{\phi}(\bz) + \partial^{\bs}f_{\phi}(\bz)g_{\phi}(\bz)-\partial^{\bs}f_{\phi}(\by)g_{\phi}(\bz)\right]\\
&+g_{\phi}^2(\by)f_{\phi}(\bz)\partial^{\bs}g_{\phi}(\bz)- g_{\phi}^2(\by)f_{\phi}(\bz)\partial^{\bs}g_{\phi}(\by) +g_{\phi}^2(\by)f_{\phi}(\bz)\partial^{\bs}g_{\phi}(\by) -g_{\phi}^2(\by)f_{\phi}(\by)\partial^{\bs}g_{\phi}(\by) \\ &+g_{\phi}^2(\by)f_{\phi}(\by)\partial^{\bs}g_{\phi}(\by) -g_{\phi}(\by)g_{\phi}(\bz)f_{\phi}(\by)\partial^{\bs}g_{\phi}(\by) +g_{\phi}(\by)g_{\phi}(\bz)f_{\phi}(\by)\partial^{\bs}g_{\phi}(\by)- g^2_{\phi}(\bz)f_{\phi}(\by)\partial^{\bs}g_{\phi}(\by)\Big|\\
\leq & \frac{ M^3}{\eta^4}\left[ 3|g_{\phi}(\bz)-g_{\phi}(\by)| +|f_{\phi}(\bz)-f_{\phi}(\by)| +|\partial^{\bs}g_{\phi}(\bz) -\partial^{\bs}g_{\phi}(\by)| +|\partial^{\bs}f_{\phi}(\bz) -\partial^{\bs}f_{\phi}(\by)|\right]\\
\leq & \frac{M^3}{\eta^4}\left[4M\|\bz-\by\|+ |\partial^{\bs}g_{\phi}(\bz) -\partial^{\bs}g_{\phi}(\by)| +|\partial^{\bs}f_{\phi}(\bz) -\partial^{\bs}f_{\phi}(\by)|\right].
\end{align*}
Analogously, for any $|\bs|\leq \lceil \alpha-1\rceil$, one can show
\begin{align*}
  &\left|\partial^{\bs}\left(\frac{f_{\phi}(\bz)}{g_{\phi}(\bz)}\right)- \partial^{\bs}\left(\frac{f_{\phi}(\by)}{g_{\phi}(\by)}\right)\right|\\
\leq& (M/\eta)^{2^{|\bs|+1}-1}\left(C_1M\|\bz-\by\|+C_2|\partial^{\bs}g_{\phi}(\bz) -\partial^{\bs}g_{\phi}(\by)| +C_3|\partial^{\bs}f_{\phi}(\bz) -\partial^{\bs}f_{\phi}(\by)|\right)
\end{align*}
for some absolute constants $C_1,C_2,C_3$ such that $C_1+C_2+C_3=2^{\frac{5+|\bs|}{2}|\bs|}$. Thus we deduce
\begin{align}
  &\max_{|\bs|\leq \lceil \alpha-1\rceil} \sup_{\bz\neq \by\in \phi(U)} \frac{\left|\partial^{\bs}\left(f_{\phi}(\bz)/g_{\phi}(\bz)\right)- \partial^{\bs}\left(f_{\phi}(\by)/g_{\phi}(\by)\right)\right|}{\|\bz-\by\|_2^{\alpha-\lceil \alpha-1\rceil}}\nonumber\\
\leq& (M/\eta)^{2^{\lceil \alpha-1\rceil+1}-1}(2C_1MB+(C_2+C_3)M)
<2^{\frac{5+\lceil \alpha-1\rceil}{2}\lceil \alpha-1\rceil}(M/\eta)^{2^{\lceil \alpha\rceil}}(2B+1)< \infty.
  \label{eq.holder.frac.2}
\end{align}
Combining (\ref{eq.holder.frac.1}) and (\ref{eq.holder.frac.2}) yields
\begin{align}
\|f/g\|_{\cH^{\alpha}(U)}<2^{\frac{5+\lceil \alpha-1\rceil}{2}\lceil \alpha-1\rceil}(M/\eta)^{2^{\lceil \alpha\rceil}}(2B+1)<\infty
\end{align}
for any chart $(U,\phi)$ of $\cM$ which implies $f/g\in\cH^{\alpha}(\cM)$.

\end{proof}

\begin{lemma}\label{lem.holder.log}
  Assume Assumption \ref{assum.cM}. Let $f\in \cH^{\alpha}(\cM)$ with $ \alpha>1$ and $f(\bx)\geq \eta>0$. Let $M>0$ be a constant such that $\|f\|_{\cH^{\alpha}(\cM)}\leq M$. Then we have $\log f\in \cH^{\alpha}(\cM)$ with $\|\log f\|_{\cH^{\alpha}(\cM)}\leq 2^{\frac{5+\lceil \alpha-2\rceil}{2}\lceil \alpha-2\rceil}(M/\eta)^{2^{\lceil \alpha-1\rceil}}(2B+1)$.
\end{lemma}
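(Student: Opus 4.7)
The plan is to proceed chart by chart exactly as in the proof of Lemma \ref{lem.holder.frac}. Fix an arbitrary chart $(U,\phi)$ of $\cM$ and set $f_\phi=f\circ\phi^{-1}$ and $h_\phi=\log f_\phi$ on $\phi(U)$. It suffices to bound $\|\log f\|_{\cH^\alpha(U)}$ uniformly in the chart by the claimed constant. Since $\alpha>1$, we have $\lceil \alpha-1\rceil \ge 1$, and the Hölder norm definition includes the sup of $h_\phi$ itself; because $f_\phi\ge \eta$ and $f_\phi$ is bounded above on the compact set $\cM$ (via the $\cH^\alpha$ estimate), $h_\phi$ is bounded.

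The key observation is the chain rule identity for a first-order multi-index $\mathbf{s}$ with $|\mathbf{s}|=1$,
\begin{equation*}
\partial^{\mathbf{s}}h_\phi \;=\; \frac{\partial^{\mathbf{s}} f_\phi}{f_\phi}.
\end{equation*}
Since $f\in\cH^\alpha(\cM)$ with $\|f\|_{\cH^\alpha}\le M$, the function $\partial^{\mathbf{s}}f$ belongs to $\cH^{\alpha-1}(\cM)$ with norm bounded by $M$; likewise $f\in\cH^\alpha(\cM)\subset\cH^{\alpha-1}(\cM)$ has $\cH^{\alpha-1}$-norm at most $M$, and satisfies the uniform lower bound $f\ge\eta$. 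I would then apply Lemma \ref{lem.holder.frac} with Hölder index $\alpha-1$ in place of $\alpha$ to the pair $(\partial^{\mathbf{s}}f,\,f)$, concluding that
\begin{equation*}
\partial^{\mathbf{s}}(\log f) \;\in\; \cH^{\alpha-1}(\cM), \qquad \bigl\|\partial^{\mathbf{s}}(\log f)\bigr\|_{\cH^{\alpha-1}(\cM)} \;\le\; 2^{\frac{5+\lceil \alpha-2\rceil}{2}\lceil \alpha-2\rceil}(M/\eta)^{2^{\lceil \alpha-1\rceil}}(2B+1),
\end{equation*}
where the exponents arise by substituting $\alpha\mapsto\alpha-1$ in the conclusion of Lemma \ref{lem.holder.frac} (so $\lceil\alpha-1\rceil\mapsto\lceil\alpha-2\rceil$ and $\lceil\alpha\rceil\mapsto\lceil\alpha-1\rceil$). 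Since a function whose first-order partial derivatives lie in $\cH^{\alpha-1}$ lies in $\cH^\alpha$, and because $h_\phi$ is bounded, we obtain $\log f\in\cH^\alpha(\cM)$ with the stated Hölder norm bound after taking the supremum over charts.

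The only nontrivial step is bookkeeping: verifying that (i) the chart-wise derivative identity is legitimate (immediate, since on $\phi(U)\subset\RR^d$ the chain rule applies to the scalar map $t\mapsto \log t$ composed with $f_\phi$), and (ii) the constants transform as claimed when the index shifts from $\alpha$ to $\alpha-1$ in Lemma \ref{lem.holder.frac}. I do not expect any genuine analytic obstacle here — all the work has already been absorbed into the Hölder-quotient lemma — so the proof should amount to two short lines plus the appeal to Lemma \ref{lem.holder.frac}.
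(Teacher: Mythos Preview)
Your proposal is correct and follows essentially the same approach as the paper: both proofs work chart by chart, use the chain rule identity $\partial^{\mathbf{s}}\log f_\phi = \partial^{\mathbf{s}} f_\phi / f_\phi$ for $|\mathbf{s}|=1$, and then invoke Lemma \ref{lem.holder.frac} at index $\alpha-1$ to obtain the stated constant. The paper's version is terser (it does not spell out the boundedness of $\log f_\phi$ or the passage from $\partial^{\mathbf{s}}(\log f)\in\cH^{\alpha-1}$ to $\log f\in\cH^\alpha$), but the argument is the same.
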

\begin{proof}[Proof of Lemma \ref{lem.holder.log}.]
It is sufficiently to show $\|\log f\|_{\cH^{\alpha}(U)}<\infty$ for any chart $(U,\phi)$ of $\cM$. For simplicity, denote
$f_{\phi}(\bz)\coloneqq f\circ \phi^{-1}(\bz)$
for any $\bz\in\phi(U)$. We further denote $M>0$ such that $\|f\|_{\cH^{\alpha}(U)}\leq M$.

For any $|\bs|=1$,
$
\partial^{\bs} \log f_{\phi}(\bz)=\frac{\partial^{\bs} f_{\phi}(\bz)}{f_{\phi}(\bz)}.
$
Note that $\partial^{\bs} f_{\phi}(\bz)\in \cH^{\alpha-1}(\phi(U))$. According to Lemma \ref{lem.holder.frac},
\begin{align}
  \frac{\partial^{\bs} f_{\phi}}{f_{\phi}}\in \cH^{\alpha-1}(\phi(U))
  \label{eq.holder.log.1}
\end{align}
for any $|\bs|=1$. Combining (\ref{eq.holder.log.1}) and
\begin{align*}
  \max_{|\bs|=1,\bz\in \phi(U)}\left|\frac{\partial^{\bs} f_{\phi}(\bz)}{f_{\phi}(\bz)}\right|\leq M/\eta,
\end{align*}
we have
$\|\log f\|_{\cH^{\alpha}(U)}<\infty$ with $\|\log f\|_{\cH^{\alpha}(U)}\leq 2^{\frac{5+\lceil \alpha-2\rceil}{2}\lceil \alpha-2\rceil}(M/\eta)^{2^{\lceil \alpha-1\rceil}}(2B+1)$ which proves Lemma \ref{lem.holder.log}.
\end{proof}

The following two lemmas are extensively used in the previous proofs.
\begin{lemma}[McDiarmid’s inequality \citep{mcdiarmid1989method}] \label{lem.McDiarmid}
  Let $\bx_1,\dots,\bx_n\in \cX$ be independent random variables and $f:\cX^n\rightarrow \RR$ be a map. If for any $i$ and $\bx_1,\dots,\bx_n,\bx'_i\in \cX$, the following holds
  \begin{align*}
    |f(\bx_1,\dots,\bx_{i-1},\bx_i,\bx_{i+1},\dots,\bx_n)-f(\bx_1,\dots,\bx_{i-1},\bx'_i,\bx_{i+1},\dots,\bx_n)|\leq c_i,
  \end{align*}
  then for any $t>0$,
  \begin{align*}
    \PP(|f(\bx_1,\dots,\bx_n)-\EE[f]\geq t|)\leq \exp\left( \frac{-2t^2}{\sum_{i=1}^n c_i^2}\right).
  \end{align*}
\end{lemma}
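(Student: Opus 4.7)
The plan is to prove McDiarmid's inequality via the classical Doob-martingale / Azuma-Hoeffding route, which is the standard argument for bounded-differences concentration. The statement has two sides: an upper tail $\PP(f-\EE f\ge t)$ and a lower tail $\PP(f-\EE f\le -t)$, and one obtains the two-sided bound in the statement by a union bound (or, depending on how one reads the absolute value signs in the statement, simply the one-sided bound); I would state the one-sided bound and deduce the other by replacing $f$ with $-f$.

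First, I would construct the Doob martingale associated with $f$. Let $\cF_k$ be the $\sigma$-algebra generated by $\bx_1,\dots,\bx_k$, and define
\begin{align*}
M_k = \EE[f(\bx_1,\dots,\bx_n)\mid \cF_k], \qquad k = 0,1,\dots,n,
\end{align*}
so that $M_0 = \EE f$ and $M_n = f(\bx_1,\dots,\bx_n)$. The martingale differences are $D_k = M_k - M_{k-1}$, and the total deviation telescopes as $f - \EE f = \sum_{k=1}^n D_k$. This reduces the problem to bounding a sum of bounded martingale differences.

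Second, the main technical step --- and the only nontrivial part of the argument --- is to show that $|D_k|\le c_k$ almost surely. Using the independence of the $\bx_i$'s, one can write
\begin{align*}
D_k = \EE\big[f(\bx_1,\dots,\bx_n) - f(\bx_1,\dots,\bx_{k-1},\bx_k',\bx_{k+1},\dots,\bx_n)\mid \cF_k\big],
\end{align*}
where $\bx_k'$ is an independent copy of $\bx_k$ whose distribution is integrated out inside the conditional expectation. The bounded-differences hypothesis immediately yields $|D_k|\le c_k$ pointwise, hence almost surely. A slightly sharper route is to exhibit an $\cF_{k-1}$-measurable interval of length $c_k$ that contains $D_k$, which is what the Hoeffding-type lemma actually needs, but the crude pointwise bound already suffices.

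Third, I would apply the Azuma-Hoeffding inequality to the bounded martingale-difference sequence $\{D_k\}$, which gives
\begin{align*}
\PP\Big(\sum_{k=1}^n D_k \ge t\Big) \le \exp\!\left(-\frac{2t^2}{\sum_{k=1}^n c_k^2}\right),
\end{align*}
and symmetrically for the lower tail by applying the same argument to $-f$. Combining yields the stated inequality. The main obstacle is really just the bounded-differences step for the Doob martingale: the rest is standard, and Azuma-Hoeffding itself can be treated as a black box (or proved in one line via the Hoeffding lemma $\EE[e^{\lambda D_k}\mid \cF_{k-1}]\le e^{\lambda^2 c_k^2/8}$ together with the tower property and a Chernoff optimization in $\lambda$). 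Since the lemma is cited with a reference, I would keep the proof sketch brief and defer to \citet{mcdiarmid1989method} for the detailed argument.
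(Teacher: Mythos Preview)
Your proof sketch is correct and follows the standard Doob-martingale plus Azuma--Hoeffding argument that McDiarmid himself uses. However, the paper does not actually prove this lemma: it is stated in the appendix as a cited auxiliary result from \citet{mcdiarmid1989method}, with no accompanying proof. Your closing remark---that one should keep the sketch brief and defer to the reference---is therefore exactly what the paper does, only taken one step further (the paper omits the sketch entirely).
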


\begin{lemma}[Massart’s lemma \citep{massart2000some}] \label{lem.Massart}
  Let $\cX$ be some finite set in $\RR^m$ and $\varepsilon_1,\dots,\varepsilon_m$ be independent Rademacher random variables. Then
  \begin{align*}
    \EE\left[ \sup_{\bx\in\cX} \frac{1}{m}\sum_{i=1}^m \varepsilon_i x_i\right] \leq \sup_{\bx\in\cX}\|\bx\| \frac{\sqrt{2\log |\cX|}}{m}.
  \end{align*}
\end{lemma}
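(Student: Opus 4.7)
The plan is to prove Massart's finite class maximal inequality via the classical Chernoff/moment generating function argument. Let $Z = \sup_{\bx \in \cX} \sum_{i=1}^m \varepsilon_i x_i$, and let $R = \sup_{\bx\in\cX}\|\bx\|$. The target is to show $\EE[Z] \leq R\sqrt{2 \log |\cX|}$, after which dividing by $m$ gives the stated bound.

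First, for any $\lambda>0$, Jensen's inequality applied to the convex map $t \mapsto e^{\lambda t}$ gives $\exp(\lambda \EE[Z]) \leq \EE[\exp(\lambda Z)]$. Second, since the exponential of a supremum over a finite set is at most the sum of the exponentials, I would bound
\begin{align*}
\EE[\exp(\lambda Z)] \;\leq\; \sum_{\bx \in \cX} \EE\!\left[\exp\!\left(\lambda \sum_{i=1}^m \varepsilon_i x_i \right)\right].
\end{align*}
Third, I invoke independence of the Rademacher variables together with the one-variable MGF bound $\EE[\exp(\lambda \varepsilon_i x_i)] \leq \exp(\lambda^2 x_i^2/2)$ (a direct consequence of $(e^{\lambda t}+e^{-\lambda t})/2 \leq e^{\lambda^2 t^2/2}$ obtained by comparing Taylor series), to conclude
\begin{align*}
\EE\!\left[\exp\!\left(\lambda \sum_{i=1}^m \varepsilon_i x_i\right)\right] \;=\; \prod_{i=1}^m \EE[\exp(\lambda \varepsilon_i x_i)] \;\leq\; \exp\!\left(\tfrac{\lambda^2}{2}\|\bx\|^2\right) \;\leq\; \exp\!\left(\tfrac{\lambda^2 R^2}{2}\right).
\end{align*}

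Combining these gives $\exp(\lambda \EE[Z]) \leq |\cX|\exp(\lambda^2 R^2/2)$, which after taking logarithms yields $\EE[Z] \leq \lambda^{-1}\log|\cX| + \lambda R^2/2$ for every $\lambda > 0$. I then optimize in $\lambda$; the minimizer is $\lambda^{*} = \sqrt{2 \log|\cX|}/R$, producing $\EE[Z] \leq R\sqrt{2 \log |\cX|}$. Dividing through by $m$ gives the stated inequality.

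This proof is entirely routine; there is no genuine obstacle, just bookkeeping. The only subtle point is the per-coordinate MGF bound for Rademacher variables, which is a standard sub-Gaussian fact and can be cited from Hoeffding's lemma if desired. The argument does not require any of the manifold or neural network machinery developed earlier in the paper.
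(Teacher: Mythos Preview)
Your proof is correct and is exactly the standard Chernoff/MGF argument for Massart's finite class lemma. The paper itself does not prove this statement; it is simply cited from \citet{massart2000some} as a helper result, so there is no paper proof to compare against.
One minor edge case worth mentioning: when $|\cX|=1$ or $R=0$ the optimizer $\lambda^*$ is degenerate, but in those cases the bound $\EE[Z]\le 0$ follows directly since the single vector gives $\EE[\sum_i \varepsilon_i x_i]=0$.
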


 \end{appendix}
\end{document}